\documentclass[11pt,english]{article}
\usepackage[utf8]{inputenc}
\usepackage{amsmath}
\usepackage{amsthm}
\usepackage{amssymb}
\usepackage{amsfonts}
\usepackage{comment}
\usepackage{mathtools}
\usepackage{algorithm}
\usepackage{graphicx}
\usepackage[font=footnotesize]{caption}
\usepackage{xcolor}
\usepackage{enumitem}
\usepackage{nicefrac}
\DeclareMathOperator*{\argmin}{arg\,min}
\usepackage{bbding}

\usepackage{a4wide}
\usepackage{chngpage}
\usepackage{authblk}

\newtheorem{fact}{Fact}

\usepackage[T1]{fontenc}
\usepackage{babel}
\usepackage{threeparttable}
\usepackage{booktabs}
\usepackage{etoolbox}
\appto\TPTnoteSettings{\footnotesize}
\usepackage[font=footnotesize]{caption}
\usepackage{pifont}

\usepackage{mathtools, nccmath, textcomp}
\usepackage[round]{natbib}

\usepackage[backref=page]{hyperref}
\renewcommand*{\backref}[1]{}
\renewcommand*{\backrefalt}[4]{%
    \ifcase #1 %
    \or        (Cited on page~#2.)%
    \else      (Cited on pages~#2.)%
    \fi}

% If you use BibTeX in apalike style, activate the following line:
\bibliographystyle{apalike}
\usepackage{xcolor} %bbb
\usepackage{amsfonts, amsthm, amssymb} %bbb
\usepackage{mathtools}

\usepackage{subcaption}
\usepackage{makecell}
\usepackage{enumitem}
 %bbb answer ps s
 %bbb followup ps s
\usepackage{nicefrac}

\usepackage[normalem]{ulem}
\usepackage[normalem]{ulem}
\usepackage{algorithm}
\usepackage[noend]{algcompatible}
\newcommand{\AT}[1]{\State \textbf{at} #1 \textbf{do}}

\usepackage{hyperref, tabularx}
\usepackage[capitalise]{cleveref}
\makeatletter
\newcommand{\multiline}[1]{%
  \begin{tabularx}{\dimexpr\linewidth-\ALG@thistlm}[t]{@{}X@{}}
    #1
  \end{tabularx}
}
\makeatother
% \algnewcommand{\Initialize}[1]{%
%   \State \textbf{Initialize:}
%   \Statex \hspace*{\raggedright #1}
% }
% %{\algorithmicindent}\parbox[t]{.8\linewidth}
\algnewcommand{\Initialize}[1]{%
  \State \textbf{Initialize:} #1}
%%%%%%%%%%%%%%%%%%%%%%%%%%%%%%%%
% THEOREMS
%%%%%%%%%%%%%%%%%%%%%%%%%%%%%%%%
\theoremstyle{plain}
\newtheorem{theorem}{Theorem}
\newtheorem*{theorem*}{Theorem}

\newtheorem{lemma}{Lemma}[section]

\newtheorem*{cor*}{Corollary}
\theoremstyle{definition}
\newtheorem{definition}{Definition}
\newtheorem{assump}{Assumption}
\theoremstyle{remark}
\newtheorem{remark}{Remark}

\newcommand{\mbb}{\mathbb}

\newcommand{\mbe}{\mathbb E}

\newcommand{\lp}{\left(}
\newcommand{\rp}{\right)}

\newcommand{\lcb}{\left\{}
\newcommand{\rcb}{\right\}}
\newcommand{\lbr}{\left[}
\newcommand{\rbr}{\right]}
\newcommand{\lnr}{\left\|}
\newcommand{\rnr}{\right\|}

\newcommand\norm[1]{\lnr#1\rnr}

\newcommand{\bx}{{\mathbf x}}
\newcommand{\bxt}{{\mathbf x^{(t)}}}
\newcommand{\bxk}{{\mathbf x^{(k)}}}
\newcommand{\bxtp}{{\mathbf x^{(t+1)}}}

\newcommand{\be}{{\mathbf e}}
\newcommand{\bet}{{\mathbf e^{(t)}}}
\newcommand{\bek}{{\mathbf e^{(k)}}}

\newcommand{\bz}{{\mathbf z}}
\newcommand{\bzt}{{\mathbf z^{(t)}}}

\newcommand{\bPsi}{{\boldsymbol{\Psi}}}

\newcommand{\bPhi}{{\boldsymbol{\Phi}}}
\newcommand{\bPhit}{{\bPhi^{(t)}}}

\newcommand{\by}{{\mathbf y}}

% x-direction estimates for FedNOVA

% y-direction estimates for FedNOVA

\newcommand{\G}{\nabla}

% Parameters used

% Client step-sizes

% Server step-sizes

% \newcommand{}{}

% Heterogeneity

\newcommand{\nn}{\nonumber}

\usepackage{xcolor}

\usepackage{tikz}
\usetikzlibrary{calc,trees,positioning,arrows,chains,shapes.geometric,%
	decorations.pathreplacing,decorations.pathmorphing,shapes,%
	matrix,shapes.symbols}

\tikzstyle{startstop} = [rectangle, draw, rounded corners, align=center, minimum width=3cm, minimum height=1cm,text centered]
\tikzstyle{decision} = [diamond, draw, fill=blue!20, 
text width=4.5em, text badly centered, node distance=3cm, inner sep=0pt]
\tikzstyle{block} = [rectangle, draw, fill=blue!10, align=center, rounded corners, minimum width=3cm, minimum height=1cm]
\tikzstyle{blockcast} = [rectangle, draw, fill=red!10, align=center, rounded corners, minimum width=3cm, minimum height=0.45cm]
\tikzstyle{line} = [draw, -latex']
\tikzstyle{cloud} = [draw, ellipse,fill=red!20, node distance=3cm,
minimum height=2em]

\newcommand{\R}{\mathbb{R}}
\newcommand{\E}{\mathbb{E}}
\newcommand{\N}{\mathbb{N}}

\newcommand{\J}{\mathcal{J}}
\newcommand{\calN}{\mathcal{N}}

\newtheorem{example}{Example}

\makeatletter
\renewcommand\@fnsymbol[1]{}
\makeatother

\usepackage{multirow}
\usepackage{tabularx}

\author[1]{Aleksandar Armacki%\thanks{The work of A. Armacki and S. Kar was supported in part by the Office of Naval Research under Grant No. N00014-21-1-2547.}
}
\author[1]{Pranay Sharma}
\author[1]{Gauri Joshi}
\author[2]{Dragana Bajovi\'{c}}
\author[3]{Du\v{s}an Jakoveti\'{c}}
\author[1]{Soummya Kar}
\affil[1]{Carnegie Mellon University, Pittsburgh, PA, USA\\ \texttt{\{aarmacki,pranaysh,gaurij,soummyak\}@andrew.cmu.edu }}
\affil[2]{Faculty of Technical Sciences, University of Novi Sad, Novi Sad, Serbia\\ \texttt{dbajovic@uns.ac.rs}}
\affil[3]{Faculty of Sciences, University of Novi Sad, Novi Sad, Serbia\\ \texttt{dusan.jakovetic@dmi.uns.ac.rs}}

\title{High-probability Convergence Bounds for Nonlinear Stochastic Gradient Descent Under Heavy-tailed Noise}
\date{}

\begin{document}

\maketitle

\begin{abstract}
    We study high-probability convergence guarantees of learning on streaming data in the presence of heavy-tailed noise. In the proposed scenario, the model is updated in an online fashion, as new information is observed, without storing any additional data. To combat the heavy-tailed noise, we consider a general framework of nonlinear stochastic gradient descent (SGD), providing several strong results. First, for non-convex costs and component-wise nonlinearities, we establish a convergence rate arbitrarily close to $\mathcal{O}\left(t^{-\nicefrac{1}{4}}\right)$, \emph{whose exponent is independent of noise and problem parameters}. Second, for strongly convex costs and component-wise nonlinearities, we establish a rate arbitrarily close to $\mathcal{O}\left(t^{-\nicefrac{1}{2}}\right)$ for the weighted average of iterates, with exponent again independent of noise and problem parameters. Finally, for strongly convex costs and a broader class of nonlinearities, we establish convergence of the last iterate, with a rate $\mathcal{O}\left(t^{-\zeta} \right)$, where $\zeta \in (0,1)$ depends on problem parameters, noise and nonlinearity. As we show analytically and numerically, $\zeta$ can be used to inform the preferred choice of nonlinearity for given problem settings. Compared to state-of-the-art, who only consider clipping, require bounded noise moments of order $\eta \in (1,2]$, and establish convergence rates whose exponents go to zero as $\eta \rightarrow 1$, we provide high-probability guarantees for a much broader class of nonlinearities and symmetric density noise, with convergence rates whose exponents are bounded away from zero, even when the noise has \emph{finite first moment only}. Moreover, in the case of strongly convex functions, we demonstrate analytically and numerically that clipping is not always the optimal nonlinearity, further underlining the value of our general framework.
\end{abstract}

\section{Introduction}

Learning on streaming data is a paradigm in which incoming samples are processed incrementally, while using limited memory and time, e.g.,~\cite{streams1,streams2}. Formally, it can be represented as an optimization problem, with the goal of solving
\begin{equation}\label{eq:problem}
    \argmin_{\bx \in \R^d} \lcb f(\bx) \triangleq \mbe_{\omega} [\ell(\bx; \omega)] \rcb,
\end{equation} where $\bx \in \R^d$ represents model parameters, $\ell: \R^d \times \mathcal{W} \mapsto \R$ is a loss function, while $\omega \in \mathcal{W}$ is a random sample. The function $f: \mbb R^d \mapsto \mbb R$ is commonly known as the \emph{population loss}. Many modern machine learning applications, such as classification and regression, can be modeled using \eqref{eq:problem}. Learning on streaming data can be seen as a special case of \emph{stochastic optimization} (SO), e.g.,~\cite{robbins1951stochastic,nemirovski2009robust,ghadimi2013stochastic}, with some important constraints. While the goal of both is to solve~\eqref{eq:problem}, SO approaches utilize both stochastic and mini-batch estimators, e.g.,~\cite{curtis-large-scale_ml,pmlr-v119-woodworth20a,woodworth_NEURIPS2020}, while streaming algorithms use only stochastic estimators and update the model parameters as each new sample arrives in the stream, without storing any additional information\footnote{Such as the samples themselves, or stochastic gradients.}, e.g.,~\cite{harvey2019tight,nguyen2023improved,jakovetic2023nonlinear}.

Perhaps the most popular method to solve~\eqref{eq:problem} is Stochastic Gradient Descent (SGD) \cite{robbins1951stochastic}, with its popularity stemming from low computation cost and incredible empirical success, e.g.,~\cite{bottou2010large, hardt2016train}. Theoretical convergence guarantees of SGD have been studied extensively, e.g.,~\cite{bach-sgd,rakhlin2012making,ghadimi2012optimal,ghadimi2013stochastic,curtis-large-scale_ml}. The classical convergence results are mostly \textit{in expectation}\footnote{Also commonly referred to as the \emph{mean-squared error}.}, characterizing the average performance across many runs of the algorithm. However, due to significant computational costs of a single run of an algorithm in many modern large-scale machine learning applications, it is often infeasible to perform multiple runs, e.g.,~\cite{harvey2019tight, davis2021low}. As such, the classical in expectation results do not fully capture the convergence behavior of these algorithms. More fine-grained results, like \emph{high-probability convergence}, characterizing the behaviour of an algorithm with respect to a single run, offer better guarantees when multiple runs are infeasible.

Another striking feature of existing works is assuming that the gradient noise is \textit{light-tailed} or has uniformly \textit{bounded variance}, e.g.,~\cite{rakhlin2012making, ghadimi2012optimal, ghadimi2013stochastic}. As we discuss next, this is a major limitation in many modern applications. It has been observed in applications like training attention models that SGD performs worse than adaptive methods, even after extensive hyperparameter tuning, e.g., \cite{simsekli2019tail, zhang2020adaptive}. In \cite{zhang2020adaptive}, it is shown that the gradient noise distribution during training of large attention models resembles a Levy $\alpha$-stable distribution with $\alpha < 2$, which has unbounded variance. The authors show that SGD fails to converge due to presence of large stochastic gradients. The clipped variant of SGD solves the problem and achieves \textit{optimal} convergence rate in expectation for smooth non-convex costs. Subsequently, clipped stochastic methods have been extensively analyzed in recent years to solve stochastic minimization, e.g., \cite{gorbunov2020stochastic,sadiev2023highprobability}. Along with addressing heavy-tailed noise, clipped SGD also helps address non-smoothness of the objective function, e.g.,~\cite{zhang2019gradient}, achieve differential privacy, e.g.,~\cite{NEURIPS2020_9ecff545, zhang2022clip_FL_icml, yang2022normalized_sgd} and robustness to malicious nodes in distributed learning, e.g.,~\cite{shuhua-clipping}. %Another advantage of clipped SGD is the logarithmic dependence on the failure probability $\beta \in (0,1)$\footnote{For high-probability convergence guarantees of type ``with probability at least $1 - \beta$''.}, which can not be achieved by vanilla SGD even with bounded noise variance and smooth, strongly convex functions, e.g., \cite{sadiev2023highprobability}.

Despite its popularity, clipping is not the only nonlinear transformation of SGD employed in practice. Sign and quantized variants of SGD improve communication efficiency in distributed learning, e.g., \cite{alistarh2017qsgd, bernstein2018signsgd, gandikota2021vqsgd}. Sign SGD achieves performance on par with state-of-the-art adaptive methods, e.g.,~\cite{crawshaw2022general_signSGD}, and is robust to faulty/malicious users, e.g.,~\cite{bernstein2018signsgd_iclr}. Normalized SGD is empirically observed to accelerate neural network training, e.g.,~\cite{hazan2015beyond, you2019reducing, cutkosky20normalized_SGD}, facilitate private learning, e.g.,~\cite{das2021DP_normFedAvg, yang2022normalized_sgd}, and is effective in solving distributionally robust optimization, e.g.,~\cite{jin2021non-convex_DRO}. \cite{zhang2020adaptive} observed empirically that component-wise clipping converges faster than joint clipping, exhibiting a better dependence on problem dimension.

\begin{table*}[htp]
\caption{Comparison of streaming SGD methods. Lower-case $t$ indicates a streaming method, upper-case $T$ indicates a preset time horizon is needed, $\beta \in (0,1)$ is the failure probability, while $\widetilde{\mathcal{O}}$ hides logarithmic factors.}
\label{tab:comp}
\begin{adjustwidth}{-1in}{-1in} 
\begin{center}
\begin{threeparttable}
\begin{small}
\begin{sc}
\begin{tabular}{ccccc}
\toprule
\multicolumn{1}{c}{\rule{0pt}{2.5ex}Cost} & \multicolumn{1}{c}{Work} & \multicolumn{1}{c}{Nonlinearity} & \multicolumn{1}{c}{Noise} & \multicolumn{1}{c}{Rate}\\
\midrule
\multirow{4}{*}{Non-convex} & \cite{nguyen2023improved} & Clipping only & $\substack{\text{\scriptsize{bounded moment of}} \\ \text{\scriptsize{order }} \eta \in (1,2]}$ & $\widetilde{\mathcal{O}}\left(t^{\frac{\eta - 2\eta}{3\eta - 2}} \right)$ \\
& \cite{sadiev2023highprobability} & Clipping only & $\substack{\text{\scriptsize{bounded moment of}} \\ \text{\scriptsize{order }} \eta \in (1,2]}$ & $\mathcal{O}\left(T^{\frac{1 - \eta}{\eta}} \right)^{1}$ \\
& This paper & Component-wise & $\substack{\text{\scriptsize{bounded first moment}}\\ \text{\scriptsize{pdf symmetric,}} \\ \text{\scriptsize{positive around zero}}}$ & $\mathcal{O}\left(t^{\delta - 1}\right)^{2}$ \\
\midrule
\multicolumn{1}{c}{\multirow{5}{*}{Strongly convex}} & \cite{pmlr-v151-tsai22a} & Clipping only & $\substack{\text{\scriptsize{bounded growth}} \\ \text{\scriptsize{second moment}}^3}$ & $\mathcal{O}\left(t^{-1} \right)$ \\
\multicolumn{1}{c}{} & \cite{sadiev2023highprobability} & Clipping only & $\substack{\text{\scriptsize{bounded moment of}} \\ \text{\scriptsize{order }} \eta \in (1,2]}$ & $\mathcal{O}\left(T^{\frac{2(1-\eta)}{\eta}} \right)$ \\
\multicolumn{1}{c}{} & $\substack{\text{\normalsize{This paper - weighted}} \\ \text{\normalsize average of iterates}}$ &  Component-wise & $\substack{\text{\scriptsize{bounded first moment}}\\ \text{\scriptsize{pdf symmetric,}} \\ \text{\scriptsize{positive around zero}}}$  & $\mathcal{O}\left(t^{2(\delta - 1)}\right)^4$ \\
\multicolumn{1}{c}{} &  This paper - last iterate  &  Component-wise and joint & $\substack{\text{\scriptsize{bounded first moment}}\\ \text{\scriptsize{pdf symmetric,}} \\ \text{\scriptsize{positive around zero}}}$  & $\mathcal{O}\left(t^{-\zeta}\right)^5$ \\
\bottomrule
\end{tabular}
\end{sc}
\end{small}
\begin{tablenotes}
    \item[1] \scriptsize{The method in~\cite{sadiev2023highprobability} uses a preset time horizon $T$ to tune algorithm parameters, (e.g., step-size). As such, it is not a streaming algorithm per se, however, it can be adapted to the streaming setting with minor tweaks (e.g., time-varying step-size).}
    \item[2] \scriptsize{The parameter $\delta \in (\nicefrac{3}{4},1)$ is user-specified. As such, our rate can be made arbitrarily close to $\mathcal{O}\left(t^{-\nicefrac{1}{4}} \right)$, by setting $\delta = \frac{3}{4}+\epsilon$, for $\epsilon < \frac{1}{4}$ small.}
    \item[3] \scriptsize{The gradient noise $\bz$ at a point $\bx$ satisfies $\E\|\bz\|^2 \leq C + B\|\bx - \bx^\star\|^2$, where $\bx^\star$ is the solution to~\eqref{eq:problem} and $C,B>0$ are constants.}
    \item[4] \scriptsize{The parameter $\delta \in (\nicefrac{3}{4},1)$ is user-specified. As such, our rate can be made arbitrarily close to $\mathcal{O}\left(t^{-\nicefrac{1}{2}} \right)$, by setting $\delta = \frac{3}{4}+\epsilon$, for $\epsilon < \frac{1}{4}$ small.}
    \item[5] \scriptsize{The rate $\zeta \in (0,1)$ depends on the choice of nonlinearity, noise and problem related parameters, see Sections~\ref{sec:main}, \ref{sec:an-num} and Appendix~\ref{app:rate}. We provide examples of noise for which $\zeta > \nicefrac{2(\eta - 1)}{\eta}$, see Examples~\ref{example:1}-\ref{example:4} ahead.}
\end{tablenotes}
\end{threeparttable}
\end{center}
\vskip -0.1in
\end{adjustwidth}
\end{table*}

\paragraph{Literature review.} We now review the literature on high-probability convergence of SGD and its variants. Initial works on high-probability convergence of stochastic gradient methods considered light-tailed noise (see Definition~\ref{def:subgaus}) and include \cite{nemirovski2009robust, lan2012optimal,hazan2014beyond, harvey2019tight} for convex, and \cite{ghadimi2013stochastic,li2020high} for non-convex costs. Subsequent works \cite{gorbunov2020stochastic,gorbunov2021near,parletta2022high} generalized these results to noise with bounded variance. \cite{pmlr-v151-tsai22a} study the behaviour of clipped SGD under the assumption that the noise variance is bounded by iterate distance (see Table~\ref{tab:comp}), while \cite{li2022high,eldowa2023general} considered sub-Weibull noise. Recent works~\cite{liu2023high,eldowa2023general} remove restrictive assumptions, like bounded stochastic gradients and bounded domain. \cite{sadiev2023highprobability} show that even with bounded variance and smooth, strongly-convex functions, vanilla SGD cannot achieve a \textit{logarithmic} dependence on the failure probability $\beta \in (0,1)$\footnote{For high-probability convergence guarantees of type ``with probability at least $1 - \beta$''.}, implying that the complexity of achieving a high-probability bound for SGD is much worse than the complexity of converging in expectation. As such, nonlinear SGD is required to handle tails heavier than sub-Gaussian. Recent works consider a broad class of heavy-tailed noises with bounded moments of order $\eta \in (1,2]$, e.g., \cite{nguyen2023improved, nguyen2023high, sadiev2023highprobability,liu2023breaking}. \cite{nguyen2023improved,nguyen2023high} study high-probability convergence of clipped SGD for convex and non-convex minimization, \cite{sadiev2023highprobability} study clipped SGD for optimization and variational inequality problems, while \cite{liu2023breaking} study accelerated variants of clipped SGD for smooth costs. It is worth mentioning a recent work~\cite{gorbunov2023breaking}, which shows clipped SGD can achieve the optimal $\mathcal{O}\left(T^{-1}\right)$ rate for smooth strongly convex costs and a class of heavy-tailed noises with possibly unbounded first moments. However, they use a median-of-means gradient estimator, which requires storing multiple stochastic gradients prior to performing the update and it is not clear how such an approach can be extended to the streaming setting considered in this paper. 

The works closest to ours are~\cite{nguyen2023improved,sadiev2023highprobability} for non-convex and~\cite{pmlr-v151-tsai22a,sadiev2023highprobability} for strongly convex costs. For non-convex costs, the optimal rate $\widetilde{\mathcal{O}}\left(t^{\frac{2-2\eta}{3\eta-2}} \right)$ is achieved in~\cite{nguyen2023improved}. For the same costs, we consider a broad class of component-wise nonlinearities (e.g., sign, cclip and quantization) in the presence of noise with symmetric density and bounded first moment, achieving the rate $\mathcal{O}\left(t^{-\nicefrac{1}{4}+\epsilon}\right)$, for any $\epsilon > 0$. As such, our rate exponent \emph{is independent of noise or problem related parameters}, which is not the case with~\cite{nguyen2023improved,sadiev2023highprobability}, with our rate dominating that from~\cite{nguyen2023improved} whenever $\eta < \frac{6+8\epsilon}{5+12\epsilon}$.\footnote{This does not contradict the optimality of the rate achieved in~\cite{nguyen2023improved}, as their assumptions slightly differ from ours. Whereas~\cite{nguyen2023improved} require bounded noise moment of order $\eta \in (1,2]$, we require noise with symmetric density, but allow for bounded first moment only. As such, our work shows that additional structure in the noise leads to improved results, while allowing for relaxed moment conditions and heavier tails (see Examples~\ref{example:1}-\ref{example:3}).} Furthermore,~\cite{nguyen2023improved,sadiev2023highprobability} require knowledge of noise moment $\eta$ and other problem parameters to tune the step-size and clipping radius, whereas our analysis \emph{requires no knowledge of problem or noise related parameters}. For strongly convex costs,~\cite{pmlr-v151-tsai22a,sadiev2023highprobability} study the convergence of the last iterate for clipped SGD, with~\cite{sadiev2023highprobability} allowing for a more general class of noise, achieving the rate $\mathcal{O}\left(T^{\nicefrac{2(1-\eta)}{\eta}}\right)$. Compared to them, we consider a more general framework for nonlinear SGD (both component-wise and joint), establishing a rate $\mathcal{O}\left(t^{-\zeta} \right)$, for some $\zeta \in (0,1)$ that depends on the noise, nonlinearity and other problem parameters. We give examples of noise regimes where our rate is better than the one in~\cite{sadiev2023highprobability} (see Examples~\ref{example:1}-\ref{example:4}). Moreover, we demonstrate both analytically and numerically that $\zeta$ is informative for choosing the best nonlinearity for given problem and noise settings and that \emph{clipping is not always the best chocice of nonlinearity} (see Section~\ref{sec:an-num}), further highlighting the importance and usefulness of our general framework. The comparison is summed up in Table~\ref{tab:comp}. Finally, it is worth mentioning~\cite{jakovetic2023nonlinear}, who study the same general framework for nonlinear SGD and strongly convex costs, in expectation and almost sure sense. Our work differs in that we study high-probability convergence and allow for non-convex costs. The latter is achieved by providing a novel characterization of the interplay of the ``denoised'' nonlinear gradient and the true, noiseless gradient for component-wise nonlinearities (see Section~\ref{sec:main}). For strongly convex costs, naively combining the in expectation bound from~\cite{jakovetic2023nonlinear} and Markov inequality results in a sub-optimal $\nicefrac{1}{\beta}$ dependence on the failure probability $\beta$, with our work closing this gap, by showing the optimal $\log(\nicefrac{1}{\beta})$ dependence. 

\paragraph{Contributions.} Our contributions can be summarized as follows.
\begin{itemize}
    \item We provide a unified framework for studying convergence in high probability of nonlinear streaming SGD under heavy-tailed noise. In the proposed framework, the nonlinearity is treated in a black-box manner, subsuming many popular nonlinearities, like sign, normalization, cclip and quantization. \emph{To the best of our knowledge, we provide the first high-probability results under heavy-tailed noise for methods such as sign, quantized and component-wise clipped SGD}. 

    \item For non-convex costs and component-wise nonlinearities, we show a convergence rate of $\mathcal{O}\left(t^{-\nicefrac{1}{4} + \epsilon}\right)$, for any $\epsilon > 0$. The exponent in our rate is independent of noise and problem parameters, which is not the case for state-of-the-art rate in~\cite{nguyen2023improved}, with our rate dominating the state-of-the-art whenever the noise has bounded moments of order $\eta < \frac{6+8\epsilon}{5+12\epsilon}$. Additionally, our analysis requires no knowledge of problem parameters to tune the step-size, whereas the analysis in~\cite{nguyen2023improved} requires knowledge of noise moments and problem parameters to tune the step-size and clipping radius. 

    \item For strongly convex costs and component-wise nonlinearities we show convergence of the weighted average of iterates with the same rate $\mathcal{O}\left(t^{-\nicefrac{1}{2} + \epsilon}\right)$, with our rate dominating the state-of-the-art~\cite{sadiev2023highprobability}, whenever the noise has bounded moments of order $\eta < \frac{4}{3+2\epsilon}$. Next, we show convergence of the last iterate for a broader class of nonlinearities, with rate $\mathcal{O}\left(t^{-\zeta} \right)$, where $\zeta \in (0,1)$ depends on noise, nonlinearity and other problem parameters. As such, the exponent $\zeta$ is shown to be informative in choosing the best nonlinearity for the given problem and noise settings, both analytically and numerically. We provide examples of noise regimes in which our exponent $\zeta$ dominates the one instate-of-the-art~\cite{sadiev2023highprobability}.
    
    \item Compared to state-of-the-art~\cite{nguyen2023improved,sadiev2023highprobability}, who only consider clipping, require bounded noise moments of order $\eta \in (1,2]$ and vanishing rates as $\eta \rightarrow 1$, we consider a much broader class of nonlinearities under noise with symmetric density, while relaxing the moment condition and providing non-vanishing convergence rates even for noise with \emph{finite first moment only}. Moreover, for strongly convex costs, we provide analytical and numerical results that show \emph{clipping is not always the optimal choice of nonlinearity}, further reinforcing the importance of our general framework.
\end{itemize}

\paragraph{Paper organization.} The rest of the paper is organized as follows. Section~\ref{sec:framework} outlines the nonlinear streaming SGD framework. Section~\ref{sec:main} presents the main results of the paper. Section~\ref{sec:an-num} provides analytical and numerical results demonstrating that clipping is not always the optimal choice of nonlinearity. Section~\ref{sec:conclusion} concludes the paper. The Appendix presents some useful facts and results omitted from the main body. The remainder of this section introduces the notation.    

\paragraph{Notation.}
We denote the set of positive integers by $\N$, while $\N_0$ denotes the set of non-negative integers, i.e., $\N_0 \triangleq \N \, \cup \, \{0 \}$. We use $\R$, $\R_+$ and $\R^d$ to denote the sets of real numbers, non-negative real numbers and the $d$-dimensional real vector space, respectively. For $a \in \N$, $[a]$ denotes the set of integers up to and including $a$, i.e., $[a] = \{1,\ldots,a \}$. Regular and bold symbols denote scalars and vectors, respectively, i.e., $x \in \R$ and $\bx \in \R^d$. The Euclidean inner product is denoted by $\langle \cdot,\cdot\rangle$, while $\|\cdot\|$ denotes the induced norm.

\section{Proposed Framework}\label{sec:framework}

To solve~\eqref{eq:problem} in the streaming setting and in the presence of heavy-tailed noise, we use the \textit{nonlinear SGD} framework. The algorithm starts by choosing a deterministic initial model $\bx^{(0)} \in \mbb R^d$ and a nonlinear map $\boldsymbol{\Psi}:\mbb R^d \mapsto \mbb R^d$. In iteration $t = 0,1,\ldots$, the method performs as follows: a new sample $\omega^{(t)}$ is observed and the gradient of the loss $\ell$ at the current model $\bxt$ and sample $\omega^{(t)}$ is computed\footnote{Equivalently, we have access to a first-order oracle that directly streams gradients of $\ell$, instead of samples.}. Then, the model is updated as
\begin{equation}\label{eq:update}
    \bxtp = \bxt - \alpha_t\mathbf{\Psi}\left(\nabla \ell(\bxt;\omega^{(t)})\right),
\end{equation} where $\alpha_t > 0$ is the step-size at iteration $t$. The method is summed up in Algorithm~\ref{alg:nonlin-sgd}. We make the following assumption on the nonlinear map $\bPsi$.

\begin{algorithm}[!tb]
\caption{Nonlinear SGD on Streaming Data}
\label{alg:nonlin-sgd}
\begin{algorithmic}[1]
   \REQUIRE{Choice of nonlinearity $\bPsi: \R^d \mapsto \R^d$, model initialization $\bx^{0} \in \R^{d}$;}
   \AT{time t = 0,1,2,\ldots}:
        \STATE \hspace{1em}Observe a new sample $\omega^{(t)}$ and compute the gradient $\nabla \ell(\bxt;\omega^{(t)})$;  
        \STATE \hspace{1em}Update the model $\bxtp \leftarrow \bxt - \alpha_t\mathbf{\Psi}\left(\nabla \ell(\bxt;\omega^{(t)})\right)$;
\end{algorithmic}
\end{algorithm}

\begin{assump}\label{asmpt:nonlin}
The nonlinear map $\bPsi: \mbb R^d \mapsto \mbb R^d$ is either of the form $\bPsi(\bx) = \bPsi(x_1,\dots,x_d) = \lbr \calN_1(x_1), \dots, \calN_1(x_d) \rbr^\top$ or $\bPsi(\bx) = \bx\calN_2(\|\bx\|)$, where $\calN_1,\: \calN_2: \R \mapsto \R$ satisfy
\begin{enumerate}
    \item $\calN_1,\calN_2$ are continuous almost everywhere (with respect to the Lebesgue measure), with $\calN_1$ piece-wise differentiable, while the mapping $a \mapsto a\calN_2(a)$ is non-decreasing.
    \item $\calN_1$ is monotonically non-decreasing and odd function, while $\calN_2$ is non-increasing.
    \item $\calN_1$ is either discontinuous at zero, or strictly increasing on $(-c_1,c_1)$, for some $c_1 > 0$, with $\calN_2(a) > 0$, for any $a > 0$.
    \item $\calN_1$ and $\bx\calN_2(\|\bx\|)$ are uniformly bounded, i.e., $|\calN_1(x)| \leq C_1$ and $\|\bx\calN_2(\|\bx\|)\|\leq C_2$, for some $C_1,\: C_2 > 0$, and all $x \in \R$, $\bx \in \R^d$.
\end{enumerate}
\end{assump}

Note that the fourth property implies $\|\bPsi(\bx)\| \leq C$, where $C = C_1\sqrt{d}$ or $C = C_2$, depending on the form of nonlinearity. We will use the general bound $\|\bPsi(\bx) \| \leq C$ for ease of presentation, and specialize where appropriate. Assumption~\ref{asmpt:nonlin} is satisfied by a wide class of nonlinearities, including:
\begin{enumerate}[leftmargin=*]
    \item \emph{Sign}: $[\bPsi(\bx)]_i = \text{sign}(x_i), \: i = 1,\ldots,d$,
    \item \emph{Component-wise clip (cclip)}: $[\bPsi(\bx)]_i = x_i$, for $|x_i| \leq m$, and $[\bPsi(\bx)]_i = m\cdot\text{sign}(x_i)$, for $|x_i| > m$, $i = 1,\ldots,d$, for some constant $m>0$.
    \item \emph{Component-wise quantization}: for each $i = 1,\ldots,d$, let $[\bPsi(\bx)]_i = r_j$, for $x_i \in (q_j,q_{j+1}]$, with $j = 0,\ldots,J-1$ and $-\infty = q_0 < q_1 <\ldots < q_J = +\infty$, where $r_j$'s and $q_j$'s are chosen such that each component of $\bPsi$ is an odd function, and we have $\max_{j \in \{0,\ldots,J-1\}}|r_j| < R$, for $R > 0$.
    \item \emph{Normalization}: $\bPsi(\bx) = 
    \frac{\bx}{\norm{\bx}}$, with $\bPsi(\bx) = 0 \text{ if } \bx = \mathbf{0}$.
    \item \emph{Clipping}: $\bPsi(\bx) = \min\left\{1,\frac{M}{\|\bx\|}\right\}\bx$, for some constant $M>0$.
\end{enumerate}

\section{Main Results}\label{sec:main}

In this section we present the main results of the paper. Subsection~\ref{subsec:prelim} presents the preliminaries and assumptions, Subsection~\ref{subsec:theory-nonconv} presents the main theoretical results for general non-convex costs, while Subsection~\ref{subsec:theory-cvx} presents the main theoretical results for strongly convex costs. All the proofs can be found in Appendix~\ref{app:proofs}.

\subsection{Preliminaries}\label{subsec:prelim}

In this section we provide the preliminaries and assumptions used in the analysis. To begin with, we state the assumptions on the behaviour of the population loss $f$ used in the paper.

\begin{assump}\label{asmpt:L-smooth}
The population loss $f$ is bounded from below, has at least one stationary point and Lipschitz continuous gradients, i.e., $\inf_{\bx \in \R^d}f(\bx) > -\infty$, there exists a $\bx \in \R^d$ such that $\nabla f(\bx) = 0$, and for some $L > 0$ and every $\bx,\by \in \R^d$
\begin{equation*}
    \|\nabla f(\bx) - \nabla f(\by) \| \leq L\|\bx - \by\|.
\end{equation*}
\end{assump}

\begin{remark}
    Boundedness from below and Lipschitz continuous gradients are standard for non-convex costs, e.g.,~\cite{ghadimi2013stochastic,madden2020high,liu2023high}. Since the goal in non-convex optimization is to reach a stationary point, it is natural to assume at least one such point exists. 
\end{remark}

\begin{remark}
    It can be readily shown that Lipschitz continuous gradients imply, for any $\bx, \by \in \R^d$
    \begin{equation*}
        f(\by) \leq f(\bx) + \langle\nabla f(\bx),\by - \bx\rangle + \frac{L}{2} \|\bx - \by \|^2,
    \end{equation*} know as \emph{L-smoothness inequality/property}, see, e.g.,~\cite{nesterov-lectures_on_cvxopt,Wright_Recht_2022}.
\end{remark}

\begin{assump}\label{asmpt:cvx}
The population loss $f$ is strongly convex, i.e., for some $\mu > 0$ and every $\bx,\by \in \R^d$
\begin{equation*}
     f(\by) \geq f(\bx) + \langle\nabla f(\bx),\by - \bx\rangle + \frac{\mu}{2} \|\bx - \by \|^2. 
\end{equation*}
\end{assump}

\begin{remark}
     Combined with Assumption~\ref{asmpt:cvx}, it follows that $\mu \leq L$.  
\end{remark}

Denote the infimum of $f$ by $f^\star$, i.e., $f^\star \triangleq \inf_{\bx \in \R^d}f(\bx)$. Denote by $\mathcal{X} \subset \R^d$ the set of stationary points of $f$, i.e., $\mathcal{X} \triangleq \left\{\bx \in \R^d: \: \nabla f(\bx) = 0 \right\}$. Then, by Assumption~\ref{asmpt:L-smooth}, it follows that $\mathcal{X} \neq \emptyset$. Assumption~\ref{asmpt:cvx} implies the existence of a unique global minimizer, i.e., we have $f^\star = f(\bx^\star)$, for some unique $\bx^\star \in \R^d$. Equivalently, we have $\mathcal{X} = \left\{\bx^\star\right\}$. Next, rewrite the update~\eqref{eq:update} as
\begin{align}\label{eq:nonlin-sgd}
    \bxtp = \bxt - \alpha_t \boldsymbol{\Psi}(\G f(\bxt) + \bzt), 
\end{align} where $\bzt \triangleq \nabla \ell(\bxt;\omega^{(t)}) - \G f(\bxt)$ is the stochastic noise at iteration $t$. To simplify the notation, we use the shorthand $\bPsi^{(t)} \triangleq \boldsymbol{\Psi}(\nabla f(\bxt) + \bzt)$. We make the following assumption on the sequence of noise vectors $\{\bzt \}_{t \in \N_0}$.

\begin{assump}\label{asmpt:noise}
Noise vectors $\{\bzt \}_{t \in \N_0}$ are zero mean, independent, identically distributed and integrable, with symmetric probability density function (PDF) $p: \R^d \mapsto \R_+$, strictly positive in a neighborhood of zero, i.e., $p(\bz) > 0$, for all $\|\bz\| \leq B_0$ and some $B_0 > 0$.
\end{assump}

\begin{remark}
    Assumption~\ref{asmpt:noise} relaxes the noise moment condition made in~\cite{nguyen2023improved,sadiev2023highprobability}, at the expense of requiring a symmetric PDF, positive in a neighborhood of zero. PDF symmetry and positivity around zero are mild assumptions, satisfied by many noise distributions, such as Gaussian, as well as a broad class of heavy-tailed zero-mean $\alpha$-stable distributions, \cite{stable-distributions,csimcsekli2019heavy}, or the ones in Examples~\ref{example:1}-\ref{example:3} below. 
\end{remark}

We now give some examples of noise PDFs satisfying Assumption~\ref{asmpt:noise}.

\begin{example}\label{example:1}
    The noise PDF $p(\bz) = \rho(z_1)\times \ldots \times \rho(z_d)$, where $\rho(z) = \frac{\alpha - 1}{2(1 + |z|)^\alpha}$, for some $\alpha > 2$. It can be shown that the PDF only has finite $\eta$-th moments for $\eta < \alpha - 1$.
\end{example}

\begin{example}\label{example:2}
    The noise PDF $p(\bz) = \rho(z_1) \times \ldots \times \rho(z_d)$, where $\rho(z) = \frac{c}{(z^2 + 1)\log^2(|z| + 2)}$, with $c = \int \nicefrac{1}{[(z^2 + 1)\log^2(|z| + 2)]}dz$ being the normalizing constant. It can be shown that $p$ has a finite first moment, but for any $\eta \in (1,2]$, the $\eta$-th moments do not exist.
\end{example}

\begin{example}\label{example:3}
    The PDF $p: \R^d \mapsto \R_+$ with ``radial symmetry'', i.e., $p(\bz) = \rho(\|\bz\|)$, where $\rho: \R \mapsto \R_+$ is itself a PDF. If $\rho$ is the PDF from Example~\ref{example:2}, then $p$ inherits the properties of $\rho$, i.e., it does not have finite $\eta$-th moments, for any $\eta > 1$. 
\end{example}

Note that, while the noise from Example~\ref{example:1} satisfies the moment condition from~\cite{nguyen2023improved,sadiev2023highprobability}, the noise from Example~\ref{example:2} clearly does not. Next, define the function $\bPhi:\mbb R^d \mapsto \mbb R^d$, given by $\bPhi(\bx) \triangleq \mbe_{\bz} [\bPsi (\bx + \bz)] = \int \bPsi(\bx+\bz) p(\bz) d\bz$,\footnote{If $\bPsi$ is a component-wise nonlinearity, then $\bPhi$ is a vector with components $\phi_i(x_i) = \mbe_{z_i}[\calN_1(x_i + z_i)]$, where $\E_{z_i}$ is the marginal expectation with respect to the $i$-th noise component, $i \in [d]$ (see Lemma~\ref{lm:polyak-tsypkin} ahead).} where the expectation is taken with respect to the gradient noise at a random sample, i.e., $\bz \triangleq \nabla \ell(\bx;\omega) - \nabla f(\bx)$. We use the shorthand $\bPhit \triangleq \bPhi (\nabla f(\bxt))$. The vector $\bPhit$ can be seen as the denoised version of $\bPsi^{(t)}$. Using $\bPhit$, we can rewrite the update rule~\eqref{eq:nonlin-sgd} as
\begin{equation}\label{eq:nonlin-sgd2}
    \bxtp = \bxt - \alpha_t\bPhit + \alpha_t\bet,    
\end{equation} where $\bet \triangleq \boldsymbol{\Phi}^{(t)} - \boldsymbol{\Psi}^{(t)}$, represents the \emph{effective noise} term. As we show next, the effective noise is light-tailed, even though the original noise may not be. This allows us to establish exponential concentration inequalities and tight control of the behaviour of MGFs. Prior to that, we define the concept of \emph{sub-Gaussianity}, e.g., \cite{vershynin_2018,jin2019short}.

\begin{definition}\label{def:subgaus}
    A zero-mean random vector $\mathbf{v} \in \R^d$ is said to be sub-Gaussian, if there exists a constant $N > 0$, such that, for any $\bx \in \R^d$
    \begin{equation*}
        \E\left[ \exp\left(\langle\bx,\mathbf{v}\rangle\right)\right] \leq \exp\left(N\|\bx\|^2 \right).
    \end{equation*}
\end{definition}

It can be shown that any bounded random variable is sub-Gaussian, a fact used in the following sections (see Appendix~\ref{app:facts} for a proof). Define $\mathcal{F}_t$ to be the natural filtration, i.e., $\mathcal{F}_t \triangleq \sigma\left(\{\bz^{(0)},\ldots,\bz^{(t-1)}\}\right)$, with $\mathcal{F}_0 \triangleq \sigma(\{\emptyset,\Omega\})$ being the trivial $\sigma$-algebra. Next, we state some properties of the effective noise $\bet$.

\begin{lemma}\label{lm:error_component}
    Let Assumptions~\ref{asmpt:nonlin} and~\ref{asmpt:noise} hold. Then, the effective noise vectors $\{\bet\}_{t \in \mbb N_0}$ satisfy 
    % the following properties:
    \begin{enumerate}
        \item $\E[\bet\vert \: \mathcal{F}_t] = 0$  and  $\|\bet\| \leq 2C$,
        \item The effective noise is sub-Gaussian, i.e., for some $N > 0$ any $\bx \in \R^d$, we have $\E\left[\exp\left(\langle \bx, \bet \rangle \right) \: \vert \: \mathcal{F}_t \right] \leq \exp\left(N\|x\|^2\right)$
    \end{enumerate}
\end{lemma}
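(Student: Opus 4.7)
The plan is to handle the two assertions independently, with the boundedness from the first part feeding directly into the sub-Gaussianity in the second.

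For the conditional mean and norm bound, I would start by noting that, conditioning on $\mathcal{F}_t$, the iterate $\bxt$ (and hence $\nabla f(\bxt)$) is deterministic, while $\bzt$ is independent of $\mathcal{F}_t$ and distributed according to the noise PDF $p$. Therefore
\begin{equation*}
    \E[\bPsi^{(t)} \mid \mathcal{F}_t] \;=\; \E_{\bz}\!\left[\bPsi(\nabla f(\bxt) + \bz)\right] \;=\; \bPhi(\nabla f(\bxt)) \;=\; \bPhit,
\end{equation*}
by the very definition of $\bPhi$. Since $\bPhit$ is $\mathcal{F}_t$-measurable, this immediately yields $\E[\bet \mid \mathcal{F}_t] = 0$. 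For the norm bound, I would invoke the uniform bound $\|\bPsi(\cdot)\|\le C$ supplied by Assumption~\ref{asmpt:nonlin}. Applied directly, it gives $\|\bPsi^{(t)}\| \le C$; applied under the integral defining $\bPhit$, together with Jensen's inequality, it gives $\|\bPhit\| \le \E_\bz\|\bPsi(\nabla f(\bxt)+\bz)\| \le C$. A triangle inequality then produces $\|\bet\| \le \|\bPhit\| + \|\bPsi^{(t)}\| \le 2C$.

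For the sub-Gaussianity, the approach is to reduce to a scalar, conditionally zero-mean, conditionally bounded random variable and apply Hoeffding's lemma. Fix $\bx \in \R^d$ and set $V_t \triangleq \langle \bx, \bet \rangle$. From part 1 we have $\E[V_t \mid \mathcal{F}_t] = \langle \bx, \E[\bet\mid \mathcal{F}_t] \rangle = 0$, and Cauchy--Schwarz gives $|V_t| \le \|\bx\|\,\|\bet\| \le 2C\|\bx\|$ almost surely. Hoeffding's lemma, applied to the conditional distribution of $V_t$ given $\mathcal{F}_t$ (a zero-mean variable supported in an interval of length at most $4C\|\bx\|$), yields
\begin{equation*}
    \E\!\left[\exp(V_t) \,\big|\, \mathcal{F}_t\right] \;\le\; \exp\!\left(\tfrac{(4C\|\bx\|)^2}{8}\right) \;=\; \exp\!\left(2C^2 \|\bx\|^2\right),
\end{equation*}
so the claim follows with $N = 2C^2$.

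There is no real obstacle here: the argument is routine once one observes that $\bPhit$ is precisely the conditional mean of $\bPsi^{(t)}$ and that uniform boundedness of $\bPsi$ transfers to $\bet$. The only mild care needed is to ensure the Hoeffding bound is applied \emph{conditionally} on $\mathcal{F}_t$, which is legitimate because both the zero-mean property and the almost-sure bound on $V_t$ hold conditionally; this is the standard fact, recorded in Appendix~\ref{app:facts}, that a conditionally bounded, conditionally zero-mean random variable is conditionally sub-Gaussian.
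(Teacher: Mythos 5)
Your proof is correct and follows essentially the same path as the paper: part 1 is identical (tower property via $\mathcal{F}_t$-measurability of $\bPhit$ and independence of $\bzt$, then triangle inequality plus Jensen for the $2C$ bound), and part 2 reduces, as the paper does, to the fact that a conditionally bounded, conditionally zero-mean vector is conditionally sub-Gaussian. The only divergence is the technical tool for the last step: you scalar-ize via $V_t = \langle\bx,\bet\rangle$ and invoke Hoeffding's lemma, yielding $N = 2C^2$, whereas the paper invokes its Fact~\ref{fact:subgauss}, which is proved by Rademacher symmetrization and delivers a larger constant (plugging $\sigma = 2C$ there gives $N = 8C^2$ in the normalization used by Lemma~\ref{lm:error_component}). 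Both are standard and correct; your Hoeffding route is slightly sharper and more elementary. One small inaccuracy: you attribute the ``bounded implies sub-Gaussian'' fact in Appendix~\ref{app:facts} to Hoeffding's lemma, but the paper's Fact~\ref{fact:subgauss} is actually the symmetrization argument, not Hoeffding.
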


Note that the bound on the effective noise $\bet$ comes from the nonlinearity, therefore, the constant $N$ depends only on the nonlinearity, not the noise.

\subsection{Non-convex Costs}\label{subsec:theory-nonconv}

In this section we establish the convergence in high-probability for general non-convex functions and component-wise nonlinearities. This is made possible by establishing a novel characterization of the behaviour of $\bPhi(\bx)$ with respect to the original (noiseless) vector $\bx$. To begin with, we state a result from~\cite{polyak-adaptive-estimation}, that provides some basic properties of the mapping $\bPhi$ for component-wise nonlinearities.

\begin{lemma}\label{lm:polyak-tsypkin}
    Let Assumptions~\ref{asmpt:nonlin} and~\ref{asmpt:noise} hold, with the nonlinearity $\bPsi: \R^d \mapsto \R^d$ being component wise, i.e., of the form $\bPsi(\bx) = \begin{bmatrix} \calN_1(x_1),\ldots,\calN_1(x_d)\end{bmatrix}^\top$. Then, the function $\bPhi: \R^d \mapsto \R^d$ is of the form $\bPhi(\bx) = \begin{bmatrix} \phi_1(x_1),\ldots,\phi_d(x_d) \end{bmatrix}^\top$, where $\phi_i(x_i) = \E_{z_i}\left[\calN_1(x_i + z_i)\right]$ is the marginal expectation of the $i$-th noise component, $i \in [d]$, with the following properties:
    \begin{enumerate}
        \item $\phi_i$ is non-decreasing and odd, with $\phi_i(0) = 0$;
        \item $\phi_i$ is differentiable in zero, with $\phi_i^\prime(0) > 0$.
    \end{enumerate}
\end{lemma}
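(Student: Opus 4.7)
The plan is to verify the three structural claims: (i) component-wise form, (ii) monotonicity, oddness, and vanishing at zero, and (iii) differentiability at zero with a strictly positive derivative. Since $\bPsi$ acts component-wise, the $i$-th coordinate of $\bPhi(\bx) = \E_{\bz}[\bPsi(\bx+\bz)]$ is $\E_{\bz}[\calN_1(x_i + z_i)]$, which depends on $\bz$ only through $z_i$. Applying Fubini (justified by the uniform bound $|\calN_1| \leq C_1$ from Assumption~\ref{asmpt:nonlin}), I would integrate out the remaining coordinates to obtain $\phi_i(x_i) = \int \calN_1(x_i+z)\,p_i(z)\,dz$, where $p_i$ is the marginal density of $z_i$. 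I would then note that $p_i$ inherits from $p$ both symmetry (integrating a symmetric joint density over the other coordinates preserves the symmetry of the remaining marginal) and strict positivity in a neighborhood of zero (since $p$ is positive on a ball around $\mathbf{0}$).

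For property (1), non-decreasingness of $\phi_i$ follows immediately since $\calN_1$ is non-decreasing and expectation preserves pointwise inequalities. Oddness follows from the substitution $u = -z$ combined with $p_i(-u) = p_i(u)$ and oddness of $\calN_1$:
\begin{equation*}
\phi_i(-x) = \int \calN_1(-x+z)\,p_i(z)\,dz = -\int \calN_1(x+u)\,p_i(u)\,du = -\phi_i(x).
\end{equation*}
Setting $x = 0$ then yields $\phi_i(0) = 0$.

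The main obstacle is property (2), namely differentiability at zero together with the strict inequality $\phi_i'(0) > 0$. My approach is to first perform the change of variables $u = x + z$, writing $\phi_i(x) = \int \calN_1(u)\,p_i(u-x)\,du$, and then split on the dichotomy in Assumption~\ref{asmpt:nonlin}. If $\calN_1$ is continuous and strictly increasing on $(-c_1, c_1)$, I would appeal to its piece-wise differentiability and use dominated convergence (with bound $|\calN_1| \le C_1$ and integrability of $p_i$) to differentiate under the integral, obtaining $\phi_i'(0) = \int \calN_1'(z)\,p_i(z)\,dz$; strictness follows since $\calN_1'$ is nonnegative and strictly positive on a subset of $(-c_1,c_1)$ of positive Lebesgue measure, while $p_i > 0$ on that set. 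If instead $\calN_1$ has a jump at zero, I would decompose $\calN_1(u) = \widetilde{\calN}_1(u) + J\,\mathrm{sign}(u)$ where $J>0$ is half the jump size and $\widetilde{\calN}_1$ is continuous and non-decreasing; the continuous part contributes a non-negative derivative via the previous argument, while the jump part contributes
\begin{equation*}
    \left.\frac{d}{dx}\int \mathrm{sign}(u)\,p_i(u-x)\,du\right|_{x=0} = \left.\frac{d}{dx}\bigl(1 - 2F_i(-x)\bigr)\right|_{x=0} = 2p_i(0) > 0,
\end{equation*}
where $F_i$ is the CDF associated with $p_i$, and the strict positivity uses Assumption~\ref{asmpt:noise}. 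Summing the two contributions yields $\phi_i'(0) > 0$. The most delicate step is justifying differentiation under the integral in the continuous case when $\calN_1$ is merely piece-wise differentiable, which I would handle by treating the (finitely many) breakpoints separately and using the uniform boundedness of $\calN_1$ to dominate the difference quotients.
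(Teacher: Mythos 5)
The paper does not actually prove this lemma; immediately before stating it the authors write that they ``state a result from~\cite{polyak-adaptive-estimation},'' so the proof lives in the Polyak--Tsypkin reference and there is no in-paper argument to compare against. Your attempt to reconstruct a self-contained proof is therefore a fair exercise, and most of its structure is sound: the Fubini reduction to marginals, monotonicity, oddness via the symmetry of $p_i$, $\phi_i(0)=0$, and the computation of the jump contribution through $\int\operatorname{sign}(u)\,p_i(u-x)\,du = 1-2F_i(-x)$ are all the right ingredients.

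There is, however, a genuine gap in the ``continuous'' branch. You invoke dominated convergence to differentiate under the integral ``with bound $|\calN_1|\le C_1$,'' but that bounds the integrand, not the difference quotients: $|\calN_1(h+z)-\calN_1(z)|/|h|$ is only controlled by $2C_1/|h|$, which blows up as $h\to 0$. DCT for differentiation under the integral sign needs an integrable bound on the difference quotients (equivalently on $\calN_1'$), which Assumption~\ref{asmpt:nonlin} does not provide — piece-wise differentiability does not rule out unbounded slope. A clean way to repair this, which also merges your jump/continuous dichotomy into one argument, is to integrate by parts to get $\phi_i(x)=\text{const}-\int F_i(u-x)\,d\calN_1(u)$, so that $\phi_i'(0)=\int p_i(u)\,d\calN_1(u)$; positivity then follows because $p_i>0$ near zero and $d\calN_1$ assigns positive mass to every neighborhood of zero (a point mass there, or strict increase). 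Two smaller slips worth fixing: you change variables to the form $\int \calN_1(u)\,p_i(u-x)\,du$ but then state $\phi_i'(0)=\int\calN_1'(z)\,p_i(z)\,dz$, which comes from the original, un-shifted form; and the decomposition $\calN_1=\widetilde\calN_1+J\operatorname{sign}$ need not leave $\widetilde\calN_1$ globally continuous (quantization has several jumps), only continuous at zero, so ``via the previous argument'' does not directly apply to $\widetilde\calN_1$ — you only need its contribution to be non-negative, which follows from monotonicity. Finally, identifying $\lim_{h\to 0}(F_i(h)-F_i(0))/h$ with $p_i(0)$ implicitly assumes regularity of the density at zero beyond the stated positivity; this is a standing smoothness assumption in the cited reference that the present paper glosses over.
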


Define $\phi^\prime(0) \triangleq \min_{i \in [d]}\phi_i^\prime(0)$. We then have the following result on the interplay of the ``denoised'' nonlinearity $\bPhi(\bx)$ and $\bx$. 

\begin{lemma}\label{lm:huber}
    Let Assumptions~\ref{asmpt:nonlin} and~\ref{asmpt:noise} hold, with the nonlinearity $\bPsi: \R^d \mapsto \R^d$ being component wise, i.e., of the form $\bPsi(\bx) = \begin{bmatrix} \calN_1(x_1),\ldots,\calN_1(x_d)\end{bmatrix}^\top$. Then, for any $\bx \in \R^d$, we have
    \begin{equation*}
        \langle \bPhi(\bx), \bx \rangle \geq \frac{\phi^\prime(0)}{2}\min\{\nicefrac{\xi\|\bx\|}{ \sqrt{d}},\nicefrac{\|\bx\|^2}{d}\}, 
    \end{equation*} where $\xi > 0$ is a global constant depending only on the choice of nonlinearity and noise.
\end{lemma}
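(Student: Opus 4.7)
The plan is to prove the lemma in three steps: reduce the inner product to a coordinate-wise sum, establish a Huber-like lower bound on each coordinate, and finally combine via a case analysis on $\|\bx\|_\infty$. Since $\bPsi$ is component-wise, Lemma~\ref{lm:polyak-tsypkin} gives $\bPhi(\bx) = [\phi_1(x_1),\ldots,\phi_d(x_d)]^\top$, so $\langle \bPhi(\bx), \bx\rangle = \sum_{i=1}^d \phi_i(x_i)\, x_i$, and each summand is non-negative by monotonicity and oddness of $\phi_i$.

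Next, I would show that for every $i \in [d]$ and every $x \in \R$,
$$\phi_i(x)\, x \;\geq\; \frac{\phi'(0)}{2}\,\min\{\xi |x|,\, x^2\}$$
for a suitable $\xi > 0$ common to all coordinates. Differentiability of $\phi_i$ at zero with $\phi_i'(0) > 0$ (Lemma~\ref{lm:polyak-tsypkin}) yields some $a_i > 0$ with $\phi_i(x)\,\mathrm{sign}(x) \geq \phi_i'(0)|x|/2 \geq \phi'(0)|x|/2$ for $|x| \leq a_i$, which gives the quadratic bound $\phi_i(x)\, x \geq (\phi'(0)/2)\, x^2$ on $[-a_i,a_i]$. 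For $|x| > a_i$, monotonicity and oddness give $\phi_i(x)\,\mathrm{sign}(x) \geq \phi_i(a_i) > 0$, hence the linear bound $\phi_i(x)\, x \geq \phi_i(a_i)|x|$. Setting $a = \min_i a_i$, $m = \min_i \phi_i(a)$, and $\xi = \min\{a,\, 2m/\phi'(0)\}$ — all strictly positive constants depending only on the nonlinearity and noise — one checks that the two regimes combine into the displayed min-form bound.

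Summing the per-coordinate inequalities,
$$\langle \bPhi(\bx),\bx\rangle \;\geq\; \frac{\phi'(0)}{2}\sum_{i=1}^d \min\{\xi|x_i|,\, x_i^2\}.$$
I would then split on $\|\bx\|_\infty$, using $\min\{\xi|x_i|, x_i^2\} = |x_i|\min\{\xi,|x_i|\}$. If $\|\bx\|_\infty \leq \xi$, every coordinate contributes $x_i^2$ and the sum equals $\|\bx\|^2 \geq \|\bx\|^2/d$. Otherwise, picking any index $i^\star$ attaining the infinity norm gives a single summand bounded below by $\xi\, |x_{i^\star}| = \xi\|\bx\|_\infty \geq \xi\|\bx\|/\sqrt{d}$, and dropping the remaining non-negative terms still yields at least $\xi\|\bx\|/\sqrt{d}$. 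In either case the sum dominates $\min\{\xi\|\bx\|/\sqrt{d},\,\|\bx\|^2/d\}$, giving the claim.

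The main obstacle is the joint constant-tuning in the per-coordinate step: $\xi$ must simultaneously satisfy $\xi \leq a$ so that on $\xi \leq |x| \leq a$ the quadratic bound $(\phi'(0)/2)x^2$ dominates $(\phi'(0)/2)\xi|x|$, and $\xi \leq 2m/\phi'(0)$ so that on the tail $|x| > a$ the linear bound $m|x|$ dominates $(\phi'(0)/2)\xi|x|$. The choice $\xi = \min\{a,\,2m/\phi'(0)\}$ enforces both constraints and absorbs the transition at $|x| = a$. The final $\ell_\infty$-versus-$\ell_2$ comparison is then a clean concluding argument, using only $\|\bx\|_\infty \geq \|\bx\|/\sqrt{d}$.
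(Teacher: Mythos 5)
Your argument is correct and follows essentially the same route as the paper's proof: both proofs first establish a per-coordinate Huber-type lower bound $\phi_i(x)x \geq \tfrac{\phi'(0)}{2}\min\{\xi|x|,x^2\}$ from the differentiability of $\phi_i$ at zero and monotonicity, then pass to $\|\bx\|_\infty$ and finish with $\|\bx\|_\infty \geq \|\bx\|/\sqrt{d}$. The only cosmetic differences are that you combine coordinates by summing and then case-splitting on $\|\bx\|_\infty$, whereas the paper simply keeps the maximal summand and uses the monotonicity of $a\mapsto\min\{\xi a,a^2\}$; and your threshold $\xi=\min\{a,\,2m/\phi'(0)\}$ is actually redundant, since your own near-zero bound forces $m = \min_i \phi_i(a) \geq \phi'(0)a/2$ and hence $\xi = a$, matching the paper's direct definition of $\xi$.
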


Lemma~\ref{lm:huber} provides a novel characterization of the inner product of the ``denoised'' nonlinearity $\bPhi$ at vector $\bx$ and vector $\bx$ itself. For any $k = 0,\ldots, t-1$, define $\widetilde{\alpha}_k \triangleq \frac{\alpha_k}{\sum_{j = 0}^{t-1}\alpha_j}$, so that $\sum_{k = 0}^{t-1}\widetilde{\alpha}_k = 1$, and for any $\bx \in \R^d$ define $D_{\mathcal{X}}(\bx^{(0)}) \triangleq \inf_{\bx \in \mathcal{X}}\|\bx^{(0)} - \bx\|^2$ to be the set distance function. We are now ready to state our high-probability convergence bound of component-wise nonlinear SGD for non-convex costs.

\begin{theorem}\label{thm:non-conv}
    Let Assumptions~\ref{asmpt:nonlin}, \ref{asmpt:L-smooth} and~\ref{asmpt:noise} hold, with the nonlinearity $\bPsi: \R^d \mapsto \R^d$ being component-wise, i.e., of the form $\bPsi(\bx) = \begin{bmatrix} \calN_1(x_1),\ldots,\calN_1(x_d)\end{bmatrix}^\top$. Let $\{\bxt\}_{t \in \N_0}$ be the sequence generated by~\eqref{eq:nonlin-sgd}, with step-size $\alpha_t = \frac{a}{(t + 2)^\delta}$, for any $\delta \in (\nicefrac{3}{4},1)$ and $a > 0$. Then, for any $t \in \N_0$, and any $\beta \in (0,1)$, with probability at least $1 - \beta$, it holds that
    \begin{equation*}
        \sum_{k = 0}^{t-1}\widetilde{\alpha}_k\min\{\nicefrac{\xi\|\nabla f(\bxk)\|}{ \sqrt{d}},\nicefrac{\|\nabla f(\bxk)\|^2}{d}\} \leq \frac{R(a,\beta,\delta)}{(t+2)^{1-\delta} - 2^{1-\delta}},
    \end{equation*} where $R(a,\beta,\delta) \triangleq \frac{2(1-\delta)}{\phi^\prime(0)}\left[\nicefrac{\left(f(\bx^{(0)}) - f^\star + \log(\nicefrac{1}{\beta})\right)}{a} + \frac{a(\nicefrac{dLC_1^2}{2} + 2NL^2D_{\mathcal{X}}(\bx^{(0)}))}{(2\delta-1)} + \frac{2a^3dNC_1^2L^2}{(1-\delta)^2(4\delta-3)}\right]$.
\end{theorem}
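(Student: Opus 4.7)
My plan is to combine a one-step $L$-smoothness descent on $f(\bxt)$ with a supermartingale / exponential Markov argument to control the effective noise, and finally to divide through by $\sum_k \alpha_k$ so that Lemma~\ref{lm:huber} converts the accumulated inner product into the advertised $\min\{\cdot\}$ quantity. The structure mirrors the usual nonconvex SGD analysis, except that the descent quantity $\langle \nabla f(\bxt), \bPhit\rangle$ is not immediately quadratic in $\|\nabla f(\bxt)\|$, and the concentration must absorb a random coefficient $\|\nabla f(\bxk)\|^2$.

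First, I apply the $L$-smoothness inequality to the iterates~\eqref{eq:nonlin-sgd2} and rearrange to obtain
\[
\alpha_t \langle \nabla f(\bxt), \bPhit\rangle
\le f(\bxt) - f(\bxtp) + \alpha_t \langle \nabla f(\bxt), \bet\rangle + \tfrac{L\alpha_t^2}{2}\|\bPsit\|^2,
\]
where $\bPsit = \bPhit - \bet$. Using $\|\bPsit\|\le C_1\sqrt d$ (from Assumption~\ref{asmpt:nonlin}, component-wise case), the last term is deterministically bounded by $\tfrac{dLC_1^2}{2}\alpha_t^2$. Telescoping from $0$ to $t-1$ and using $f(\bxt)\ge f^\star$ yields
\[
\sum_{k=0}^{t-1}\alpha_k\langle\nabla f(\bxk),\bPhik\rangle
\;\le\; f(\bx^{(0)}) - f^\star + \sum_{k=0}^{t-1}\alpha_k\langle \nabla f(\bxk), \bek\rangle + \tfrac{dLC_1^2}{2}\sum_{k=0}^{t-1}\alpha_k^2.
\]

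The main obstacle is bounding the martingale sum $\sum_k\alpha_k\langle\nabla f(\bxk),\bek\rangle$ in high probability, because Lemma~\ref{lm:error_component} gives only the \emph{conditional} sub-Gaussian estimate
$\E[\exp(\lambda\alpha_k\langle\nabla f(\bxk),\bek\rangle)\mid\mathcal F_k]\le\exp(N\lambda^2\alpha_k^2\|\nabla f(\bxk)\|^2)$,
which has the random prefactor $\|\nabla f(\bxk)\|^2$. The trick is to form the supermartingale
\[
M_t \;\triangleq\; \exp\!\Bigl(\lambda\sum_{k=0}^{t-1}\alpha_k\langle\nabla f(\bxk),\bek\rangle - N\lambda^2\sum_{k=0}^{t-1}\alpha_k^2\|\nabla f(\bxk)\|^2\Bigr),
\]
verify $\E[M_{t+1}\mid\mathcal F_t]\le M_t$ with $M_0=1$, and apply Markov's inequality with $\lambda=1$. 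This gives, with probability at least $1-\beta$,
\[
\sum_{k=0}^{t-1}\alpha_k\langle\nabla f(\bxk),\bek\rangle \;\le\; \log(1/\beta) + N\sum_{k=0}^{t-1}\alpha_k^2\|\nabla f(\bxk)\|^2.
\]
It remains to upper bound the residual $\sum_k\alpha_k^2\|\nabla f(\bxk)\|^2$ deterministically. Using $L$-smoothness and any stationary point $\bx^\star$ realising $D_{\mathcal X}(\bx^{(0)})=\|\bx^{(0)}-\bx^\star\|^2$, I bound $\|\nabla f(\bxk)\|^2 \le 2L^2\|\bxk-\bx^{(0)}\|^2 + 2L^2 D_{\mathcal X}(\bx^{(0)})$, and then use the deterministic bound $\|\bxk-\bx^{(0)}\|\le C_1\sqrt d\sum_{j<k}\alpha_j$ coming from $\|\bPsi^{(j)}\|\le C_1\sqrt d$.

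Plugging in $\alpha_k = a(k+2)^{-\delta}$ and using $\sum_{j<k}\alpha_j \le \tfrac{a(k+2)^{1-\delta}}{1-\delta}$, the two resulting sums $\sum_k(k+2)^{-(4\delta-2)}$ and $\sum_k(k+2)^{-2\delta}$ converge under $\delta\in(3/4,1)$ and are bounded by $1/(4\delta-3)$ and $1/(2\delta-1)$ respectively (this is precisely where the lower bound $\delta>3/4$ is used). Combining everything and invoking Lemma~\ref{lm:huber} on the left-hand side gives, with probability $\ge 1-\beta$,
\[
\sum_{k=0}^{t-1}\alpha_k\,\tfrac{\phi'(0)}{2}\min\{\tfrac{\xi\|\nabla f(\bxk)\|}{\sqrt d},\tfrac{\|\nabla f(\bxk)\|^2}{d}\} \le f(\bx^{(0)})-f^\star+\log(1/\beta)+\tfrac{dLC_1^2 a^2}{2(2\delta-1)}+\tfrac{2a^2NL^2 D_{\mathcal X}(\bx^{(0)})}{2\delta-1}+\tfrac{2a^4dNC_1^2L^2}{(1-\delta)^2(4\delta-3)}.
\]
Finally I divide both sides by $\sum_{j=0}^{t-1}\alpha_j$, which the integral test bounds below by $a\,[(t+2)^{1-\delta}-2^{1-\delta}]/(1-\delta)$; the ratios $\alpha_k/\sum_j\alpha_j$ become $\widetilde\alpha_k$ and the $a$ in the denominator converts the $a^2$ and $a^4$ constants into the $a$ and $a^3$ appearing in $R(a,\beta,\delta)$, yielding exactly the stated bound.
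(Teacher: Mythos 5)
Your proposal is correct and takes essentially the same approach as the paper's own proof: $L$-smoothness telescoping, conditional peeling of the sub-Gaussian MGF of $\sum_k \alpha_k\langle\nabla f(\bxk),\bek\rangle$ via Lemma~\ref{lm:error_component}, a deterministic bound on $\|\nabla f(\bxk)\|^2$ obtained from the bounded step lengths $\alpha_k\|\bPsi^{(k)}\|\le aC_1\sqrt d/(k+2)^\delta$ and a stationary point attaining $D_{\mathcal X}(\bx^{(0)})$, Darboux/integral estimates of the three sums, and Lemma~\ref{lm:huber} to convert the accumulated inner product into the stated $\min\{\cdot\}$ quantity. The only cosmetic difference is that you explicitly form the supermartingale $M_t$ and set $\lambda=1$ at the end, whereas the paper bounds $\E[\exp(Z_t)]$ directly before applying Markov; since the residual $\sum_k\alpha_k^2\|\nabla f(\bxk)\|^2$ is bounded path-wise, these two packagings are equivalent and land on the same constants.
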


Some remarks are now in order. 

\begin{remark}\label{rmk:min}
    The bound from Theorem~\ref{thm:non-conv} can be used to provide a bound on the best iterate, i.e., on the quantity $\min_{0 \leq k \leq t-1}\|\nabla f(\bxk)\|$. In particular, one can show that Theorem~\ref{thm:non-conv} implies
    \begin{equation*}
        \min_{0 \leq k \leq t-1}\|\nabla f(\bxk)\| = \mathcal{O}\left(\sqrt{\frac{dR(a,\beta,\delta)}{(t+2)^{1-\delta} - 2^{1-\delta}}}\right), 
    \end{equation*} i.e., in order to ensure we reach an $\epsilon$-stationary point\footnote{A point $\bx \in \R^d$ such that $\|\nabla f(\bx)\| \leq \epsilon$.} (with high probability), we require at least $\mathcal{O}\left(\epsilon^{-\frac{2}{1 - \delta}} \right)$ iterations. Derivations connecting the bound from Theorem~\ref{thm:non-conv} to the best iterate can be found in Appendix~\ref{app:further}. 
\end{remark}

\begin{remark}\label{rmk:rate+eps}
    The rate achieved by our analysis is of the order $\mathcal{O}\left(t^{\delta - 1}\right)$, where $\delta \in (\nicefrac{3}{4},1)$ is user-specified. As such, the exponent in our convergence rate is \emph{independent of any problem related parameters}, like the choice of nonlinearity, noise, etc. This is not the case with state-of-the-art methods, e.g.,~\cite{nguyen2023improved,sadiev2023highprobability}, whose rate exponent explicitly depends on noise moment $\eta \in (1,2]$ and vanishes as $\eta \rightarrow 1$. By choosing $\delta = \nicefrac{3}{4} + \epsilon$, for some $\epsilon \in (0,\nicefrac{1}{4})$, it follows that our method can achieve the rate $\mathcal{O}\left(t^{-\nicefrac{1}{4} + \epsilon} \right)$, which can be made arbitrarily close to $t^{-\nicefrac{1}{4}}$, for small $\epsilon$. In this case $R(a,\beta,\nicefrac{3}{4}+\epsilon) = \mathcal{O}\left(\epsilon^{-1} \right)$, with the convergence rate being $\mathcal{O}\left(\epsilon^{-1}t^{-\nicefrac{1}{4}+\epsilon}\right)$. We can see an inherent trade-off with respect to $\epsilon$, where smaller value of $\epsilon$ results in a convergence rate closer to $t^{-\nicefrac{1}{4}}$, at the cost of a larger constant factor (i.e., $\epsilon^{-1}$). Compared to the convergence rate $\mathcal{O}\left(t^{\frac{2-2\eta}{3\eta - 2}}\right)$, achieved in~\cite{nguyen2023improved}, our rate is better if $\frac{2\eta - 2}{3\eta - 2} < \frac{1}{4} - \epsilon$, i.e., whenever $\eta < \frac{6+8\epsilon}{5+12\epsilon}$.
\end{remark}

\begin{remark}
    The constant $R(a,\beta,\delta)$ depends on multiple problem related quantities, such as the problem dimension $d$, the optimality gap $f(\bx^{(0)})-f^\star$, the distance of the initial model from the set of stationary points $D_{\mathcal{X}}(\bx^{(0)})$, choice of nonlinearity and noise through $C_1,N$ and $\phi^\prime(0)$. The step-size parameter $a > 0$ offers an inherent trade-off, as choosing $a \approx 0$ alleviates the dependence of $R$ on multiple problem parameters (e.g., initial model from set of stationary points, or the effect of small $\epsilon$ discussed in Remark~\ref{rmk:rate+eps}), making it approximately $R(a,\beta,\delta) \approx \frac{2(1-\delta)}{\phi^\prime(0)}\nicefrac{\left(f(\bx^{(0)}) - f^\star + \log(\nicefrac{1}{\beta})\right)}{a}$, while simultaneously resulting in small step-sizes and larger constant factor (of the order $\nicefrac{1}{a}$), slowing convergence down. 
\end{remark}

\begin{proof}[Proof of Theorem~\ref{thm:non-conv}]
    For ease of notation, let $Z(\|\nabla f(\bxt)\|) \triangleq \min\{\nicefrac{\xi\|\nabla f(\bxt) \|}{\sqrt{d}},\nicefrac{\|\nabla f(\bxt)\|^2}{d}\}$. Applying the $L$-smoothness property of $f$ and the update rule~\eqref{eq:nonlin-sgd2}, to get
    \begin{align*}
        f(\bxtp) &\leq f(\bxt) - \alpha_t\langle \nabla f(\bxt),\bPhit - \bet \rangle + \frac{\alpha_t^2L}{2}\|\bPsi^{(t)}\|^2 \\ &\leq f(\bxt) - \frac{\alpha_t\phi^\prime(0)}{2}Z(\|\nabla f(\bxt)\|) + \alpha_t\langle \nabla f(\bxt),\bet\rangle + \frac{\alpha_t^2dLC_1^2}{2}, 
    \end{align*} where the second inequality follows from Lemma~\ref{lm:huber} and Assumption~\ref{asmpt:nonlin}. Rearranging and summing up the first $t$ terms, we get
    \begin{equation}\label{eq:5}
        \frac{\phi^\prime(0)}{2}\sum_{k = 0}^{t-1}\alpha_kZ(\|\nabla f(\bxk)\|) \leq \underbrace{f(\bx^{(0)}) - f^\star + \frac{dLC_1^2}{2}\sum_{k = 1}^t\alpha_k^2}_{\eqqcolon B_1} + \underbrace{\sum_{k = 1}^t\alpha_k\langle \nabla f(\bxk), \bek \rangle}_{\eqqcolon B_2}. 
    \end{equation} Denote the right-hand side of~\eqref{eq:5} by $Z_t$, i.e., $Z_t \triangleq \frac{\phi^\prime(0)}{2}\sum_{k = 1}^t\alpha_kZ(\|\nabla f(\bxk)\|)$ and note that $B_1$ is independent of the noise, i.e., is a deterministic quantity. We then have
    \begin{equation*}
        \E\left[\exp(Z_t) \right] \stackrel{\eqref{eq:5}}{\leq} \E\left[\exp\left(B_1 + B_2\right) \right] = \exp\left(B_1 \right)\E\left[\exp(B_2)\right]. 
    \end{equation*} We now bound $\E[\exp(B_2)]$. Denote by $\E_t[\cdot] \triangleq \E[ \cdot \: \vert \: \mathcal{F}_t]$ the expectation conditioned on history up to time $t$. We then have
    \begin{align*}
        \E[\exp(B_2)] &= \E\left[\exp\left(\sum_{k = 0}^{t-1}\alpha_k\langle \nabla f(\bxk), \bek \rangle\right)\right] \\ &= \E\left[\exp\left(\sum_{k = 0}^{t-2}\alpha_k\langle \nabla f(\bxk), \bek \rangle \right)\E_{t-1}\left[\exp(\alpha_{t-1}\langle \nabla f(\bx^{(t-1)}),\be^{(t-1)} \rangle) \right] \right] \\ &\leq \E\left[\exp\left(\sum_{k = 0}^{t-2}\alpha_k\langle \nabla f(\bxk), \bek \rangle \right) \exp\left(N\alpha_{t-1}^2\|\nabla f(\bx^{(t-1)})\|^2 \right) \right] \\ &\leq \ldots \leq \E\left[\exp\left(N\sum_{k = 0}^{t-1}\alpha_k^2\|\nabla f(\bxk)\|^2 \right) \right],
    \end{align*} where we repeatedly use Lemma~\ref{lm:error_component}. Next, consider $\|\nabla f(\bxk)\|$, for any $k \geq 0$. Define $A_t \triangleq \sum_{k = 0}^{t-1}\alpha_k$ and use $L$-smoothness, to get
    \begin{align*}
        \|\nabla f(\bxk)\| \leq L\|\bxk - x^\star\| &= L\|\bx^{(k-1)} - \alpha_{k-1}\bPsi^{(k-1)} - \bx^\star\| \leq L\left(\|\bx^{(k-1)} - \bx^\star\| +  \alpha_{k-1}\sqrt{d}C_1\right) \\ &\leq \ldots \leq L\left(\|\bx^0 - \bx^\star\| + \sqrt{d}C_1\sum_{s = 0}^{k-1}\alpha_s\right) = L\left(\|\bx^{(0)} - \bx^\star\| + \sqrt{d}C_1A_k\right),
    \end{align*} where we recall that $\bx^\star \in \mathcal{X} = \left\{\bx \in \R^d: \|\nabla f(\bx) \| = 0 \right\}$ is any stationary point of $f$. Therefore, we have
    \begin{equation*}
        \E\left[\exp(B_2) \right] \leq \exp\left(2NL^2D_{\mathcal{X}}(\bx^{(0)})\sum_{k = 0}^{t-1}\alpha_k^2 + 2dNC_1^2L^2\sum_{k = 1}^t\alpha_k^2A_k^2 \right),
    \end{equation*} where $D_{\mathcal{X}}(\bx^{(0)}) = \inf_{\bx \in \mathcal{X}}\|\bx^{(0)} - \bx\|^2$ is the distance of the initial model estimate from the set of stationary points. Combining everything, we get
    \begin{equation*}
        \E\left[\exp(Z_t) \right] \leq \exp\left(f(\bx^{(0)}) - f^\star + \left(\nicefrac{dLC_1^2}{2} + 2NL^2D_{\mathcal{X}}(\bx^{(0)})\right)\sum_{k = 1}^t\alpha_k^2 + 2dNC_1^2L^2\sum_{k = 1}^t\alpha_k^2A_{k}^2 \right).
    \end{equation*} Define $G_t \triangleq f(\bx^{(0)}) - f^\star + \left(\nicefrac{dLC_1}{2} + 2NL^2D_{\mathcal{X}}(\bx^{(0)})\right)\sum_{k = 1}^t\alpha_k^2 + 2dNC_1^2L^2\sum_{k = 1}^t\alpha_k^2A_k^2$. Using Markov's inequality, it then follows that, for any $\epsilon > 0$
    \begin{equation*}
        \mathbb{P}(Z_t > \epsilon) \leq \exp(-\epsilon)\E[\exp(Z_t)] \leq \exp(-\epsilon + G_t) \iff \mathbb{P}(Z_t > \epsilon + G_t) \leq \exp(-\epsilon).
    \end{equation*} Finally, for any $\beta \in (0,1)$, with probability at least $1 - \beta$, we have
    \begin{equation}\label{eq:6}
        Z_t \leq \log(\nicefrac{1}{\beta}) + G_t \iff A_t^{-1}Z_t \leq A_t^{-1}\left(\log(\nicefrac{1}{\beta}) + G_t \right).
    \end{equation} Consider the step-size schedule $\alpha_t = \frac{a}{(t + 2)^\delta}$, for $\delta \in (\nicefrac{3}{4},1)$. Using upper and lower Darboux sums, we get
    \begin{align*}
        \frac{a}{1-\delta}((t+2)^{1-\delta} - 2^{1-\delta})&\leq A_t \leq \frac{a}{1-\delta}((t+1)^{1-\delta} - 1), \\
        \frac{a^2}{2\delta - 1}(2^{1-2\delta} - (t+2)^{1-2\delta})&\leq \sum_{k = 1}^t\alpha_k^2 \leq \frac{a^2}{2\delta - 1}(1 - (t+1)^{1-2\delta}).
    \end{align*} Plugging in~\eqref{eq:6}, we then get, with probability at least $1 - \beta$
    \begin{align*}
        \frac{\phi^\prime(0)}{2}\sum_{k = 0}^{t-1}&\widetilde{\alpha}_kZ(\|\nabla f(x_k)\|) \leq \frac{(1-\delta)\left(f(\bx^{(0)}) - f^\star + \log(\nicefrac{1}{\beta})\right)}{a((t+2)^{1-\delta}-2^{1-\delta})} \\ &+ \frac{a(1-\delta)(\nicefrac{dLC_1^2}{2} + 2NL^2D_{\mathcal{X}}(\bx^{(0)}))}{(2\delta-1)((t+2)^{1-\delta} - 2^{1-\delta})} + \frac{2a^3dNC_1^2L^2\sum_{k = 0}^{t-1}(k+2)^{2-4\delta}}{(1-\delta)((t+2)^{1-\delta} - 2^{1 - \delta})}.
    \end{align*} Using the upper Darboux sum once more, we have
    \begin{equation*}
        \sum_{k = 0}^{t-1}(k+2)^{2-4\delta} \leq \int_{1}^{t+1}k^{2-4\delta}dk \leq \frac{1}{4\delta-3},
    \end{equation*} therefore, combining everything, we finally get
    \begin{equation*}
        \sum_{k = 0}^{t-1}\widetilde{\alpha}_kZ(\|\nabla f(x_k)\|) \leq \frac{R(a,\beta,\delta)}{(t+2)^{1-\delta} - 2^{1-\delta}},
    \end{equation*} where $R(a,\beta,\delta) = \frac{2(1-\delta)}{\phi^\prime(0)}\left[\nicefrac{\left(f(\bx^{(0)}) - f^\star + \log(\nicefrac{1}{\beta})\right)}{a} + \frac{a(\nicefrac{dLC_1^2}{2} + 2NL^2D_{\mathcal{X}}(\bx^{(0)}))}{(2\delta-1)} + \frac{2a^3dNC_1^2L^2}{(1-\delta)^2(4\delta-3)}\right]$.
\end{proof}

\subsection{Strongly Convex Costs}\label{subsec:theory-cvx}

In this section we establish the convergence in high-probability for strongly convex functions. First, recall the definition of the Huber loss $H_{\lambda}: \R \mapsto [0,\infty)$, parametrized by $\lambda > 0$, e.g.,~\cite{huber_loss}, which is given by 
\begin{equation*}
    H_{\lambda}(x) \triangleq \begin{cases}
        \frac{1}{2}x^2, & |x| \leq \lambda, \\
        \lambda|x| - \frac{\lambda^2}{2}, & |x| > \lambda.
    \end{cases}
\end{equation*}
By the definition of Huber loss, it is not hard to see that it is a convex, non-decreasing function on $[0,\infty)$. Moreover, it follows that
\begin{equation}\label{eq:7}
    \min\{\nicefrac{\xi\|\nabla f(\bxk)\|}{\sqrt{d}},\nicefrac{\|\nabla f(\bxk)\|^2}{d} \} \triangleq Z(\|\nabla f(\bxk)\|) \geq H_{\xi}(\|\nabla f(\bxk)\| / \sqrt{d}) = \frac{1}{d}H_{\xi\sqrt{d}}(\|\nabla f(\bxk)\|),
\end{equation} where the last inequality follows by noticing that $H_{\xi}(x/d) = \frac{1}{d}H_{\xi\sqrt{d}}(x)$. Next, recall that $\mu$-strongly convex costs satisfy the \emph{gradient domination property}, i.e., $\mu\|\bxk - \bx^\star\| \leq \|\nabla f(\bx) \|$, for any $\bx \in \R^d$, e.g.,~\cite{nesterov-lectures_on_cvxopt}. Combining~\eqref{eq:7} with the gradient domination property, we get 
\begin{equation*}
    \sum_{k = 0}^{t-1}\widetilde{\alpha}_k Z(\|\nabla f(\bxk)\|) \geq \frac{1}{d}\sum_{k = 0}^{t-1}\widetilde{\alpha}_kH_{\xi\sqrt{d}}(\mu\|\bxk - \bx^\star\|) \geq \frac{\mu^2}{d}H_{\xi\sqrt{d}/\mu}(\|\widehat{\bx}^{(t)} - \bx^\star\|),
\end{equation*} where $\widehat{\bx}^{(t)} \triangleq \sum_{k = 0}^{t-1}\widetilde{\alpha}_k\bxk$ is the weighted average of the first $t$ iterates, the first inequality follows from~\eqref{eq:7}, the gradient domination property and the fact that $H$ is non-decreasing, while the second property follows from the fact that $H$ is convex and non-decreasing $H$ and applying Jensen's inequality twice. Therefore, we have just shown the following.

\begin{theorem}\label{thm:polyak}
    Let Assumptions~\ref{asmpt:nonlin}-\ref{asmpt:noise} hold, with the nonlinearity $\bPsi: \R^d \mapsto \R^d$ being component-wise, i.e., of the form $\bPsi(\bx) = \begin{bmatrix} \calN_1(x_1),\ldots,\calN_1(x_d)\end{bmatrix}^\top$. Let $\{\bxt\}_{t \in \N_0}$ be the sequence generated by~\eqref{eq:nonlin-sgd}, with step-size $\alpha_t = \frac{a}{(t + 2)^\delta}$, for any $\delta \in (\nicefrac{3}{4},1)$ and $a > 0$. Then, for any $t \in \N_0$, and any $\beta \in (0,1)$, with probability at least $1 - \beta$, it holds that
    \begin{equation*}
         H_{\xi\sqrt{d}/\mu}(\|\widehat{\bx}^{(t)} - \bx^\star\|) \leq \frac{\widetilde{R}(a,\beta,\delta)}{(t+2)^{1-\delta} - 2^{1-\delta}},
    \end{equation*} where $\widetilde{R}(a,\beta,\delta) \triangleq \frac{2d(1-\delta)}{\mu^2\phi^\prime(0)}\left[\nicefrac{\left(f(\bx^{(0)}) - f^\star + \log(\nicefrac{1}{\beta})\right)}{a} + \frac{a(\nicefrac{dLC_1^2}{2} + 2NL^2D_{\mathcal{X}}(\bx^{(0)}))}{(2\delta-1)} + \frac{2a^3dNC_1^2L^2}{(1-\delta)^2(4\delta-3)}\right]$.
\end{theorem}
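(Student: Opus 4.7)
The plan is that Theorem~\ref{thm:polyak} follows from Theorem~\ref{thm:non-conv} by a short deterministic chain that uses strong convexity together with basic properties of the Huber loss. No new probabilistic argument is required: the event of probability at least $1-\beta$ on which the Theorem~\ref{thm:non-conv} bound holds is the same event on which the desired Huber-loss bound will hold. My only task is therefore to lower bound the weighted sum $\sum_{k=0}^{t-1}\widetilde{\alpha}_k Z(\|\nabla f(\bxk)\|)$ by a constant multiple of $H_{\xi\sqrt{d}/\mu}(\|\widehat{\bx}^{(t)} - \bx^\star\|)$. That multiple will turn out to be $\mu^2/d$, which is exactly what distinguishes $\widetilde{R}$ from $R$ (up to the common prefactor $\tfrac{2(1-\delta)}{\phi^\prime(0)}$).

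The lower bound proceeds in three steps. First, inequality~\eqref{eq:7} converts each summand into a Huber loss, giving $Z(\|\nabla f(\bxk)\|) \geq \tfrac{1}{d} H_{\xi\sqrt{d}}(\|\nabla f(\bxk)\|)$. Second, the gradient domination inequality $\|\nabla f(\bxk)\| \geq \mu \|\bxk - \bx^\star\|$, which follows from $\mu$-strong convexity, can be pushed inside $H_{\xi\sqrt{d}}$ using its monotonicity on $[0,\infty)$. Third, convexity of $H_{\xi\sqrt{d}}$ combined with the triangle inequality --- a double Jensen argument --- collapses the weighted sum into a single Huber loss evaluated at $\|\widehat{\bx}^{(t)} - \bx^\star\|$. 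A final rescaling identity, $H_{\lambda}(\mu x) = \mu^2 H_{\lambda/\mu}(x)$, produces the parameter $\xi\sqrt{d}/\mu$ inside the subscript and the factor $\mu^2$ outside, yielding
\begin{equation*}
\sum_{k = 0}^{t-1}\widetilde{\alpha}_k\, Z(\|\nabla f(\bxk)\|) \;\geq\; \frac{\mu^2}{d}\, H_{\xi\sqrt{d}/\mu}\bigl(\|\widehat{\bx}^{(t)} - \bx^\star\|\bigr).
\end{equation*}
Chaining this with the Theorem~\ref{thm:non-conv} bound and multiplying through by $d/\mu^2$ delivers the claim, with $\widetilde{R}(a,\beta,\delta) = (d/\mu^2)\, R(a,\beta,\delta)$.

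The main (minor) obstacle is ordering the steps so every inequality points the correct way: monotonicity of $H_{\xi\sqrt{d}}$ has to be applied before Jensen, since the latter requires a fixed convex function on the outside, and the rescaling identity must be verified in both the quadratic and linear regimes of $H_\lambda$ to justify the transition from parameter $\xi\sqrt{d}$ to $\xi\sqrt{d}/\mu$. Since each ingredient (\eqref{eq:7}, gradient domination, convexity/monotonicity/rescaling of the Huber loss) is elementary and standard, I expect the proof to be essentially a few lines --- indeed, the same chain is already assembled in the paragraphs immediately preceding the theorem statement.
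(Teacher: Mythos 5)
Your proof is correct and follows essentially the same route as the paper: inequality~\eqref{eq:7} to pass from $Z$ to a rescaled Huber loss, gradient domination plus monotonicity of $H$, the exact rescaling identity $H_\lambda(\mu x)=\mu^2 H_{\lambda/\mu}(x)$, a double application of Jensen to collapse the weighted sum to a Huber loss at the averaged iterate, and finally chaining with Theorem~\ref{thm:non-conv}, yielding $\widetilde{R}=(d/\mu^2)R$. The paper indeed assembles exactly this chain in the paragraph immediately before the theorem statement, so there is no gap.
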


The quantity $\widehat{\bx}^{(t)}$ can be seen as a generalized Polyak-Ruppert estimator, e.g.,~\cite{ruppert,polyak,polyak-ruppert}. Similarly to Remark~\ref{rmk:min}, it can be shown that the bound from Theorem~\ref{thm:polyak} implies
\begin{equation*}
    \|\widehat{\bx}^{(t)} - \bx^\star \|^2 = \mathcal{O}\left(t^{-2(1-\delta)} \right),
\end{equation*} giving a rate $\mathcal{O}\left(t^{-\nicefrac{1}{2} + \epsilon} \right)$, for any $\epsilon < \nicefrac{1}{2}$, with an \emph{exponent independent of noise and problem parameters} (see Appendix~\ref{app:further} for details). Our rate improves on the state-of-the-art rate from~\cite{sadiev2023highprobability}, whenever $\eta < \frac{4}{3+2\epsilon}$. Note that Theorem~\ref{thm:polyak} provides a bound on the weighted average of past iterates and component-wise nonlinearities. However, for strongly convex functions it is of interest to characterize the convergence guarantees of the last iterate, e.g.,~\cite{harvey2019tight,pmlr-v151-tsai22a,sadiev2023highprobability}. Moreover, we would like to establish high-probability convergence guarantees for a wider class of nonlinearities, including joint ones, like clipping and normalization. To that end, we first characterize the behaviour of the mapping $\bPhi$ in the general nonlinearity case.  

\begin{lemma}\label{lm:key}
    Let Assumptions~\ref{asmpt:nonlin}-\ref{asmpt:noise} hold and $\{\bxt\}_{t \in \N_0}$ be the sequence generated by~\eqref{eq:nonlin-sgd}, with step-size $\alpha_t = \frac{a}{(t + 2)^\delta}$, for any $\delta \in (0.5,1)$, $a > 0$. Then, for some $\gamma = \gamma(a) > 0$ and any $t \in \N_0$
    \begin{equation*}
        \langle \bPhit, \nabla f(\bxt) \rangle \geq \gamma(t+2)^{\delta - 1}\|\nabla f(\bxt)\|^2.
    \end{equation*}
\end{lemma}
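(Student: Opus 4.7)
The plan is to combine two separate ingredients. First, I would establish the a priori bound $(t+2)^{\delta-1}\|\nabla f(\bxt)\| \leq C_0(a)$ for a constant depending only on the step-size scale $a$, the Lipschitz constant $L$, the nonlinearity bound $C$, and the initial optimality gap. Second, I would derive a Huber-type inequality of the form $\langle \bPhi(\bx), \bx\rangle \geq \min\{c_1\|\bx\|,\, c_2\|\bx\|^2\}$ for positive constants $c_1, c_2$ depending only on the nonlinearity and the noise. Dividing the linear piece of the Huber-type bound by the a priori norm bound is exactly what manufactures the $(t+2)^{\delta-1}$ factor in the statement.

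For the first ingredient, I would fix any stationary point $\bx^\star \in \mathcal{X}$ (which exists by Assumption~\ref{asmpt:L-smooth}) and use $L$-smoothness together with $\nabla f(\bx^\star) = 0$ to obtain $\|\nabla f(\bxt)\| \leq L\|\bxt - \bx^\star\|$. Telescoping the update~\eqref{eq:nonlin-sgd} and applying the triangle inequality with $\|\bPsi(\cdot)\| \leq C$ from Assumption~\ref{asmpt:nonlin} yields $\|\bxt - \bx^\star\| \leq \|\bx^{(0)} - \bx^\star\| + C\sum_{k=0}^{t-1}\alpha_k$. Bounding the partial sum via an upper Darboux sum by $\tfrac{a}{1-\delta}(t+2)^{1-\delta}$ and using $(t+2)^{\delta-1} \leq 2^{\delta-1} < 1$ for $\delta \in (0.5,1)$ then gives $(t+2)^{\delta-1}\|\nabla f(\bxt)\| \leq L\|\bx^{(0)} - \bx^\star\| + LCa/(1-\delta) =: C_0(a)$.

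For the second ingredient, the component-wise case is immediate from Lemma~\ref{lm:huber} with $c_1 = \phi^\prime(0)\xi/(2\sqrt{d})$ and $c_2 = \phi^\prime(0)/(2d)$. For the joint case $\bPsi(\bx) = \bx\calN_2(\|\bx\|)$, I would derive the analog separately via the symmetrization $\bz \leftrightarrow -\bz$: for large $\|\bx\|$, the non-decreasingness of $a \mapsto a\calN_2(a)$ combined with the strict positivity of $\calN_2$ on $(0,\infty)$ produces a linear-in-$\|\bx\|$ lower bound on the symmetrized inner product, and for small $\|\bx\|$, the strict positivity of the noise density on $\{\|\bz\| \leq B_0\}$ (Assumption~\ref{asmpt:noise}) together with $\calN_2 > 0$ on $(0,\infty)$ yields a quadratic-in-$\|\bx\|$ lower bound via a continuity/Jacobian argument at $\bx = 0$.

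With both ingredients in hand, I would apply the Huber-type bound at $\bx = \nabla f(\bxt)$ and split into two cases. When $\|\nabla f(\bxt)\| \leq c_1/c_2$, the bound is $c_2\|\nabla f(\bxt)\|^2 \geq c_2(t+2)^{\delta-1}\|\nabla f(\bxt)\|^2$ since $(t+2)^{\delta-1} \leq 1$. When $\|\nabla f(\bxt)\| > c_1/c_2$, the bound is $c_1\|\nabla f(\bxt)\|$, and multiplying and dividing by $\|\nabla f(\bxt)\|$ and invoking the a priori bound yields $c_1\|\nabla f(\bxt)\| \geq (c_1/C_0(a))(t+2)^{\delta-1}\|\nabla f(\bxt)\|^2$. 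Setting $\gamma(a) := \min\{c_2,\, c_1/C_0(a)\}$ then closes the argument. The main obstacle is the joint-nonlinearity analog of Lemma~\ref{lm:huber}: the component-wise proof decouples across coordinates and uses one-dimensional symmetry/monotonicity of each marginal $\phi_i$, whereas for $\bPsi(\bx) = \bx\calN_2(\|\bx\|)$ the vector $\bPhi(\bx)$ need not be aligned with $\bx$ under the mild (non-radial) symmetry in Assumption~\ref{asmpt:noise}, so the antisymmetric correction in the symmetrized inner product must be carefully controlled via the monotonicity of $a\calN_2(a)$.
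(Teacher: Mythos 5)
Your overall architecture matches the paper's: both rest on (i) the a priori bound on $\|\nabla f(\bxt)\|$, proved exactly as you describe (telescoping the update, bounding the step-size sum by a Darboux/integral comparison — this is Lemma~\ref{lm:gradient-bound} in the paper), and (ii) a lower bound on $\langle\bPhi(\bx),\bx\rangle$ that behaves quadratically in $\|\bx\|$ near the origin and linearly for large $\|\bx\|$. Your case split to combine the two ingredients is also correct, and for the component-wise nonlinearity the argument is complete — the paper packages the same case split inside Lemma~\ref{lemma:1.1}, but the content is identical.

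The genuine gap is in the joint case $\bPsi(\bx)=\bx\calN_2(\|\bx\|)$, which you yourself flag as ``the main obstacle'' but do not actually close. Your sketch proposes to get the quadratic piece ``via a continuity/Jacobian argument at $\bx=0$.'' This is not available under the hypotheses: Assumption~\ref{asmpt:nonlin} only requires $\calN_2$ to be continuous almost everywhere, not differentiable, and the noise is only assumed integrable (first moment) with a symmetric density, so differentiating $\bx\mapsto\E_\bz[(\bx+\bz)\calN_2(\|\bx+\bz\|)]$ under the integral sign — and then verifying positive-definiteness of the resulting Jacobian — is not justified. Even if the Jacobian existed and were positive definite, a local expansion at $\bx=0$ would only give the quadratic lower bound on some small ball, and you would still need a compactness/uniform-positivity argument to extend it up to the crossover radius $c_1/c_2$. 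The paper sidesteps all of this by invoking the conical lower bound of Lemma~\ref{lemma:1.2} (Lemma~6.2 of \cite{jakovetic2023nonlinear}),
\begin{equation*}
\langle\bPhi(\bx),\bx\rangle\;\geq\;2(1-\kappa)\,\|\bx\|^2\int_{\mathcal{J}(\bx)}\calN_2\bigl(\|\bx\|+\|\bz\|\bigr)\,p(\bz)\,d\bz,
\end{equation*}
which replaces $\calN_2(\|\bx+\bz\|)$ by $\calN_2(\|\bx\|+\|\bz\|)$ on a cone $\mathcal{J}(\bx)$ and therefore needs no differentiability. Restricting the integral to $\|\bz\|\le B_0$, using positivity of $p$ there, and bounding $\calN_2(\|\bx\|+\|\bz\|)\ge\min\{\calN_2(1)/(\|\bx\|+\|\bz\|),\calN_2(1)\}$ via the monotonicity of $a\mapsto a\calN_2(a)$ yields exactly the $\|\bx\|^2/(\|\bx\|+B_0)$ behaviour, i.e., your Huber form with $c_1,c_2$ expressed through $\kappa$, $\lambda(\kappa)$, $\calN_2(1)$, and $B_0$. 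Supplying this conical-decomposition lemma (or an equivalent elementary bound not relying on a Jacobian) is what your proposal is missing.
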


We are now ready to state the main result.

\begin{theorem}
\label{theorem:main}
    Suppose Assumptions~\ref{asmpt:nonlin}-\ref{asmpt:noise} hold and $\{\bxt\}_{t \in \N_0}$ is the sequence generated by~\eqref{eq:nonlin-sgd}, with step-size $\alpha_t = \frac{a}{(t + 2)^\delta}$, for any $\delta \in (0.5,1)$ and $a > 0$. Then, for any $t \in \N_0$, and any $\beta \in (0,1)$, with probability at least $1 - \beta$, it holds that
    \begin{equation*}
        \|\bxtp - \bx^\star\|^2 \leq \frac{2\log\left(\nicefrac{e}{\beta}\right)}{\mu B(t+2)^\zeta},
    \end{equation*} 
    where $\zeta = \min\Big\{2\delta - 1, \nicefrac{a\mu\gamma}{2} \Big\}$, $B = \min\left\{\frac{1}{(f(\bx^{(0)}) - f^\star)}, \frac{\mu\gamma}{aL(2N + \nicefrac{C^2}{2})} \right\}$.
\end{theorem}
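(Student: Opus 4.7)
The plan is a Chernoff/MGF supermartingale argument on the suboptimality gap $V_t \triangleq f(\bxt) - f^\star$; converting the resulting tail bound to $\|\bxt - \bx^\star\|^2$ via the strong-convexity inequality $\|\bxt - \bx^\star\|^2 \leq (2/\mu) V_t$ then yields the stated bound.

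First, I would derive a contractive one-step recursion on $V_t$. Applying the $L$-smoothness upper bound to the update rule $\bxtp = \bxt - \alpha_t\bPhit + \alpha_t\bet$ from~\eqref{eq:nonlin-sgd2}, lower-bounding the cross term via Lemma~\ref{lm:key} as $\langle\bPhit,\nabla f(\bxt)\rangle \geq \gamma(t+2)^{\delta-1}\|\nabla f(\bxt)\|^2$, using $\|\bPsi^{(t)}\|\leq C$ from Assumption~\ref{asmpt:nonlin}, and invoking the gradient-domination consequence of strong convexity $\|\nabla f(\bxt)\|^2 \geq 2\mu V_t$ together with $\alpha_t(t+2)^{\delta-1} = a/(t+2)$, one arrives at
\[
V_{t+1} \leq \Bigl(1 - \tfrac{2\mu a\gamma}{t+2}\Bigr) V_t + \alpha_t\langle\nabla f(\bxt),\bet\rangle + \tfrac{\alpha_t^2 LC^2}{2}.
\]

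Next, I would take the conditional MGF of both sides. Combining the sub-Gaussianity of $\bet$ from Lemma~\ref{lm:error_component} with the $L$-smoothness estimate $\|\nabla f(\bxt)\|^2 \leq 2LV_t$ (which follows from $\nabla f(\bx^\star) = 0$) yields, for any $\lambda > 0$,
\[
\E\!\left[\exp(\lambda V_{t+1}) \mid \mathcal{F}_t\right] \leq \exp\!\Bigl(\lambda\bigl[1 - \tfrac{2\mu a\gamma}{t+2} + 2NL\lambda\alpha_t^2\bigr] V_t + \tfrac{\lambda\alpha_t^2 LC^2}{2}\Bigr).
\]
I would now set $\lambda_t \triangleq B(t+2)^\zeta$ and prove by induction that $\E[\exp(\lambda_t V_t)] \leq e$ for all $t \in \N_0$. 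Substituting $\lambda = \lambda_{t+1}$, setting $A_t \triangleq \lambda_{t+1}\bigl(1 - 2\mu a\gamma/(t+2) + 2NL\lambda_{t+1}\alpha_t^2\bigr)/\lambda_t$, and using the convexity inequality $\exp(A_t y) \leq A_t\exp(y) + (1-A_t)$ for $A_t \in [0,1]$ and $y \geq 0$, the additive residual $\lambda_{t+1}\alpha_t^2 LC^2/2$ is absorbed into the slack $1 - A_t$. Invoking $(1+1/(t+2))^\zeta \leq 1 + \zeta/(t+2)$ (concavity of $x \mapsto x^\zeta$ on $[0,\infty)$ for $\zeta \in (0,1]$) and expanding in powers of $1/(t+2)$, the inductive step is controlled provided (i) $\zeta \leq 2\delta - 1$, so the variance-type term $2NL\lambda_{t+1}\alpha_t^2 \propto (t+2)^{\zeta - 2\delta}$ is at most order $1/(t+2)$, and (ii) $\zeta \leq a\mu\gamma/2$, so the contraction slack $2\mu a\gamma - \zeta$ dominates the combined noise-plus-residual contribution; these two bounds yield the stated $\zeta = \min\{2\delta-1, a\mu\gamma/2\}$ and, after tracking constants, force $B \leq \mu\gamma/(aL(2N + C^2/2))$. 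The base case $\E[\exp(\lambda_0 V_0)] = \exp(\lambda_0 V_0) \leq e$ separately requires $B \leq 1/(f(\bx^{(0)}) - f^\star)$, giving the other branch of the minimum in $B$.

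Finally, Markov's inequality gives $\mathbb{P}(V_t > \log(e/\beta)/\lambda_t) \leq e^{-\log(e/\beta)}\,\E[\exp(\lambda_t V_t)] \leq \beta$, and strong convexity converts this to the stated bound on $\|\bxtp - \bx^\star\|^2$. The main obstacle is closing the induction for every $t \geq 0$ (not merely asymptotically) while absorbing the residual $\lambda_{t+1}\alpha_t^2 LC^2/2$, which becomes non-summable at the boundary case $\zeta = 2\delta - 1$; the convexity trick is precisely what re-routes the residual into the contraction slack rather than accumulating it on top, and this coupling is what makes $N$ and $C^2$ both appear inside the bound on $B$.
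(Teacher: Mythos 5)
Your plan is the same high-level MGF--supermartingale argument as the paper, and your one-step recursion on $V_t = f(\bxt) - f^\star$, the conditional-MGF bound via sub-Gaussianity of $\bet$ plus $\|\nabla f(\bxt)\|^2 \leq 2LV_t$, and the final Markov/strong-convexity step all match. Where you diverge is the inductive invariant: the paper defines $Y^{(t+1)} = (t+2)^\zeta F^{(t+1)}$ and proves the \emph{uniform} bound $M_t(\nu) \leq e^{\nu/B}$ for \emph{all} $\nu \in [0,B]$, so that after one step the MGF argument shrinks ($(a_t + 2b_t'^2 LNB)\nu \leq B$) and the inductive hypothesis can be re-applied at that smaller argument, reducing the induction to the clean scalar inequality $a_t + 2b_t'^2 LNB + c_t V B \leq 1$. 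You instead track only the single-point bound $\E[\exp(\lambda_t V_t)] \leq e$ and bridge the mismatch with the convexity step $\exp(A_t y) \leq A_t \exp(y) + (1 - A_t)$.

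That bridge costs you constants in a way you do not acknowledge. After the convexity step and expectation you have $\E[\exp(\lambda_{t+1}V_{t+1})] \leq \bigl[1 + A_t(e-1)\bigr]e^{r_t}$ (with $r_t$ your residual), and closing the induction requires $\log\bigl(1 + A_t(e-1)\bigr) + r_t \leq 1$. Since $\log\bigl(1 + A_t(e-1)\bigr) \geq A_t$ on $[0,1]$ (concavity, equality at the endpoints), this is strictly more demanding than the paper's $a_t' + r_t' \leq 1$; equivalently, the slack you need is $1 - A_t \gtrsim \tfrac{e}{e-1}\,r_t$ rather than $1 - A_t \gtrsim r_t$, so the residual term enters with an extra factor of order $\tfrac{e}{e-1}$. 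The indexing $\lambda_t = B(t+2)^\zeta$ also inflates the base case to $B \leq 1/\bigl(2^\zeta(f(\bx^{(0)}) - f^\star)\bigr)$ rather than $B \leq 1/(f(\bx^{(0)}) - f^\star)$. So the convexity route is sound and yields a bound of the same \emph{form}, but it does not reproduce the specific constants $\zeta = \min\{2\delta-1, a\mu\gamma/2\}$ and $B = \min\{1/(f(\bx^{(0)})-f^\star),\, \mu\gamma/(aL(2N + C^2/2))\}$ in the theorem statement; if you want those exact constants you need the paper's uniform-in-$\nu$ invariant, which lets the MGF be re-inserted at a shrunken argument with no convexity loss. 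A smaller point worth making explicit: both routes implicitly need the contraction coefficient ($a_t$ in the paper, your $A_t$) to lie in $[0,1]$, which should be checked, especially at small $t$ where $2\mu a\gamma/(t+2)$ is largest.
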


We specialize the value of $\gamma = \gamma(a)$ for different nonlinearities and discuss its impact on the rate in Appendix~\ref{app:rate}. The value of $\zeta$ can be explicitly calculated for specific choices of nonlinearities and noise. We now give some examples.

\begin{example}\label{example:4}
    For the noise from Example~\ref{example:1} and sign nonlinearity, it can be shown that $\zeta \approx \min\big\{2\delta - 1,\frac{\mu}{L}\frac{1-\delta}{\sqrt{d}}\frac{\alpha-1}{\alpha} \big\}$, see~\cite{jakovetic2023nonlinear}. For the same noise and cclip, it can be shown that $\zeta \approx \min\left\{2\delta-1,\frac{\mu}{L\sqrt{d}}\frac{(1-\delta)(m-1)(1-(m+1)^{-\alpha})}{m} \right\}$. On the other hand, the rate from~\cite{sadiev2023highprobability} for gradient clipping, adapted to the same noise, is $\nicefrac{2(r - 1)}{r}$, where $r \leq \min\{\alpha - 1,2 \}$. While $\nicefrac{2(r - 1)}{r} > \zeta$ for $\alpha$ above a certain threshold, as $\alpha \rightarrow 2$, we have $\nicefrac{2(r - 1)}{r} \rightarrow 0$, while our rate $\zeta$ stays strictly positive and bounded away from zero for both sign and cclip. Therefore, for $\alpha$ sufficiently close to $2$, we have $\zeta > \frac{2(r-1)}{r}$, i.e., our rate is better than the one in state-of-the-art~\cite{sadiev2023highprobability}.
\end{example}

\begin{comment}
\begin{example}\label{example:5}
    For noise with radially symmetric PDF $p(\bz) = \rho(\|\bz\|)$, and $\rho(z) = \frac{(\alpha - 2)(\alpha-1)}{2\pi(1+z)^\alpha}$, for $z \geq 0$, $\alpha > 3$ and clipping nonlinearity, it can be shown that $\zeta \approx \min\left\{2\delta - 1,(1-\delta)\frac{\mu}{L} \right\}$, see~\cite{jakovetic2023nonlinear}. The PDF $p$ has finite moments of order $\eta$, for $\eta < \alpha - 2$, and infinite moments for $\eta \geq \alpha - 2$.
\end{example}
\end{comment}

\begin{proof}[Proof of Theorem~\ref{theorem:main}]
     Using $L$-smoothness of $f$, the update rule~\eqref{eq:nonlin-sgd2} and Lemma~\ref{lm:key}, we have
    \begin{align*}
        f (\bxtp) &\leq f(\bxt) - \alpha_t\langle \nabla f(\bxt), \bPhi^{(t)} - \bet \rangle + \mfrac{\alpha_t^2L}{2}\|\boldsymbol{\Psi}^{(t)}\|^2 \nn 
        \\ &\leq f(\bxt) - \mfrac{a\gamma\|\nabla f(\bxt)\|^2}{(t + 2)} + \mfrac{a\langle \nabla f(\bxt),\bet \rangle}{(t + 2)^\delta} + \mfrac{a^2LC^2}{2(t + 2)^{2\delta}}.
    \end{align*} Subtracting $f^\star$ from both sides of the inequality, defining $F^{(t)} = f(\bxt) - f^\star$ and using $\mu$-strong convexity of $f$, we get 
    \begin{equation}
        F^{(t+1)} \leq \left(1 - \mfrac{2\mu a\gamma}{t+2} \right)F^{(t)} + \mfrac{a\langle \nabla f(\bxt),\bet \rangle}{(t + 2)^\delta} + \mfrac{a^2LC^2}{2(t + 2)^{2\delta}}. \label{eq_proof:thm:nonL_cw_4}
    \end{equation}
    Let $\zeta = \min\left\{2\delta - 1, \nicefrac{a\gamma\mu}{2} \right\}$. Defining $Y^{(t + 1)} \triangleq (t + 2)^\zeta F^{(t+1)} = (t + 2)^\zeta(f(\bxtp) - f^\star)$, from \eqref{eq_proof:thm:nonL_cw_4} we get
    \begin{equation}
        Y^{(t + 1)} \leq a_tY^{(t)} + b_t\langle \nabla f(\bxt),\bet \rangle  + c_tV, \label{eq_proof:thm:nonL_cw_5}
    \end{equation} 
    where $a_t = \left(1 - \frac{2\mu a\gamma}{t+2} \right)\left(\frac{t + 2}{t + 1}\right)^\zeta$, $b_t = \frac{a}{(t + 2)^{\delta - \zeta}}$, $c_t = \frac{a^2}{(t + 2)^{2\delta - \zeta}}$ and $V = \frac{LC^2}{2}$. Denote the MGF of $Y^{(t)}$ conditioned on $\mathcal{F}_t$ as $M_{t+1\vert t}(\nu) = \E\left[\exp\left(\nu Y^{(t+1)}\right)\vert \mathcal{F}_t\right]$. We then have, for any $\nu \geq 0$
    \begin{align}
        M_{t+1\vert t}(\nu) \nn &\overset{(a)}{\leq} \E\left[\exp\left( \nu (a_tY^{(t)} + b_t\langle \bet,\nabla f(\bxt) \rangle  + c_tV ) \right) \big| \: \mathcal{F}_t \right] \nn \\ 
        &\overset{(b)}{\leq} \exp (\nu a_tY^{(t)} + \nu c_tV ) \mathbb{E}\left[\exp (\nu b_t\langle \bet, \nabla f(\bxt)\rangle ) \big| \: \mathcal{F}_t \right] \nn \\ 
        & \overset{(c)}{\leq} \exp\left(\nu a_tY^{(t)} + \nu c_tV + \nu^2b^2_tN\|\nabla f(\bxt)\|^2\right) \nn \\
        & \overset{(d)}{\leq} \exp\left(\nu a_tY^{(t)} + \nu c_tV + 2\nu^2b_t^{\prime 2}LNY^{(t)}\right), \label{eq_proof:thm:nonL_cw_6}
    \end{align} 
    where $(a)$ follows from \eqref{eq_proof:thm:nonL_cw_5}, $(b)$ follows from the fact that $Y^{(t)}$ is $\mathcal{F}_t$ measurable, $(c)$ follows from Lemma~\ref{lm:error_component}, in $(d)$ we use $\|\nabla f(\bx)\|^2 \leq 2L(f(\bx) - f^\star)$ and define $b^\prime_t = a\frac{(t+1)^\frac{-\zeta}{2}}{(t+2)^{\delta - \zeta}}$, so that $b_t = (t+1)^\frac{\zeta}{2}b^\prime_t$. For the choice $0 \leq \nu \leq B$, for some $B > 0$ (to be specified later), we get
    \begin{equation*}
        M_{t+1\vert t}(\nu) \leq \exp\left(\nu(a_t + 2b_t^{\prime 2}LNB)Y^{(t)}\right)\exp\left(\nu c_tV\right).
    \end{equation*} 
    Taking the full expectation, we get
    \begin{equation}
    \label{eq:induction_step}
        M_{t+1}(\nu) \leq M_t((a_t + 2b_t^{\prime2}LNB)\nu)\exp(\nu c_tV).
    \end{equation} Similarly to the approach in~\cite{harvey2019tight}, we now want to show that $M_t(\nu) \leq e^{\frac{\nu}{B}}$, for any $0 \leq \nu \leq B$ and any $t \geq 0$. We proceed by induction. For $t=0$, we have
    \begin{equation*}
        M_0(\nu) = \exp(\nu Y^{(0)}) = \exp\left(\nu (f(\bx^{(0)}) - f^\star) \right),
    \end{equation*} 
    where we simply used the definition of $Y^{(t)}$ and the fact that it is deterministic for $t = 0$. Choosing $B \leq (f(\bx^{(0)}) - f^\star)^{-1}$ ensures that $M_0(\nu) \leq e^{\frac{\nu}{B}}$. Next, assume that for some $t \geq 1$ it holds that $M_t(\nu) \leq e^{\frac{\nu}{B}}$. We then have
    \begin{align*}
        M_{t+1}(\nu) \leq M_t((a_t + 2b_t^{\prime2}LNB)\nu)\exp(\nu c_tV) \leq \exp\left((a_t + 2b_t^{\prime2}LNB + c_tVB)\mfrac{\nu}{B} \right),
    \end{align*} 
    where we use~\eqref{eq:induction_step} in the first and the induction hypothesis in the second inequality. For our claim to hold, it suffices to show $a_t + 2b_t^{\prime2}LNB + c_tVB \leq 1$. Plugging in the values of $a_t$, $b_t^\prime$ and $c_t$, we have
    \begin{align*}
        a_t + 2b_t^{\prime2}LNB + c_tVB &= \left(1 - \mfrac{2\mu a\gamma}{t+2} \right)\left(\mfrac{t + 2}{t +  1}\right)^{\zeta} + \mfrac{2a^2LNB}{(t + 2)^{2\delta - 2\zeta}(t + 1)^{\zeta}} + \mfrac{a^2VB}{(t + 2)^{2\delta - \zeta}} \\ &\leq \left(\mfrac{t + 2}{t + 1}\right)^{\zeta}\bigg(1 - \mfrac{2\mu a\gamma}{t+2} + \mfrac{2a^2LNB}{(t + 2)^{2\delta - \zeta}} +  \mfrac{a^2VB(t + 1)^\zeta}{(t + 2)^{2\delta}}\bigg) \\ &\leq \left(\mfrac{t + 2}{t + 1}\right)^{\zeta} \left(1 - \mfrac{2\mu a\gamma}{t+2} + \mfrac{2a^2LNB}{(t + 2)^{2\delta - \zeta}} +  \mfrac{a^2VB}{(t + 2)^{2\delta - \zeta}}\right).
    \end{align*} 
    Noticing that $2\delta - \zeta \geq 1$ and setting $B = \min\left\{\frac{1}{(t_0 - 1)^\zeta(f(\bx^{(0)}) - f^\star)}, \frac{\mu\gamma}{aL(2N + \nicefrac{C^2}{2})} \right\}$, gives
    \begin{align*}
        a_t + 2b_t^{\prime 2}LNB + c_tVB \leq \left(\mfrac{t + 2}{t + 1}\right)^{\zeta}\left(1 - \mfrac{\mu a\gamma}{t+2}\right) \leq \exp \lp \mfrac{\zeta}{t + 1} - \mfrac{a\mu\gamma}{t + 2} \rp  \leq 1, 
    \end{align*} 
    where in the second inequality we use $1 + x \leq e^x$, while the third inequality follows from the choice of $\zeta$. Therefore, we have shown that $M_{t}(\nu) \leq e^\frac{\nu}{B}$, for any $t \geq 0$ and any $0 \leq \nu \leq B$. By Markov's inequality, it readily follows that
    \begin{align*}
        \mathbb{P}(f(\bxtp) - f^\star \geq \epsilon) = \mathbb{P}(Y_{t+1} \geq (t + 2)^\zeta\epsilon) \leq e^{-\nu(t + 2)^\zeta\epsilon}M_{t+1}(\nu) \leq e^{1-B(t + 2 )^\zeta\epsilon},
    \end{align*} where in the last inequality we set $\nu = B$. Finally, using strong convexity, we have 
    \begin{align*}
        \mathbb{P}(\|\bxtp - \bx^\star \|^2 \geq \epsilon) \leq \mathbb{P}\left(f(\bxtp) - f^\star \geq \mfrac{\mu}{2}\epsilon\right) \leq ee^{-B(t + 2)^\zeta\mfrac{\mu}{2}\epsilon},
    \end{align*} which implies that, for any $\beta \in (0,1)$, with probability at least $1 - \beta$,
    \begin{equation*}
        \|\bxtp - \bx^\star\|^2 \leq  \mfrac{2\log\left(\nicefrac{e}{\beta}\right)}{\mu B(t+2)^\zeta},
    \end{equation*} completing the proof. 
\end{proof}

\section{Analytical and Numerical Studies}\label{sec:an-num}

In this section we present analytical and numerical studies in the case of strongly convex costs, specializing the rate $\zeta$ from Theorem~\ref{theorem:main} for different nonlinearities. As we proceed to show, the rates provided by our theory, while general, are able to uncover important phenomena, namely which choice of nonlinearity is preferred for the given problem settings. Subsection~\ref{subsec:analytical} provides the analytical study, while Subsection~\ref{subsec:numerical} provides the accompanying numerical results.

\subsection{Analytical Study}\label{subsec:analytical}

In this section we provide an analytical study, using the rate $\zeta$ obtained in Theorem~\ref{theorem:main}, with the goal of showing that clipping, exclusively considered in prior works, is not always the optimal choice of nonlinearity. Recall the constants where $\phi^\prime(0)$, $\xi$ are constants defined in Section~\ref{subsec:theory-nonconv}. We then have the following result, whose proof can be found in Appendix~\ref{app:num}.  

\begin{lemma}\label{lm:analytical}
    For any strongly convex cost with Lipschitz continuous gradients and the noise from Example~\ref{example:1}, a component-wise nonlinearity is preferred to joint clipping whenever
    \begin{equation}\label{eq:duality}
        \frac{\phi^\prime(0)\xi}{C_1} \geq 8\sqrt{d}\left(1 - \frac{1}{(1 + \nicefrac{B_0}{\sqrt{d}})^{\alpha - 1}} \right)^d.
    \end{equation} 
\end{lemma}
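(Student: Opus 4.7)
The strategy is to compare the rate exponent $\zeta = \min\{2\delta-1, a\mu\gamma/2\}$ from Theorem~\ref{theorem:main} for a component-wise nonlinearity versus joint clipping: with the other parameters shared, the faster rate is obtained by the nonlinearity with the larger value of $\gamma$ (up to the $2\delta-1$ cap and modulo the $C^2$ dependence in the constant $B$, which likewise favours nonlinearities for which $\gamma/C^2$ is larger). So the plan is to (i) compute $\gamma_{\mathrm{cw}}$ for a generic component-wise nonlinearity under the noise of Example~\ref{example:1}; (ii) compute $\gamma_{\mathrm{clip}}$ for joint clipping under the same noise; and (iii) reduce the condition $\gamma_{\mathrm{cw}}/C_{\mathrm{cw}}^2 \geq \gamma_{\mathrm{clip}}/C_{\mathrm{clip}}^2$ to~\eqref{eq:duality}.

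For (i), I would invoke Lemma~\ref{lm:huber} together with the a priori trajectory bound $\|\nabla f(\bxt)\| \leq L(\|\bx^{(0)} - \bx^\star\| + C_1\sqrt{d}\sum_{s<t}\alpha_s)$, already used in the proof of Theorem~\ref{thm:non-conv}. The right-hand side grows like $t^{1-\delta}$, so that for $t$ large the minimum in Lemma~\ref{lm:huber} is attained by its linear branch $\xi\|\nabla f(\bxt)\|/\sqrt{d}$. Matching the result with the form $\gamma(t+2)^{\delta-1}\|\nabla f(\bxt)\|^2$ required by Lemma~\ref{lm:key} yields $\gamma_{\mathrm{cw}}$ proportional to $\phi^\prime(0)\xi/(C_1\sqrt{d})$, with $C_{\mathrm{cw}} = C_1\sqrt{d}$.

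For (ii), I would lower bound $\langle \bPhi(\bx), \bx\rangle$ directly for $\bPsi(\bx) = \min\{1, M/\|\bx\|\}\bx$ using the noise symmetry
\begin{equation*}
    \bPhi(\bx) = \tfrac{1}{2}\E\bigl[\bPsi(\bx+\bz) + \bPsi(\bx-\bz)\bigr],
\end{equation*}
and restrict the expectation to the event $\{\|\bz\|\leq B_0\}$, on which the noise PDF is bounded below by Assumption~\ref{asmpt:noise}. In the saturation regime $\|\bx\| \gg M + B_0$, both summands are approximately collinear with $\bx/\|\bx\|$ of norm $M$, so a triangle-inequality calculation gives $\langle \bPhi(\bx), \bx\rangle \gtrsim M\|\bx\|\cdot \mathbb{P}(\|\bz\|\leq B_0)$. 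For the product PDF of Example~\ref{example:1},
\begin{equation*}
    \mathbb{P}(\|\bz\|\leq B_0) \geq \mathbb{P}\bigl(\max_i |z_i| \leq B_0/\sqrt{d}\bigr) = \Bigl(1 - (1+B_0/\sqrt{d})^{-(\alpha-1)}\Bigr)^{d},
\end{equation*}
giving $\gamma_{\mathrm{clip}}$ proportional to $M\bigl(1-(1+B_0/\sqrt{d})^{-(\alpha-1)}\bigr)^{d}$, with $C_{\mathrm{clip}} = M$.

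For (iii), plugging both $\gamma$'s into the normalization $\gamma/C^2$ and cancelling $M$ between numerator and denominator on the clipping side, the inequality $\gamma_{\mathrm{cw}}/C_{\mathrm{cw}}^2 \geq \gamma_{\mathrm{clip}}/C_{\mathrm{clip}}^2$ collapses, after tracking the $1/2$ from Lemma~\ref{lm:huber} and the $1/2$ from the symmetrization, to~\eqref{eq:duality}. The factor $8\sqrt{d}$ on the right-hand side absorbs these numerical constants together with the $\sqrt{d}$ discrepancy between $C_{\mathrm{cw}}^2 = C_1^2 d$ and the $C_1\sqrt{d}$ appearing in $\gamma_{\mathrm{cw}}$. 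The main obstacle I anticipate is in step~(ii): extracting a clean lower bound on $\langle \bPhi(\bx),\bx\rangle$ that depends on the noise only through $\mathbb{P}(\|\bz\|\leq B_0)$, while controlling the length scale $\|\bx\|$ so that it cancels cleanly against $M$. The natural resolution is to split the expectation at $\{\|\bz\|\leq B_0\}$, discard its complement (whose contribution is nonnegative by symmetrization), and on the good event use only the uniform PDF lower bound, yielding the clean form needed to match~\eqref{eq:duality}.
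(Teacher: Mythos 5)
Your high-level idea --- compare the rates guaranteed by Theorem~\ref{theorem:main} for a component-wise nonlinearity against joint clipping and reduce the comparison to~\eqref{eq:duality} --- is the right one, but several of the specific choices you make do not line up with the paper, and one of them would not let the algebra close.

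First, the comparison quantity. The paper's criterion is simply $\zeta_{clip}\leq\zeta_{comp}$, which (for the non-trivial branch of the $\min$) is exactly $\gamma_{clip}\leq\gamma_{cw}$; $C$ does not enter a second time. You instead propose to compare $\gamma/C^2$, motivated by the dependence of the constant $B$ on $C^2$. This changes the powers of $C_1$ and $d$: with $\gamma_{cw}\propto\phi^\prime(0)\xi/(C_1\sqrt{d})$ and $C_{cw}^2=dC_1^2$, your normalized quantity scales as $\phi^\prime(0)\xi/(C_1^3 d^{3/2})$, whereas the target inequality~\eqref{eq:duality} has $\phi^\prime(0)\xi/C_1$ on the left and a single $\sqrt{d}$ on the right. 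The spare $C_1^2$ and $d$ cannot be absorbed into the numerical factor $8$, so step~(iii) of your plan does not actually collapse to~\eqref{eq:duality}. Comparing $\gamma$'s (equivalently the $\zeta$'s), with the $C$-dependence entering only once through the trajectory bound $G_t$, gives exactly the ratio $\sqrt{d}/1$ and the single power of $C_1$ that appear in~\eqref{eq:duality}.

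Second, you re-derive $\gamma_{cw}$ from Lemma~\ref{lm:huber}, but that lemma bounds the inner product through $\|\bx\|_\infty\geq\|\bx\|/\sqrt{d}$ (keeping only the largest coordinate), which costs an extra $\sqrt{d}$. The paper obtains $\gamma_{cw}$ from Lemma~\ref{lemma:1.1}, which bounds every component $|\phi_i^{(t)}|\geq|[\nabla f(\bxt)]_i|\,\phi_i^\prime(0)\xi/(2G_t)$ and sums, giving $\gamma_{cw}\propto\phi^\prime(0)\xi/(C_1\sqrt{d})$ rather than the $\phi^\prime(0)\xi/(C_1 d)$ that Lemma~\ref{lm:huber} produces. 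Similarly, you re-derive $\gamma_{clip}$ from a saturation argument that yields a numerator $M\mathbb{P}(\|\bz\|\leq B_0)$; the paper's Lemma~\ref{lm:10} instead gives $\calN_2(1)=\min\{1,M\}$ in the numerator, not $M$ (and once you divide by $G_t\propto aM$, your $\gamma_{clip}$ loses all $M$-dependence, contradicting the $\calN_2(1)/M$ scaling in Appendix~\ref{app:rate}). The rate formulas are already written out explicitly in Appendix~\ref{app:rate}; re-deriving them introduces inconsistent constants.

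Third, the direction of the noise bound. You establish $\mathbb{P}(\|\bz\|\leq B_0)\geq\bigl(1-(1+B_0/\sqrt{d})^{-(\alpha-1)}\bigr)^d$. But the noise-dependent constant $\lambda(\kappa)$ sits inside $\gamma_{clip}$, i.e., on the side you want to upper-bound in order to guarantee $\gamma_{clip}\leq\gamma_{cw}$. A lower bound on $\mathbb{P}(\|\bz\|\leq B_0)$ does not serve that purpose. The paper accordingly upper-bounds $\lambda(\kappa)$ by dropping the angular restriction and the constraint $\|\bz\|=1$ in $\widetilde{\mathcal{J}}(\kappa)$, arriving at the same closed-form expression $\bigl(1-(1+B_0/\sqrt{d})^{-(\alpha-1)}\bigr)^d$ but as an upper bound, which is what makes~\eqref{eq:duality} a sufficient condition rather than merely a necessary one.
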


For sign and cclip it can be shown that the value $\frac{\phi^\prime(0)\xi}{C_1}$ is approximately $\frac{\alpha - 1}{\alpha}$ and $\frac{m-1}{m}(1 - (m+1)^{-\alpha})$, respectively, where $\alpha > 2$ is the constant from Example~\ref{example:1}, while $m > 1$ is the clipping radius for cclip (see~\cite{jakovetic2023nonlinear}). Clearly, if $B_0$ is fixed, one can find a large enough $d$ for which the relation~\eqref{eq:duality} holds for both sign and cclip. On the other hand, for a fixed $d$, if $B_0 \rightarrow +\infty$, then the relation does not hold. If both $B_0$ and $d$ grow to infinity, we notice two regimes: 
\begin{enumerate}
    \item $B_0 = o(\sqrt{d})$, e.g., $B_0 = d^{\nicefrac{1}{4}}$, the relation~\eqref{eq:duality} holds for sufficiently large $d$.
    \item $d = o(B_0)$, e.g., $B_0 = d^2$, the relation~\eqref{eq:duality} does not hold. 
\end{enumerate} Hence, for $B_0$ finite or growing at an appropriate rate, we can always find a $d$ large enough, for which the relation from Lemma~\ref{lm:analytical} holds. We verify this numerically in Figure~\ref{fig:rhs_behaviour}, where we plot the behaviour of the right-hand side of~\eqref{eq:duality} when $B_0 = d^2$, $B_0 = d^{\nicefrac{1}{4}}$ and $B_0 = 100$ (i.e., constant) on the $y$-axis, versus the problem dimension $d$ on the $x$-axis. The straight lines show the left-hand side of~\eqref{eq:duality}, when specialized to sign and cclip, i.e., $\frac{\alpha - 1}{\alpha}$ and $\frac{m-1}{m}\left(1 - (m+1)^{-\alpha}\right)$, for $\alpha = 2.05$ and $m = 2$. We can clearly see that, as $d$ increases, the right-hand side either blows up ($B_0 = d^2$), rapidly decreases to zero ($B_0 = d^{\nicefrac{1}{4}}$), or initially blows up, but decreases to zero for $d$ sufficiently large ($B_0 = const.$), as claimed. In the subplot we zoom in on the lines representing the behaviour of the left-hand side of~\eqref{eq:duality} for sign and cclip, which suggests that sign is a better choice of nonlinearity in this instance. Therefore, there are regimes for which our theory suggests clipping is \emph{not the optimal choice of nonlinearity}. This is in line with the observations made in~\cite{zhang2020adaptive}, who noted that, compared to joint clipping, cclip converges faster and achieves a better dependence on problem dimension. 

\begin{figure}[ht]
\centering
\includegraphics[scale=0.5]{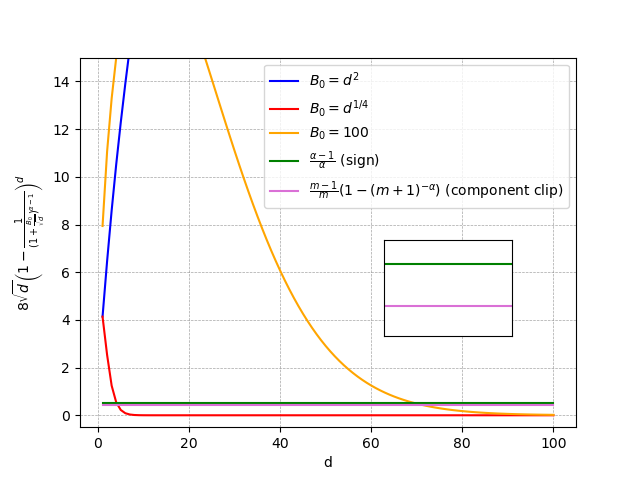}
\caption{Numerical verification of the inequality from Lemma~\ref{lm:analytical}, for different values of $d$ and $B_0$. We can see that the inequality holds for $B_0$ finite or growing at an appropriate rate.}
\label{fig:rhs_behaviour}
\end{figure}

\subsection{Numerical Results}\label{subsec:numerical}

In this section we verify our analytical findings numerically. We consider a quadratic problem $f(\bx) = \frac{1}{2}\bx^\top A\bx + \mathbf{b}^\top\bx$, where $A \in \R^{d \times d}$ is positive definite, with $\mathbf{b} \in \R^d$ a fixed vector. We set $d = 100$. The stochastic noise is generated according to the component-wise noise PDF from Example~\ref{example:1}, with $\alpha = 2.05$. We compare the performance of sign, component-wise and joint clipped SGD, with all three algorithms using the step-size schedule $\alpha_t = \frac{1}{t+2}$. For clipping based algorithms, we choose the clipping thresholds $m$ and $M$ for which component-wise and joint clipped SGD performed the best, that being $m = 1$ and $M = 100$. All three algorithms are initialized at the zero vector and perform $T = 25000$ iterations, across $R = 5000$ runs. To evaluate the performance of the methods, we use the following criteria:
\begin{enumerate}[leftmargin=*]
    \item \emph{Mean-squared error (MSE)}: we present the MSE of the algorithms, by evaluating the model gap $\|\bx^{(t)} - \bx^\star\|^2$ in each iteration, averaged across all runs, i.e., the final estimator at iteration $t = 1,\ldots,T$, is given by $MSE^t = \frac{1}{R}\sum_{r = 1}^R\|\bx^{(t)}_r - \bx^\star\|^2$, where $\bx^{(t)}_r$ is the $t$-th iterate in the $r$-th run, generated by the nonlinear SGD algorithms.

    \item \emph{High-probability estimate}: we evaluate the high-probability behaviour of the methods, as follows. We consider the events $A^t = \{\|\bx^{(t)} - \bx^\star\|^2 > \varepsilon \}$, for a fixed $\varepsilon > 0$. To estimate the probability of $A^t$, for each $t = 1,\ldots, T$, we construct a Monte Carlo estimator of the empirical probability, by sampling $n = 3000$ instances from the $R = 5000$ runs, uniformly with replacement. We then obtain the empirical probability estimator as $\mathbb{P}_n(A^t) = \frac{1}{n}\sum_{i = 1}^n\mathbb{I}_i(A^{t}) = \frac{1}{n}\sum_{i = 1}^n\mathbb{I}\left(\{\|\bx_i^{(t)} - \bx^\star\|^2 > \varepsilon\}\right)$, where $\mathbb{I}(\cdot) \in \{0,1\}$ is the indicator function, with $\bx^{(t)}_i$ the $i$-th Monte Carlo sample. 
\end{enumerate} The results are presented in Figure~\ref{fig:fig1}. We can see that component-wise nonlinearities outperform joint clipping, both in terms of MSE and high-probability performance, thus validating our analytical findings from Section~\ref{sec:an-num}, further underlining the usefulness of the exponent $\zeta$. 

\begin{figure}[ht]
\centering
\begin{tabular}{ccc}
\includegraphics[scale=0.33]{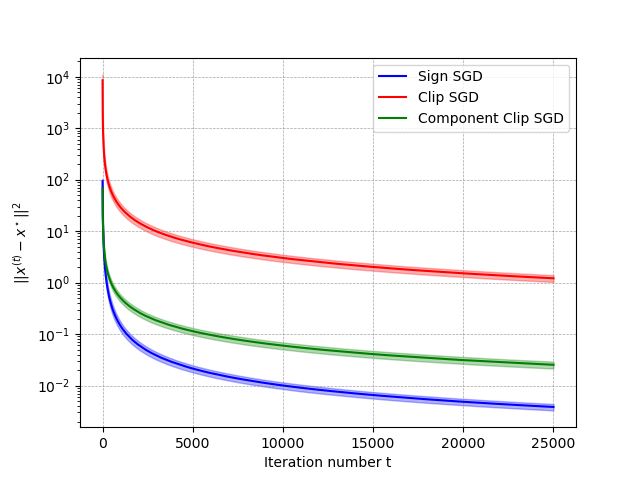}
&
\includegraphics[scale=0.33]{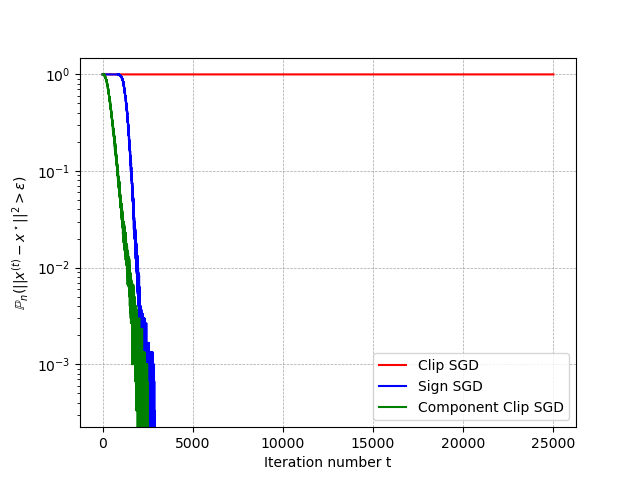}
&
\includegraphics[scale=0.33]{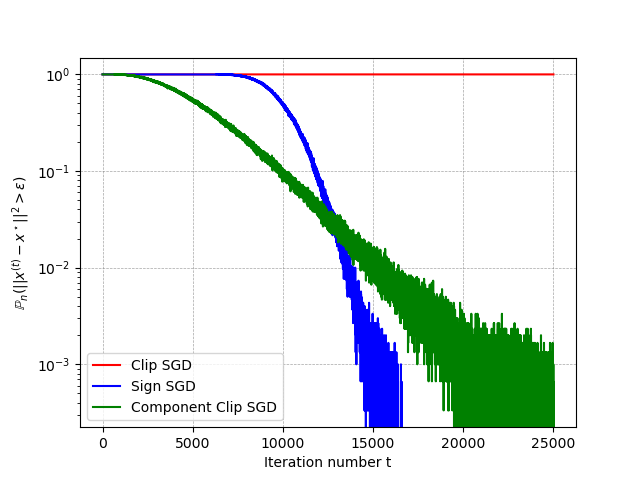}
\end{tabular}
\caption{Performance of sign, cclip and joint clipping for $d = 10$. Left to right: MSE performance and high-probability performance for $\varepsilon = \{0.1, 0.01 \}$, respectively.}
\label{fig:fig1}
\end{figure}

\section{Conclusion}\label{sec:conclusion}
We present high-probability convergence guarantees for a broad class of nonlinear streaming SGD algorithms, under heavy-tailed noise. Our results are built on a general framework, that encompasses many popular nonlinear versions of SGD, such as clipped, normalized, quantized and sign SGD, providing high-probability convergence guarantees for both non-convex and strongly convex functions. Compared to state-of-the-art works~\cite{nguyen2023improved,sadiev2023highprobability}, we extend the high-probability convergence guarantees to novel nonlinearities, relax the noise moment condition, and demonstrate regimes in which our convergence rates are better than state-of-the-art. Moreover, for strongly convex functions we show that our rates are informative for the optimal choice of nonlinearity and that clipping, exclusively considered in prior works, is not always the optimal choice of nonlinearity, further highlighting the importance of our general framework. Numerical results confirm the theoretical findings and show the usefulness of our general framework for informing on the optimal choice of nonlinearity.

\bibliography{bibliography}

\appendix

\section{Introduction}

The Appendix presents results omitted from the main body. Appendix~\ref{app:facts} provides some useful facts and results used in the proofs. Appendix~\ref{app:proofs} provides the proofs omitted from Section~\ref{sec:main}. Appendix~\ref{app:rate} provides rate expressions for component-wise and joint nonlinearities. Appendix~\ref{app:num} presents results omitted from Section~\ref{sec:an-num}. Appendix~\ref{app:further} provides some further derivations.

\section{Useful Facts}\label{app:facts}
In this section we present some useful facts and results, concerning $L$-smooth, $\mu$-strongly functions, bounded random vectors and the behaviour of nonlinearities.

\begin{fact}
    Let $f: \R^d \mapsto \R$ be $L$-smooth, $\mu$-strongly convex, and let $\bx^\star = \argmin_{\bx \in \R^d}f(\bx)$. Then, for any $\bx \in \R^d$, we have
    \begin{equation*}
        2\mu\left(f(\bx) - f(\bx^\star)\right)\leq \|\nabla f(\bx)\|^2 \leq 2L\left(f(\bx) - f(\bx^\star)\right).
    \end{equation*}
\end{fact}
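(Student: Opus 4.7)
The statement is the classical sandwich inequality linking suboptimality $f(\bx) - f(\bx^\star)$ to $\|\nabla f(\bx)\|^2$ for functions that are both $L$-smooth and $\mu$-strongly convex. The plan is to derive each side of the inequality by minimizing, over $\by \in \R^d$, the quadratic upper bound (for the right inequality) and the quadratic lower bound (for the left inequality) that smoothness and strong convexity respectively supply.

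For the right inequality, I will invoke the $L$-smoothness property already recorded in the excerpt, namely $f(\by) \leq f(\bx) + \langle \nabla f(\bx), \by - \bx \rangle + \tfrac{L}{2}\|\by - \bx\|^2$ for every $\by$. Since this holds for all $\by$, I may choose the specific point $\by = \bx - \tfrac{1}{L}\nabla f(\bx)$, which minimizes the right-hand side exactly. Substituting and simplifying yields $f(\by) \leq f(\bx) - \tfrac{1}{2L}\|\nabla f(\bx)\|^2$. Since $f^\star = f(\bx^\star) \leq f(\by)$ by definition of the minimizer, rearranging gives $\|\nabla f(\bx)\|^2 \leq 2L(f(\bx) - f^\star)$.

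For the left inequality, I use $\mu$-strong convexity (Assumption~\ref{asmpt:cvx}), which states $f(\by) \geq f(\bx) + \langle \nabla f(\bx), \by - \bx \rangle + \tfrac{\mu}{2}\|\by - \bx\|^2$ for every $\by$. The right-hand side is a strictly convex quadratic in $\by$, minimized at $\by^\ast = \bx - \tfrac{1}{\mu}\nabla f(\bx)$, with minimum value $f(\bx) - \tfrac{1}{2\mu}\|\nabla f(\bx)\|^2$. Specializing the inequality at $\by = \bx^\star$ gives $f^\star \geq f(\bx) + \langle \nabla f(\bx), \bx^\star - \bx \rangle + \tfrac{\mu}{2}\|\bx - \bx^\star\|^2$; alternatively, since the inequality must hold for all $\by$ including $\by^\ast$, and since $f(\by^\ast) \geq f^\star$, we obtain $f^\star \leq f(\by^\ast) \leq f(\bx) - \tfrac{1}{2\mu}\|\nabla f(\bx)\|^2$ by going through the direction that is actually correct — let me instead use the cleanest route: take the minimum over $\by$ on both sides of the strong-convexity inequality. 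The left side's minimum over $\by$ is $f^\star$, and the right side's minimum is $f(\bx) - \tfrac{1}{2\mu}\|\nabla f(\bx)\|^2$, giving $f^\star \geq f(\bx) - \tfrac{1}{2\mu}\|\nabla f(\bx)\|^2$ (this is the Polyak--Łojasiewicz inequality implied by strong convexity). Rearranging yields $2\mu(f(\bx) - f^\star) \leq \|\nabla f(\bx)\|^2$.

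There is no genuine obstacle here; the only subtlety is justifying the ``minimize both sides'' step in the strong-convexity argument, which I will phrase as the equivalent ``plug in the specific minimizer $\by^\ast$ of the quadratic lower bound and then use $f(\by^\ast) \geq f^\star$.'' Both halves together give the claimed sandwich bound for every $\bx \in \R^d$, and Assumption~\ref{asmpt:L-smooth} combined with Assumption~\ref{asmpt:cvx} ensures the minimizer $\bx^\star$ exists and is unique, so $f^\star = f(\bx^\star)$ is well defined.
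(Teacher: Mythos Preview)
Your argument is correct: minimizing the smoothness upper bound over $\by$ gives the right inequality, and minimizing the strong-convexity lower bound over $\by$ gives the Polyak--\L{}ojasiewicz inequality, which is the left side. The paper does not actually write out a proof; it simply cites equations (2.1.10) and (2.1.24) of Nesterov's \emph{Lectures on Convex Optimization}, specialized at $x=\bx^\star$, $y=\bx$. Those equations are exactly the consequences of the quadratic upper and lower bounds that you derive by hand, so your self-contained derivation is the standard argument underlying the citation.

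One small comment on presentation: in the strong-convexity half you first try to plug $\by^\ast$ into the inequality $f(\by)\geq \text{(quadratic)}$, which gives the wrong direction, and then correct yourself by taking $\min_{\by}$ on both sides. In a clean write-up just keep the latter: since $f(\by)\geq g(\by)$ for all $\by$ implies $\inf_{\by} f(\by)\geq \inf_{\by} g(\by)$, and the right-hand quadratic is minimized at $\by^\ast=\bx-\tfrac{1}{\mu}\nabla f(\bx)$ with value $f(\bx)-\tfrac{1}{2\mu}\|\nabla f(\bx)\|^2$, the PL bound follows directly.
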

\begin{proof}
    The proof of the upper bound follows by plugging $y = \bx$, $x = \bx^\star$ in equation (2.1.10) of Theorem~2.1.5 from~\cite{nesterov-lectures_on_cvxopt}. The proof of the lower bound similarly follows by plugging $y = \bx$, $x = \bx^\star$ in equation (2.1.24) of Theorem~2.1.10 from~\cite{nesterov-lectures_on_cvxopt}.
\end{proof}

\begin{fact}\label{fact:subgauss}
    Let $X \in \R^d$ be a zero-mean, bounded random vector, i.e., $\E X = 0$ and $\|X\| \leq \sigma$, for some $\sigma > 0$. Then, $X$ is sub-Gaussian, i.e., there exists a positive constant $N = N(\sigma)$ such that, for any $\lambda \in \R^d$, we have
    \begin{equation*}
        \E e^{\langle X,\lambda \rangle} \leq e^{\frac{N\|\lambda\|^2}{2}}.
    \end{equation*}
\end{fact}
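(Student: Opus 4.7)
The plan is to reduce the multivariate statement to the classical one-dimensional Hoeffding bound for bounded zero-mean random variables. Fix $\lambda \in \mathbb{R}^d$ and define the scalar random variable $Y \triangleq \langle X, \lambda \rangle$. Two properties of $Y$ follow immediately: by linearity of expectation, $\E Y = \langle \E X, \lambda \rangle = 0$, and by Cauchy--Schwarz together with the hypothesis $\|X\| \leq \sigma$, we have $|Y| \leq \sigma \|\lambda\|$. Hence $Y$ is a zero-mean random variable taking values in the interval $[-\sigma\|\lambda\|,\, \sigma\|\lambda\|]$.

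Next, I would invoke Hoeffding's lemma: if $Y$ is a zero-mean random variable supported in $[a,b]$, then $\E e^{Y} \leq \exp\!\left(\tfrac{(b-a)^2}{8}\right)$. Applying this with $a=-\sigma\|\lambda\|$ and $b=\sigma\|\lambda\|$ yields
\[
\E e^{\langle X, \lambda\rangle} = \E e^{Y} \leq \exp\!\left(\tfrac{(2\sigma\|\lambda\|)^2}{8}\right) = \exp\!\left(\tfrac{\sigma^2 \|\lambda\|^2}{2}\right),
\]
which is exactly the desired bound with $N = N(\sigma) = \sigma^2$. Since $\lambda$ was arbitrary, the conclusion holds for all $\lambda \in \mathbb{R}^d$.

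If a self-contained write-up is preferred rather than citing Hoeffding's lemma, the one-line argument is to use convexity of $t \mapsto e^t$: for any $y \in [-M, M]$ with $M = \sigma\|\lambda\|$, we can write $y = \frac{M+y}{2M} M + \frac{M-y}{2M}(-M)$, so $e^y \leq \frac{M+y}{2M} e^M + \frac{M-y}{2M} e^{-M}$. Taking expectations and using $\E Y = 0$ gives $\E e^Y \leq \tfrac{1}{2}(e^M + e^{-M}) = \cosh(M)$, and the elementary inequality $\cosh(M) \leq e^{M^2/2}$ (verified by comparing Taylor coefficients $\tfrac{1}{(2k)!} \leq \tfrac{1}{2^k k!}$) closes the argument. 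There is no real obstacle here; the only subtlety is getting the constant right, and the choice $N = \sigma^2$ follows cleanly from either route.
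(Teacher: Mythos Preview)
Your argument is correct. It is, however, a genuinely different route from the paper's proof. The paper uses a symmetrization argument: it introduces an independent copy $X'$ of $X$, uses Jensen to pass from $\E e^{\langle X - \E X',\lambda\rangle}$ to $\E_{X,X'} e^{\langle X - X',\lambda\rangle}$, observes that $X - X'$ is symmetric so that inserting a Rademacher sign $\varepsilon$ leaves the law unchanged, applies the Rademacher MGF bound $\E_\varepsilon e^{\varepsilon t} \le e^{t^2/2}$ conditionally, and then bounds $\|X - X'\| \le 2\sigma$. This yields $N = 4\sigma^2$. Your approach bypasses symmetrization entirely by reducing to the scalar $Y = \langle X,\lambda\rangle \in [-\sigma\|\lambda\|,\sigma\|\lambda\|]$ and applying Hoeffding's lemma (or, in your self-contained version, the convexity-then-$\cosh(M)\le e^{M^2/2}$ argument), which gives the sharper constant $N = \sigma^2$. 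Both are standard; yours is shorter and loses less in the constant, while the paper's symmetrization technique is the one that generalizes more readily when boundedness is replaced by weaker tail assumptions.
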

\begin{proof}
    We begin by first showing that a scalar Rademacher random variable is sub-Gaussian. Recall that a random variable $\varepsilon$ is a Rademacher random variable, if $\varepsilon$ takes the values $-1$ and $1$, both with probability $1/2$. We then have, for any $t \in \R$
    \begin{align}\label{eq:rademacher}
        \E e^{\varepsilon t} \stackrel{(a)}{=} \frac{1}{2}\left(e^{t} + e^{-t}\right) \stackrel{(b)}{=} \sum_{k = 0}^{\infty}\frac{t^{2k}}{{(2k)!}} \stackrel{(c)}{\leq} \sum_{k = 0}^{\infty}\frac{t^{2k}}{2^kk!} = \sum_{k = 0}^{\infty}\frac{(\nicefrac{t^2}{2})^k}{k!} = e^{\frac{t^2}{2}},
    \end{align} where $(a)$ follows from the definition of the Rademacher random variable, $(b)$ follows from the fact that $e^t = \sum_{k = 0}^\infty \frac{t^k}{k!}$, for any $t \in \R$, while $(c)$ follows from the fact that $2k! \geq 2^kk!$, for any $k \in \N_0$. Let $X^\prime \in \R^d$ be an independent, identically distributed copy of $X$. We then have, for any $\lambda \in \R^d$
    \begin{align*}
        \E e^{\langle X,\lambda \rangle} \stackrel{(a)}{=} \E e^{\langle X - \E X^\prime,\lambda \rangle} \stackrel{(b)}{\leq} \E_{X,X^\prime} e^{\langle X - X^\prime,\lambda \rangle} &\stackrel{(c)}{=} \E_{X,X^\prime}\E_\varepsilon e^{\varepsilon\langle X - X^\prime,\lambda \rangle} \stackrel{(d)}{\leq} \E_{X,X^\prime} e^{\frac{\left(\langle X - X^\prime,\lambda \rangle\right)^2}{2}} \\ &\leq \E_{X,X^\prime} e^{\frac{\| X - X^\prime\|^2\|\lambda\|^2}{2}} \stackrel{(e)}{\leq} e^{2\sigma^2\|\lambda\|^2},
    \end{align*} where $(a)$ follows from the fact that $X$ is zero-mean, in $(b)$ we use Jensen's inequality, $(c)$ uses the fact that $X,X^\prime$ are i.i.d., therefore $X - X^\prime$ has the same distribution as $\varepsilon(X - X^\prime)$, where $\varepsilon$ is a Rademacher random variable, in $(d)$ we use~\eqref{eq:rademacher}, while $(e)$ follows from the boundedness of $X$. Choosing $N = 4\sigma^2$, the desired inequality follows.  
\end{proof}

\section{Missing Proofs}\label{app:proofs}

In this section we provide proofs of Lemmas \ref{lm:error_component},~\ref{lm:huber} and \ref{lm:key}, omitted from the main body. We begin by proving Lemma~\ref{lm:error_component}.

\begin{proof}[Proof of Lemma~\ref{lm:error_component}]
    Recall the definition of the error vector $\bet \triangleq \boldsymbol{\Phi}^{(t)} - \boldsymbol{\Psi}^{(t)}$, where $\bPhi^{(t)} \triangleq \mbe_\bzt \left[\bPsi (\nabla f(\bxt)+\bzt)\right]$ is the denoised version of $\bPsi^{(t)}$. By definition, it then follows that
    \begin{equation*}
        \E\left[\bet \vert \: \mathcal{F}_t\right] = \E\left[\bPhi^{(t)} - \bPsi^{(t)} \vert \: \mathcal{F}_t\right] = \bPhi^{(t)} - \E\left[ \bPsi^{(t)} \vert \: \mathcal{F}_t \right] = 0,
    \end{equation*} where the last equality follows from the fact that $\bPhi^{(t)}$ is $\mathcal{F}_t$-measureable and $\E\left[ \bPsi^{(t)} \vert \: \mathcal{F}_t \right] = \E_\bzt\left[ \bPsi(\nabla f(\bxt)+ \bzt) \right] = \bPhit$. Moreover, by Assumption~\ref{asmpt:nonlin}, we have
    \begin{equation*}
        \|\bet \| = \|\bPhi^{(t)} - \bPsi^{(t)}\| \leq \|\bPhi^{(t)}\| + \|\bPsi^{(t)}\| \leq \E\|\bPsi^{(t)}\| +  C \leq 2C,
    \end{equation*} which proves the first claim. The second claim readily follows by using the fact that $\bet$ is a bounded random variable and applying Fact~\ref{fact:subgauss}.
\end{proof}

We next prove Lemma~\ref{lm:huber}.

\begin{proof}[Proof of Lemma~\ref{lm:huber}]
    Using the results from Lemma~\ref{lm:polyak-tsypkin}, for any $x \in \R$, and any $i \in [d]$, we have
    \begin{equation*}
        \phi_i(x) = \phi_i(0) + \phi_i^\prime(0)x + h_i(x)x = \phi_i^\prime(0)x + h_i(x)x,
    \end{equation*} where $h_i: \R \mapsto \R$ is such that $\lim_{x \rightarrow 0}h_i(x) = 0$. Recalling that $\phi^\prime(0) \triangleq \min_{i \in [d]}\phi_i^\prime(0) > 0$, it follows that there exists a $\xi > 0$ (depending only on $\bPhi$) such that, for each $x \in \R$ and all $i \in [d]$, we have $|h_i(x)| \leq \phi^\prime(0) / 2$, if $|x| \leq \xi$. Therefore, for any $0 \leq x \leq \xi$, we have
    \begin{equation}\label{eq:1}
        \phi_i(x) \geq \frac{\phi^\prime(0)x}{2}.
    \end{equation} On the other hand, for $x > \xi$, since $\phi_i$ is non-decreasing, we have from \eqref{eq:1} that $\phi_i(x) \geq \phi_i(\xi) \geq \frac{\phi^\prime(0)\xi}{2}$. Therefore, it follows that, for any $x \geq 0$
    \begin{equation}\label{eq:2}
        \phi_i(x) \geq \frac{\phi^\prime(0)}{2}\min\{x,\xi \}.
    \end{equation} Next, for any $a \in \R$, using oddity of $\phi_i$, we get
    \begin{equation}\label{eq:3}
        a\phi_i(a) = |a|\phi_i(|a|).
    \end{equation} Using~\eqref{eq:2} and~\eqref{eq:3}, we then have, for any vector $\bx \in \R^d$
    \begin{align}\label{eq:4}
        \langle \bx, \bPhi(\bx) \rangle &= \sum_{i = 1}^d x_i\phi_i(x_i)  \stackrel{\eqref{eq:3}}{=} \sum_{i = 1}^d |x_i|\phi(|x_i|) \geq \max_{i \in [d]}|x_i|\phi_i(|x_i|) \stackrel{\eqref{eq:2}}{\geq} \frac{\phi^\prime(0)}{2}\max_{i \in [d]}\min\{\xi|x_i|,|x_i|^2\} \nn \\ &= \frac{\phi^\prime(0)}{2}\min\{\xi\|\bx\|_{\infty},\|\bx\|_{\infty}^2\} \geq \frac{\phi^\prime(0)}{2}\min\{\nicefrac{\xi\|\bx\|}{ \sqrt{d}},\nicefrac{\|\bx\|^2}{d}\},
    \end{align} where the last inequality follows from the fact that $\|\bx\|_{\infty} \geq \|\bx\| / \sqrt{d}$.
\end{proof}

In order to prove Lemma~\ref{lm:key}, we first state and prove some intermediate results.

\begin{lemma}\label{lm:gradient-bound}
    Let Assumptions~\ref{asmpt:nonlin}-\ref{asmpt:noise} hold, with the step-size given by $\alpha_t = \frac{a}{(t+t_0)^\delta}$, for any $\delta \in (0.5,1)$ and $t_0 > 1$. Then, for any $t \in \N_0$, we have
    \begin{equation*}
        \|\nabla f(\bxt)\| \leq G_t \triangleq L\left(\|\bx^{(0)} - \bx^\star\| + aC\right)\frac{(t+t_0)^{1-\delta}}{1 - \delta}.
    \end{equation*}
\end{lemma}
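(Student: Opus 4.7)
\textbf{Proof plan for Lemma~\ref{lm:gradient-bound}.} The plan is to reduce everything to iterate displacement from an optimum, then control the displacement by unrolling the recursion and using boundedness of the nonlinearity.

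First, I would pick any stationary point $\bx^\star \in \mathcal{X}$ (which exists by Assumption~\ref{asmpt:L-smooth}), so that $\nabla f(\bx^\star) = 0$. By $L$-smoothness,
\begin{equation*}
    \|\nabla f(\bxt)\| = \|\nabla f(\bxt) - \nabla f(\bx^\star)\| \leq L\|\bxt - \bx^\star\|.
\end{equation*}
It therefore suffices to control $\|\bxt - \bx^\star\|$.

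Next, I would unroll the recursion~\eqref{eq:nonlin-sgd} to write
\begin{equation*}
    \bxt - \bx^\star = \bx^{(0)} - \bx^\star - \sum_{k=0}^{t-1}\alpha_k\bPsi^{(k)},
\end{equation*}
and apply the triangle inequality together with the uniform bound $\|\bPsi^{(k)}\| \leq C$ from Assumption~\ref{asmpt:nonlin} to obtain
\begin{equation*}
    \|\bxt - \bx^\star\| \leq \|\bx^{(0)} - \bx^\star\| + C\sum_{k=0}^{t-1}\alpha_k.
\end{equation*}

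The remaining step is to estimate the partial sum of step-sizes. Since $k \mapsto (k+t_0)^{-\delta}$ is decreasing, an upper Darboux sum (using $t_0 > 1$) gives
\begin{equation*}
    \sum_{k=0}^{t-1}\alpha_k \;=\; a\sum_{k=0}^{t-1}(k+t_0)^{-\delta} \;\leq\; a\int_{0}^{t}(x+t_0-1)^{-\delta}\,dx \;\leq\; \frac{a(t+t_0)^{1-\delta}}{1-\delta},
\end{equation*}
where the last inequality uses $\delta \in (0.5,1)$ and $(t_0-1)^{1-\delta} \geq 0$. Combining these ingredients yields
\begin{equation*}
    \|\nabla f(\bxt)\| \leq L\|\bx^{(0)} - \bx^\star\| + \frac{LaC(t+t_0)^{1-\delta}}{1-\delta}.
\end{equation*}
To finish, I would note that for $\delta \in (0.5,1)$ and $t_0 > 1$ one has $(t+t_0)^{1-\delta}/(1-\delta) \geq 1/(1-\delta) > 1$, so the constant term can be absorbed, giving the claimed bound
\begin{equation*}
    \|\nabla f(\bxt)\| \leq L(\|\bx^{(0)}-\bx^\star\| + aC)\,\frac{(t+t_0)^{1-\delta}}{1-\delta}.
\end{equation*}

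There is no real obstacle here: the lemma is essentially a deterministic consequence of the uniform boundedness of $\bPsi$ guaranteed by Assumption~\ref{asmpt:nonlin}, combined with $L$-smoothness and a routine integral comparison for the polynomially decaying step-size. The only mild care is in choosing the Darboux comparison so that the factor $1/(1-\delta)$ appears cleanly and in verifying that the initial-distance term can be absorbed into the final form without picking up hidden constants depending on $t_0$.
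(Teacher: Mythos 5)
Your proof is correct and follows essentially the same route as the paper's: apply $L$-smoothness against a stationary point, unroll the recursion via the triangle inequality and the uniform bound $\|\bPsi^{(k)}\|\leq C$, bound the step-size sum by an integral comparison, and absorb the initial-distance term using $(t+t_0)^{1-\delta}/(1-\delta) > 1$. The paper presents the unrolling as a one-step recursion iterated backward rather than writing the telescoping sum explicitly, but that is purely cosmetic.
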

\begin{proof} Using $L$-smoothness of $f$ and the update~\eqref{eq:nonlin-sgd}, we have
    \begin{align}
        \|\nabla f(\bxt)\| &\leq L\|\bxt - \bx^\star\| = L\|\bx^{(t-1)} - \alpha_{t-1}\bPsi^{(t-1)} - \bx^\star\| \nn \\ 
        &\leq L\left(\|\bx^{(t-1)} - \bx^\star\| + \alpha_{t-1}\|\boldsymbol{\Psi}^{(t-1)}\| \right) \nn \\ 
        &\leq L\left(\|\bx^{(t-1)} - \bx^\star\| + \alpha_{t-1}C \right). \label{eq_proof:thm:nonL_cw_1}
    \end{align}
    Unrolling the recursion in \eqref{eq_proof:thm:nonL_cw_1}, we get
    \begin{align*}
        \|\nabla f(\bxt)\| \leq L \|\bx^{(0)} - \bx^\star\| + LC\sum_{k = 1}^t\alpha_{k-1} \leq L\left(\|\bx^{(0)} - \bx^\star\| + aC\right)\frac{(t+t_0)^{1-\delta}}{1 - \delta}, 
    \end{align*} completing the proof.
\end{proof}

The next result characterize the behaviour of the nonlinearity, when it takes the form $\bPsi(\bx) = \lbr \calN_1(x_1), \dots, \calN_1(x_d) \rbr^\top$. It follows a similar idea to Lemma~5.5 from~\cite{jakovetic2023nonlinear}, with the main difference due to allowing for potentially different marginal PDFs of each noise component. Since the proof follows the same steps, we omit it for brevity.

\begin{lemma}
\label{lemma:1.1}
    Let Assumptions~\ref{asmpt:nonlin}-\ref{asmpt:noise} hold and the nonlinearity $\bPsi$ be component-wise, i.e., of the form $\bPsi(\bx) = \lbr \calN_1(x_1), \dots, \calN_1(x_d) \rbr^\top$. Then, there exists a positive constant $\xi$ such that, for any $t \in \N_0$, there holds almost surely for each $j = 1,\ldots,d$, that $|\phi^{(t)}_i| \geq |[\nabla f(\bxt)]_i|\frac{\phi_i^\prime(0)\xi}{2G_t}$, where $G_t$ is defined in Lemma~\ref{lm:gradient-bound}, while $\phi_i^\prime(0) = \frac{\partial}{\partial x_i}\E_{z_i}\calN_1(x_i + z_i)\:\big\vert_{x_i = 0}$.
\end{lemma}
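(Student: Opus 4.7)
The plan is to combine the local Taylor behaviour of $\phi_i$ near zero with its monotonicity away from zero, and then convert the resulting piecewise bound into the linear lower bound stated in the lemma using the a priori gradient bound from Lemma~\ref{lm:gradient-bound}. Throughout, I will write $g_i \triangleq [\nabla f(\bxt)]_i$, so that $\phi^{(t)}_i = \phi_i(g_i)$.

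First, by Lemma~\ref{lm:polyak-tsypkin}, each marginal $\phi_i$ is odd, non-decreasing, satisfies $\phi_i(0) = 0$ and is differentiable at zero with $\phi_i^\prime(0) > 0$. Writing $\phi_i(x) = \phi_i^\prime(0)\,x + h_i(x)\,x$ with $h_i(x) \to 0$ as $x \to 0$, and using that the finite minimum $\phi^\prime(0) = \min_i \phi_i^\prime(0)$ is strictly positive (this is exactly the reasoning used in the proof of Lemma~\ref{lm:huber}), I can pick a single constant $\xi > 0$ depending only on $\bPsi$ and the noise PDF, such that $|h_i(x)| \leq \phi_i^\prime(0)/2$ for all $|x| \leq \xi$ and all $i \in [d]$. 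This gives the local bound $\phi_i(x) \geq \phi_i^\prime(0) x/2$ for $0 \leq x \leq \xi$.

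Next, for $|x| > \xi$, monotonicity of $\phi_i$ on $[0,\infty)$ together with oddity yields $|\phi_i(x)| = \phi_i(|x|) \geq \phi_i(\xi) \geq \phi_i^\prime(0) \xi / 2$. Combining the two regimes, for every $x \in \R$ and every $i \in [d]$ I obtain the piecewise bound
\begin{equation*}
    |\phi_i(x)| \;\geq\; \frac{\phi_i^\prime(0)}{2}\,\min\{|x|,\xi\}.
\end{equation*}
Applying this to $x = g_i$ and recalling that, by Lemma~\ref{lm:gradient-bound}, $|g_i| \leq \|\nabla f(\bxt)\| \leq G_t$, I use the elementary inequality $\min\{|g_i|,\xi\} \geq |g_i|\,\xi / \max\{|g_i|,\xi\} \geq |g_i|\,\xi / G_t$, which is valid provided $G_t \geq \xi$. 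Since $G_t$ is non-decreasing in $t$ and eventually exceeds any fixed constant, this can be ensured (either from the outset by enlarging the constant in the definition of $G_t$, or for $t$ large enough; for the initial indices where $G_t < \xi$ the stronger bound $|\phi_i(g_i)| \geq \phi_i^\prime(0)|g_i|/2$ already dominates the claimed inequality). This establishes $|\phi^{(t)}_i| \geq |g_i| \, \phi_i^\prime(0) \xi / (2 G_t)$, as required.

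The main obstacle is in the second paragraph: producing a \emph{single} constant $\xi$ that works uniformly across all coordinates $i$. This is where the component-wise structure of $\bPsi$ and Assumption~\ref{asmpt:noise} are essential, since they guarantee each $\phi_i$ is smooth at zero with strictly positive derivative, and hence that $\phi^\prime(0) = \min_i \phi_i^\prime(0) > 0$; otherwise the Taylor remainder bound could fail to hold on a common neighbourhood of zero. Everything else is a routine case split plus the a priori gradient bound.
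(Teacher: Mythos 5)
Your approach is sound and is essentially the one the paper relies on (the paper itself omits the proof, deferring to Lemma 5.5 of~\cite{jakovetic2023nonlinear}; the piecewise bound $|\phi_i(x)| \geq \frac{\phi_i^\prime(0)}{2}\min\{|x|,\xi\}$ followed by conversion via the a~priori bound $|[\nabla f(\bxt)]_i| \leq \|\nabla f(\bxt)\| \leq G_t$ is precisely the standard route). The Taylor-plus-monotonicity derivation and the elementary step $\min\{|g_i|,\xi\} \geq |g_i|\xi/G_t$ under the hypothesis $G_t \geq \xi$ are all correct, and you correctly flag that $G_t \geq \xi$ must be arranged.

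Your fallback for the case $G_t < \xi$, however, has the implication backwards. When $G_t < \xi$ one has $\xi/G_t > 1$, so the target inequality $|\phi_i^{(t)}| \geq |[\nabla f(\bxt)]_i|\,\phi_i^\prime(0)\xi/(2G_t)$ is \emph{stronger} than the Taylor-regime estimate $|\phi_i^{(t)}| \geq \phi_i^\prime(0)|[\nabla f(\bxt)]_i|/2$; the latter does not ``dominate'' the former, so the lemma does not follow from it in that regime. The clean fix is to recall that $\xi$ is a free constant whose mere existence is being asserted: replace the $\xi$ produced by the Taylor remainder argument by $\min\{\xi, G_0\}$, where $G_0 = L\lp\|\bx^{(0)} - \bx^\star\| + aC\rp t_0^{1-\delta}/(1-\delta) > 0$. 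Shrinking $\xi$ only shrinks the interval on which the local lower bound is asserted, so both the Taylor estimate and the monotonicity step survive, and since $G_t$ is non-decreasing in $t$ one then has $\xi \leq G_0 \leq G_t$ for every $t \in \N_0$, so the conversion step holds uniformly. Your alternative suggestion of enlarging the constant in $G_t$ would also work in principle, but it perturbs the explicit form of $G_t$ that is used downstream in Lemma~\ref{lm:key} to extract $\gamma$; shrinking $\xi$ is preferable because $\xi$ enters only multiplicatively there.
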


The next result is a restatement of Lemma~6.2 in~\cite{jakovetic2023nonlinear}, in the case when the nonlinearity takes the form $\bPsi(\bx) = \bx\calN_2(\|\bx\|)$. We omit the proof for brevity.

\begin{lemma}[Lemma 6.2 from \cite{jakovetic2023nonlinear}]\label{lemma:1.2}
    Let Assumptions~\ref{asmpt:nonlin}-\ref{asmpt:noise} hold and the nonlinearity be of the form $\bPsi(\bx) = \bx\calN_2(\|\bx\|)$. Then, the following holds
    \begin{equation*}
        \left\langle\bPhi(\bx),\bx\right\rangle \geq 2(1-\kappa)\|\bx\|^2\int_{\mathcal{J}(\bx)}\calN_2(\|\bx\|+\|\bz\|)p(\bz)d\bz,
    \end{equation*} where $\J(\bx) = \left\{\bz \in \R^d: \frac{\langle\bz, \bx\rangle}{\|\bz\|\|\bx\|} \in [0,\kappa]\right\}$ and $\kappa \in (0,1)$ is a constant. 
\end{lemma}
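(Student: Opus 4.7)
The plan is to use the symmetry of $p$ to fold the integral defining $\langle\bPhi(\bx),\bx\rangle$ onto the half-space $\{\bz:\theta\geq 0\}$, where $\theta:=\langle\bz,\bx\rangle$, establish non-negativity of the resulting symmetrized integrand, and then extract a pointwise lower bound on the sub-cone $\J(\bx)$ by combining both monotonicity properties of $\calN_2$ with the angular constraint $\theta\leq\kappa\|\bx\|\|\bz\|$.

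First I would write $\langle\bPhi(\bx),\bx\rangle=\int(\|\bx\|^2+\theta)\calN_2(u)\,p(\bz)\,d\bz$ with $u:=\|\bx+\bz\|$, then apply the substitution $\bz\mapsto-\bz$ on $\{\theta<0\}$; using $p(-\bz)=p(\bz)$ from Assumption~\ref{asmpt:noise}, this folds the integral into $\int_{\{\theta\geq 0\}}I(\bz)\,p(\bz)\,d\bz$, with $I(\bz):=(\|\bx\|^2+\theta)\calN_2(u)+(\|\bx\|^2-\theta)\calN_2(v)$ and $v:=\|\bx-\bz\|$. Next I would establish $I(\bz)\geq 0$ on $\{\theta\geq 0\}$: via the identities $\|\bx\|^2\pm\theta=\tfrac{1}{2}(u^2\text{ or }v^2+\|\bx\|^2-\|\bz\|^2)$, rewrite $2I(\bz)=u^2\calN_2(u)+v^2\calN_2(v)+(\|\bx\|^2-\|\bz\|^2)(\calN_2(u)+\calN_2(v))$; combining the non-decrease of $a\mapsto a\calN_2(a)$ (Assumption~\ref{asmpt:nonlin}) with Cauchy--Schwarz $\theta^2\leq\|\bx\|^2\|\bz\|^2$ reduces this to $\|\bx\|^2(u+v)^2\geq 4\theta^2$, which in turn follows from the elementary inequality $(u+v)^2\geq 4\max(\|\bx\|^2,\|\bz\|^2)$. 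Since $\J(\bx)\subset\{\theta\geq 0\}$, dropping the non-negative contribution outside $\J(\bx)$ gives $\langle\bPhi(\bx),\bx\rangle\geq\int_{\J(\bx)}I(\bz)\,p(\bz)\,d\bz$.

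The remaining and principal step is the pointwise lower bound $I(\bz)\geq 2(1-\kappa)\|\bx\|^2\mathcal{N}_\star$ on $\J(\bx)$, where $\mathcal{N}_\star:=\calN_2(\|\bx\|+\|\bz\|)$. The triangle inequality gives $u,v\leq\|\bx\|+\|\bz\|$, hence $\calN_2(u),\calN_2(v)\geq\mathcal{N}_\star$ by monotonicity of $\calN_2$. If $\theta\leq\|\bx\|^2$, both coefficients $\|\bx\|^2\pm\theta$ are non-negative and the stronger bound $I(\bz)\geq 2\|\bx\|^2\mathcal{N}_\star$ follows at once. If $\theta>\|\bx\|^2$ (which forces $\|\bz\|>\|\bx\|/\kappa$), the coefficient $\|\bx\|^2-\theta$ is negative; I would then apply the complementary upper bound $\calN_2(v)\leq(\|\bx\|+\|\bz\|)\mathcal{N}_\star/v$, obtained by applying the non-decrease of $a\calN_2(a)$ between $v$ and $\|\bx\|+\|\bz\|$. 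A short calculation shows the resulting expression is monotone-decreasing in $\theta$, so it suffices to check the extremal case $\theta=\kappa\|\bx\|\|\bz\|$, at which $v^2=\|\bx\|^2-2\kappa\|\bx\|\|\bz\|+\|\bz\|^2$. After clearing denominators and non-dimensionalizing by $r:=\|\bz\|/\|\bx\|$, the desired inequality reduces to $g(r)\geq 0$ on $r\geq 1/\kappa$, with $g(r):=\kappa^2 r^3+4\kappa^2 r^2+(4\kappa^2-2\kappa-1)r-2\kappa$. A direct computation yields $g(1/\kappa)=2(1+\kappa)>0$ and $g'(1/\kappa)=2(2\kappa+1)(\kappa+1)>0$, while $g''(r)=2\kappa^2(3r+4)>0$ ensures $g'$ is increasing, so $g$ is strictly positive on $[1/\kappa,\infty)$, which closes the argument.

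The hard part is this last sub-case ($\theta>\|\bx\|^2$, equivalently $\|\bz\|\gg\|\bx\|$): any bound that discards either the monotonicity of $\calN_2$ or the non-decrease of $a\calN_2(a)$ (for example, merely using $\calN_2(v)\leq C_2/v$ from uniform boundedness of $\bPsi$) degrades with $\|\bz\|\to\infty$ and fails to yield the factor $2(1-\kappa)$. The two monotonicity properties must be combined with the angular constraint $\theta\leq\kappa\|\bx\|\|\bz\|$ simultaneously, precisely so as to reduce the required estimate to the tractable polynomial inequality $g(r)\geq 0$.
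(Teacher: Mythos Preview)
The paper does not actually prove this lemma: it is stated as a restatement of Lemma~6.2 in \cite{jakovetic2023nonlinear} and the proof is explicitly omitted. There is therefore no in-paper argument to compare against.

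Your argument is correct. The symmetrization step and the non-negativity of $I(\bz)$ on $\{\theta\geq 0\}$ are handled cleanly; in particular, your reduction of $I(\bz)\geq 0$ to $\|\bx\|^2(u+v)^2\geq 4\theta^2$ via $u\calN_2(u)\geq v\calN_2(v)$, followed by the triangle-inequality bound $u+v\geq 2\max(\|\bx\|,\|\bz\|)$ and Cauchy--Schwarz, is valid. For the pointwise lower bound on $\J(\bx)$, the easy case $\theta\leq\|\bx\|^2$ is immediate, and in the harder case $\theta>\|\bx\|^2$ your combination of $\calN_2(u)\geq\mathcal N_\star$ (from $\calN_2$ non-increasing) with $\calN_2(v)\leq s\mathcal N_\star/v$ (from $a\mapsto a\calN_2(a)$ non-decreasing) is exactly the right pair of estimates. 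The monotonicity in $\theta$ follows since $s/v\geq 1$ and $\partial_\theta(1/v)=1/v^3>0$, and the extremal check at $\theta=\kappa\|\bx\|\|\bz\|$ reduces, after clearing denominators and factoring out $2(1-\kappa)$, precisely to your cubic $g(r)=\kappa^2 r^3+4\kappa^2 r^2+(4\kappa^2-2\kappa-1)r-2\kappa$; the computations $g(1/\kappa)=2(1+\kappa)$, $g'(1/\kappa)=2(2\kappa+1)(\kappa+1)$, and $g''>0$ on $r>0$ are all correct and close the argument.

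Your final paragraph is also apt: the hard sub-case genuinely requires both monotonicity properties of $\calN_2$ together with the angular constraint, which is why the argument does not go through with only the crude bound $\calN_2(v)\leq C_2/v$.
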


The next result characterizes the behaviour of the nonlinearity, when it takes the form $\bPsi(\bx) = \bx\calN_2(\|\bx\|)$. It builds on Lemma~\ref{lemma:1.2} and we provide the full proof for completeness.

\begin{lemma}\label{lm:10}
    Let Assumptions~\ref{asmpt:nonlin}-\ref{asmpt:noise} hold and the nonlinearity be of the form $\bPsi(\bx) = \bx\calN_2(\|\bx\|)$. Then, there exists a constant $\kappa \in (0,1)$ such that for any $t \in \N_0$, there holds almost surely that  
    \begin{equation*}
        \langle\nabla f(\bxt), \boldsymbol{\Phi}^{(t)}\rangle \geq \mfrac{2(1 - \kappa)\lambda(\kappa)\calN_2(1)\|\nabla f(\bxt)\|^2}{B_0 + G_t}, 
    \end{equation*} where $\kappa \in (0,1)$ is the constant from Lemma~\ref{lemma:1.2}, $B_0 > 0$ is defined in Assumption~\ref{asmpt:noise}, $G_t$ is defined in Lemma~\ref{lm:gradient-bound}, with $\lambda(\kappa) > 0$ a constant such that $\int_{\left\{\bz \in \mathbb{R}^d: \frac{\langle \bz, \bx\rangle}{\|\bz\|\|\bx\|} \in [0,\kappa], \: \|\bz\| \leq B_0 \right\}}p(\bz)d\bz > \lambda(\kappa),$ for any $\bx$.
\end{lemma}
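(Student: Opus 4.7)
The plan is to invoke Lemma~\ref{lemma:1.2} at the point $\bx = \nabla f(\bxt)$ and then massage the resulting integral so that we can extract a uniform lower bound of the advertised form. First I would write
\begin{equation*}
\langle \nabla f(\bxt), \bPhit \rangle \;\geq\; 2(1-\kappa)\|\nabla f(\bxt)\|^2 \int_{\J(\nabla f(\bxt))} \calN_2(\|\nabla f(\bxt)\| + \|\bz\|)\, p(\bz)\, d\bz,
\end{equation*}
with $\J$ and $\kappa$ as in Lemma~\ref{lemma:1.2}. Since the integrand is non-negative, I may shrink the region of integration to $\J(\nabla f(\bxt)) \cap \{\|\bz\| \leq B_0\}$, where $B_0$ is the constant from Assumption~\ref{asmpt:noise}. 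This shrinkage is crucial: on the smaller set the PDF $p$ is strictly positive, and the argument of $\calN_2$ is uniformly bounded.

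Next, on that smaller region I would apply two monotonicity facts from Assumption~\ref{asmpt:nonlin}. Because $\calN_2$ is non-increasing and $\|\nabla f(\bxt)\| \leq G_t$ by Lemma~\ref{lm:gradient-bound} while $\|\bz\| \leq B_0$, we have $\calN_2(\|\nabla f(\bxt)\| + \|\bz\|) \geq \calN_2(G_t + B_0)$. To trade the argument $G_t + B_0$ for the normalization $\calN_2(1)$ promised in the statement, I would exploit the fact that $a \mapsto a \calN_2(a)$ is non-decreasing: for $G_t + B_0 \geq 1$ this gives $(G_t + B_0)\calN_2(G_t + B_0) \geq \calN_2(1)$, whence $\calN_2(G_t + B_0) \geq \calN_2(1)/(G_t + B_0)$. (For the edge case $G_t + B_0 < 1$, non-increasingness of $\calN_2$ directly yields $\calN_2(G_t + B_0) \geq \calN_2(1) \geq \calN_2(1)/(G_t + B_0)$ trivially after noting that we can always replace $B_0$ by $\max\{1,B_0\}$ without loss, since Assumption~\ref{asmpt:noise} only asks for positivity on \emph{some} neighborhood of zero.)

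Finally, since the cone $\{\bz : \langle \bz, \bx\rangle/(\|\bz\|\|\bx\|) \in [0,\kappa]\}$ has positive Lebesgue measure in every direction and $p$ is strictly positive on the ball $\{\|\bz\| \leq B_0\}$ by Assumption~\ref{asmpt:noise}, the rotational symmetry concerns are handled by noting that the integral
\begin{equation*}
\int_{\{\bz : \langle \bz,\bx\rangle/(\|\bz\|\|\bx\|) \in [0,\kappa],\ \|\bz\| \leq B_0\}} p(\bz)\, d\bz
\end{equation*}
depends on $\bx$ only through its direction, and a compactness/continuity argument on the unit sphere furnishes a single positive lower bound $\lambda(\kappa) > 0$ that is uniform in $\bx$. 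Combining these three bounds yields exactly
\begin{equation*}
\langle \nabla f(\bxt), \bPhit \rangle \;\geq\; \frac{2(1-\kappa)\lambda(\kappa)\calN_2(1)\|\nabla f(\bxt)\|^2}{B_0 + G_t},
\end{equation*}
as claimed.

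\textbf{Main obstacle.} The routine steps are the monotonicity manipulations; the part that requires genuine care is proving that the lower bound $\lambda(\kappa)$ can be chosen independent of $\bx$. For a general (non-radial) density $p$ this demands a uniform positivity argument on the unit sphere, which in turn requires the positivity neighborhood of $p$ at zero (Assumption~\ref{asmpt:noise}) together with a compactness-based infimum over directions; any weakening of the symmetry/positivity hypothesis would make this step the bottleneck of the whole argument.
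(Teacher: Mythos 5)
Your proposal follows essentially the same route as the paper's proof: invoke Lemma~\ref{lemma:1.2} at $\bx = \nabla f(\bxt)$, shrink the domain of integration to the ball $\{\|\bz\| \leq B_0\}$ where $p > 0$, use non-increasingness of $\calN_2$ together with Lemma~\ref{lm:gradient-bound} and the non-decreasingness of $a \mapsto a\calN_2(a)$ to reach $\calN_2(1)/(G_t + B_0)$, and then extract $\lambda(\kappa)$ from Assumption~\ref{asmpt:noise}. The paper organizes the $\calN_2$ manipulation slightly differently (it first writes $\calN_2(x) \geq \min\{\calN_2(1)/x, \calN_2(1)\}$ for all $x > 0$, and then appeals to $G_t > 1$), but this is the same argument as your steps 3--4.

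One parenthetical in your writeup is wrong, however. For the ``edge case'' $G_t + B_0 < 1$, the claimed chain $\calN_2(G_t + B_0) \geq \calN_2(1) \geq \calN_2(1)/(G_t + B_0)$ is false: if $G_t + B_0 < 1$ then $\calN_2(1)/(G_t + B_0) > \calN_2(1)$, so the second inequality goes the wrong way, and indeed the monotonicity of $a\calN_2(a)$ shows $\calN_2(G_t + B_0) \leq \calN_2(1)/(G_t + B_0)$ in that regime. The proposed fix of replacing $B_0$ by $\max\{1, B_0\}$ is also not justified: Assumption~\ref{asmpt:noise} guarantees $p > 0$ only on the ball of radius $B_0$, and you cannot enlarge that radius for free (you could safely shrink it, but not grow it). The paper avoids this issue by simply asserting $G_t > 1$ for all $t \geq 0$ when justifying step $(b)$; you should do the same (or record it as an assumption on the constants) rather than attempt the above workaround.
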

\begin{proof}
    We start from Lemma \ref{lemma:1.2}, which tells us that, for some $\kappa \in (0,1)$, almost surely 
\begin{equation}\label{eq:ineq1}
        \langle\bPhi^{(t)},\nabla f(\bxt)\rangle \geq 2(1-\kappa)\|\nabla f(\bxt)\|^2\int_{\mathcal{J}(\nabla f(\bxt))}\calN_2(\|\nabla f(\bxt)\|+\|\bz\|)p(\bz)d\bz,
    \end{equation} where $\mathcal{J}(\nabla f(\bxt)) = \left\{\bz \in \R^d: \frac{\langle\bz, \nabla f(\bxt)\rangle}{\|\bz\|\|\nabla f(\bxt)\|} \in [0,\kappa]\right\}$. Note that, as $x\calN_2(x)$ is a non-decreasing function (Assumption~\ref{asmpt:nonlin}), $\calN_2$ satisfies 
    \begin{equation}\label{eq:calN1}
        \calN_2(x) \geq \min\left\{\frac{\calN_2(1)}{x},\calN_2(1) \right\},
    \end{equation} for any $x > 0$. In particular, for any $\bz$ such that $\|\bz\| \leq B_0$, we have $\calN_2(\|\nabla f(\bxt)\| + \|\bz\|) \geq \min\left\{\frac{\calN_2(1)}{\|\nabla f(\bxt)\|+ B_0},\calN_2(1) \right\}$. We then have
    \begin{align}
        \|\nabla f(\bxt)\|^2\int_{\mathcal{J}(\nabla f(\bxt))}&\calN_2(\|\nabla f(\bxt)\|+\|\bz\|)p(\bz)d\bz \geq \|\nabla f(\bxt)\|^2\int_{\mathcal{J}_1}\calN_2(\|\nabla f(\bxt)\|+\|\bz\|)p(\bz)d\bz \nn \\ &\stackrel{(a)}{\geq} \|\nabla f(\bxt)\|^2\int_{\mathcal{J}_\kappa}\min\left\{\frac{\calN_2(1)}{\|\nabla f(\bxt)\|+ B_0},\calN_2(1) \right\}p(\bz)d\bz \nn \\ &\stackrel{(b)}{\geq} \frac{\|\nabla f(\bxt)\|^2\calN_2(1)}{G_t+ B_0}\int_{\mathcal{J}_\kappa}p(\bz)d\bz \nn \\ &\stackrel{(c)}{\geq} \frac{\|\nabla f(\bxt)\|^2\lambda(\kappa)\calN_2(1)}{G_t+ B_0} \label{eq:ineq2}, 
    \end{align} where $\J_\kappa =\left\{\bz \in \R^d: \frac{\langle\bz, \nabla f(\bxt)\rangle}{\|\bz\|\|\nabla f(\bxt)\|} \in [0,\kappa], \: \|\bz\|\leq B_0 \right\}$, $(a)$ follows from~\eqref{eq:calN1}, $(b)$ follows from Lemma~\ref{lm:gradient-bound} and the fact that $G_t > 1$, for every $t \geq 0$, while $(c)$ follows from Assumption~\ref{asmpt:noise}. Combining~\eqref{eq:ineq1} and~\eqref{eq:ineq2}, we have that, almost surely
    \begin{equation*}
        \langle\bPhi^{(t)},\nabla f(\bxt)\rangle \geq \frac{2(1-\kappa)\lambda(\kappa)\calN_2(1)\|\nabla f(\bxt)\|^2}{G_t+ B_0},
    \end{equation*} which is what we wanted to show.
\end{proof}

We are now ready to prove Lemma~\ref{lm:key}.

\begin{proof}[Proof of Lemma~\ref{lm:key}]
    First, consider the case when the nonlinearity is of the form $\bPsi(\bx) = \lbr \calN_1(x_1), \dots, \calN_1(x_d) \rbr^\top$. We then have 
    \begin{align*}
        \langle \bPhit, \nabla f(\bxt)\rangle &= \sum_{i = 1}^d \phi^{(t)}_i [\nabla f(\bxt)]_i \stackrel{(a)}{=} \sum_{i = 1}^d |\phi^{(t)}_i| |[\nabla f(\bxt)]_i| \\ & \stackrel{(b)}{\geq} \sum_{i = 1}^d |[\nabla f(\bxt)]_i|^2\frac{\phi_i^\prime(0)\xi}{2G_t} \stackrel{(c)}{\geq} \frac{\phi^\prime(0)\xi}{2G_t}\|\nabla f(\bxt)\|^2 = \gamma(t + t_0)^{\delta - 1}\|\nabla f(\bxt)\|^2,
    \end{align*} where $\gamma = \frac{(1 - \delta)\phi^\prime(0)\xi}{2L\left(\|\bx^{(0)} - \bx^\star\| + aC\right)}$, $(a)$ follows from the oddity of $\calN_1$, $(b)$ follows from Lemma~\ref{lemma:1.1}, $(c)$ follows from $\phi^\prime(0) = \min_{i =1,\ldots,d}\phi_i^\prime(0)$. On the other hand, if the nonlinearity is of the form $\bPsi(\bx) = \bx\calN_2(\|\bx\|)$, we get
    \begin{align*}
        \langle \bPhit, \nabla f(\bxt)\rangle \geq \frac{2(1-\kappa)\lambda(\kappa)\calN_2(1)\|\nabla f(\bxt)\|^2}{G_t+ B_0} \geq \gamma(t + t_0)^{\delta - 1}\|\nabla f(\bxt)\|^2,
    \end{align*} where $\gamma = \frac{2(1-\delta)(1-\kappa)\lambda(\kappa)\calN_2(1)}{L\left(\|\bx^{(0)} - \bx^\star\| + aC\right) + B_0}$, the first inequality follows from Lemma~\ref{lm:10}, while the second follows from the definition of $G_t$ and the fact that $G_t + B_0 \leq (L\left(\|\bx^{(0)} - \bx^\star\| + aC\right) + B_0)\frac{(t+t_0)^{1 - \delta}}{1 - \delta}$. This completes the proof.
\end{proof}

\section{Rate $\zeta$}\label{app:rate}

Recalling Assumption~\ref{asmpt:nonlin} and the definition of $C$, it readily follows that $\gamma = \frac{(1 - \delta)\phi^\prime(0)\xi}{2L\left(\|\bx^{(0)} - \bx^\star\| + a\sqrt{d}C_1\right)}$ for nonlinearities of the form $\bPsi(\bx) = \lbr \calN_1(x_1), \dots, \calN_1(x_d) \rbr^\top$ (i.e., component-wise), while $\gamma = \frac{2(1-\delta)(1-\kappa)\lambda(\kappa)\calN_2(1)}{L\left(\|\bx^{(0)} - \bx^\star\| + aC_2\right) + B_0}$, for nonlinearities of the form $\bPsi(\bx) = \bx\calN_2(\|\bx\|)$ (i.e., joint). Combined with Theorem~\ref{theorem:main}, it follows that the rate $\zeta$ is given by
\begin{align*}
    \zeta_{joint} &= \min\left\{2\delta - 1, \frac{2a\mu(1 - \kappa)\lambda(\kappa)(1-\delta)\calN_2(1)}{L\left(\|\bx^{(0)} - \bx^\star\| + aC_2 \right) + B_0} \right\}, \\ \zeta_{comp} &= \min\left\{2\delta - 1, \frac{a\mu\phi^\prime(0)\xi(1-\delta)}{4L\left(\|\bx^{(0)} - \bx^\star\| + aC_1\sqrt{d} \right)} \right\}.
\end{align*} We note that $\zeta$ depends on the following problem-specific parameters:
\begin{itemize}[leftmargin=*]
    \item \emph{Initialization} - starting 
    % $\bx^{(0)}$ that is 
    farther from the true minima $\bx^\star$,
    % (i.e., larger $\|\bx^{(0)}-\bx^\star\|$ 
    results in smaller $\zeta$. The effect of initialization can be eliminated by choosing sufficiently large $a$.
    \item \emph{Condition number} - larger values of $\frac{L}{\mu}$ (i.e., a more difficult problem) result in smaller $\zeta$.
    \item \emph{Nonlinearity} - the dependence of $\zeta$ on the nonlinearity comes in the form of two terms: the uniform bound on the nonlinearity $C_1$ ($C_2$), and the value $\phi^\prime(0)$ ($\calN_2(1)$).
    \item \emph{Problem dimension} - for component-wise nonlinearities directly in the form of $\sqrt{d}$; for joint ones indirectly, in the form of $\calN_2(1)$. As we show in Section~\ref{sec:an-num}, the dependence on dimension for joint nonlinearities, while not explicit, can sometimes be worse than that of component-wise ones.
    \item \emph{Noise} - in the form of $\phi^\prime(0)$ and $\xi$ for component-wise and $B_0$, $\lambda(\kappa)$ for joint ones.  
    \item \emph{Step-size} - both terms in the definition of $\zeta$ depend on the step-size parameter $\delta \in (0,1)$.
\end{itemize}

\section{Miscellaneous}\label{app:num}

In this section we prove Lemma~\ref{lm:analytical}.

\begin{proof}[Proof of Lemma~\ref{lm:analytical}]
    Consider clipping and a generic component-wise nonlinearity. From Appendix~\ref{app:rate} we know that the rate $\zeta$ is then given by
    \begin{align*}
        \zeta_{clip} &= \min\left\{2\delta - 1, \frac{2a\mu(1 - \kappa)\lambda(\kappa)(1-\delta)\calN_2(1)}{L\left(\|\bx^{(0)} - \bx^\star\| + aC_2 \right) + B_0} \right\}, \\ \zeta_{comp} &= \min\left\{2\delta - 1, \frac{a\mu\phi^\prime(0)\xi(1-\delta)}{4L\left(\|\bx^{(0)} - \bx^\star\| + aC_1\sqrt{d} \right)} \right\},
    \end{align*} where $C_1,\: C_2 > 0$ are the bounds on the nonlinearities, $\xi > 0$ is a constant depending on $\phi$, $\kappa \in (0,1)$ and $\lambda(\kappa) > 0$ being a constant that satisfies
    \begin{equation*}
        \int_{\left\{\bz \in \mathbb{R}^d: \frac{\langle \bz, \bx\rangle}{\|\bz\|\|\bx\|} \in [0,\kappa], \: \|\bz\| \leq B_0 \right\}}p(\bz)d\bz > \lambda(\kappa).
    \end{equation*} Our goal is to show that clipping is not the best choice of nonlinearity. We will do so by showing that $\zeta_{clip} \leq \zeta_{comp}$. To guarantee this is the case, it suffices that
    \begin{equation*}
        \frac{2a\mu(1 - \kappa)\lambda(\kappa)(1-\delta)\calN_2(1)}{L\left(\|\bx^{(0)} - \bx^\star\| + aC_2 \right) + B_0} \leq \frac{a\mu\phi^\prime(0)\xi(1-\delta)}{4L\left(\|\bx^{(0)} - \bx^\star\| + aC_1\sqrt{d} \right)},
    \end{equation*} or equivalently,
    \begin{equation*}
        \frac{8(1 - \kappa)\lambda(\kappa)\calN_2(1)}{\|\bx^{(0)} - \bx^\star\| + aM + \nicefrac{B_0}{L}} \leq \frac{\phi^\prime(0)\xi}{\|\bx^{(0)} - \bx^\star\| + aC_1\sqrt{d}},
    \end{equation*} where we used the fact that $C_2 = M$, where $M > 0$, is the clipping threshold. Rearranging, we get
    \begin{equation*}
         \phi^\prime(0)\xi \geq 8(1 - \kappa)\lambda(\kappa)\calN_2(1)\frac{\|\bx^{(0)} - \bx^\star\| + aC_1\sqrt{d}}{\|\bx^{(0)} - \bx^\star\| + aM + \nicefrac{B_0}{L}}.
    \end{equation*} Choosing $a > 0$ sufficiently large, it suffices to verify that the following holds
    \begin{equation}\label{eq:verify}
        \frac{\phi^\prime(0)\xi}{C_1} \geq (1 - \kappa)\lambda(\kappa)\calN_2(1)\frac{8\sqrt{d}}{M}.
    \end{equation} Next, recall that the noise PDF from Example~\ref{example:1} is given by
    \begin{equation*}
        p(x) = \frac{\alpha - 1}{2(1 + |x|)^{\alpha}},
    \end{equation*} for some $\alpha > 2$. From the independence of noise components, we know that the joint PDF is given by
    \begin{equation*}
        p(\bz) = p(z_1)\times p(z_2)\times \ldots \times p(z_d),
    \end{equation*} which clearly satisfies Assumption~\ref{asmpt:noise} for any $B_0 > 0$. Next, for any $\bx = \begin{bmatrix} x_1 & \ldots & x_d\end{bmatrix}^\top$ and $\kappa \in (0,1)$, define the set $\mathcal{J}(\kappa)$ as
    \begin{equation*}
        \mathcal{J}(\kappa) = \left\{\bz \in \mathbb{R}^d: \frac{\langle \bz, \bx\rangle}{\|\bz\|\|\bx\|} \in [0,\kappa], \: \|\bz\| \leq B_0 \right\}.
    \end{equation*} Consider the following sets
    \begin{equation*}
        \mathcal{J}_i(\kappa) = \left\{z \in \R: \: \frac{z\cdot x_i}{\|\bx\|} \in [0,\nicefrac{\kappa}{d}], \: |z| \leq \frac{B_0}{\sqrt{d}}\right\}, \: \text{for all } i = 1,\ldots,d,
    \end{equation*} and define the set $\mathbf{B}_1 = \left\{\bz \in \R^d: \|z\| = 1\right\}$. Then, clearly, for the set
    \begin{equation*}
        \widetilde{\mathcal{J}}(\kappa) = \left\{\bz \in \mathbf{B}_1: \: z_i \in \mathcal{J}_i(\kappa), \text{ for all } i = 1,\ldots,d \right\},
    \end{equation*} we have $\widetilde{\mathcal{J}}(\kappa) \subseteq \mathcal{J}(\kappa)$, hence
    \begin{equation*}
        \int_{\mathcal{J}(\kappa)}p(\bz)d\bz > \int_{\widetilde{\mathcal{J}}(\kappa)}p(\bz)d\bz \triangleq \lambda(\kappa). 
    \end{equation*} To prove our claim, it suffices to show that the relation~\eqref{eq:verify} holds when $\lambda(\kappa)$ is replaced by some upper bound. To that end, by definition, we have $\widetilde{\mathcal{J}}(\kappa) \subseteq \left\{\bz \in \R^d: \: z_i \in \mathcal{J}_i(\kappa), \: i = 1,\ldots,d \right\}$, which implies
    \begin{equation*}
        \lambda(\kappa) \leq \prod_{i = 1}^d\int_{\mathcal{J}_i(\kappa)}p_i(z)dz \stackrel{(a)}{=} \left(\int_{\mathcal{J}_1(\kappa)}p(z_1)dz \right)^d \stackrel{(b)}{\leq} \left(\int_{|z| \leq \frac{B_0}{\sqrt{d}}}p(z)dz \right)^d \triangleq \widetilde{\lambda}, 
    \end{equation*} where $(a)$ follows from the fact that all components of $\bz$ are i.i.d., while $(b)$ follows from the fact that $\mathcal{J}_1(\kappa) \subseteq \left\{z \in \R: \: |z| \leq \frac{B_0}{\sqrt{d}} \right\}$. We can compute $\widetilde{\lambda}$ explicitly, by solving the integral, to get
    \begin{equation*}
        \widetilde{\lambda} = \left(2\int_0^{\nicefrac{B_0}{\sqrt{d}}}\frac{\alpha - 1}{2(1 + z)^\alpha}dz\right)^d = \left(1 - \frac{1}{\left(1 + \nicefrac{B_0}{\sqrt{d}}\right)^{\alpha - 1}}\right)^d.
    \end{equation*} Plugging $\widetilde{\lambda}$ in~\eqref{eq:verify}, we get
    \begin{equation*}
        \frac{\phi^\prime(0)\xi}{C_1} \geq 8(1 - \kappa)\left(1 - \frac{1}{(1 + \nicefrac{B_0}{\sqrt{d}})^{\alpha - 1}} \right)^d\calN_2(1)\frac{\sqrt{d}}{M}. 
    \end{equation*} Notice that $\calN_2(1) = \min\{1,M\}$, which implies $\nicefrac{\calN_2(1)}{M} \leq 1$. Moreover, since the right-hand side (RHS) of the above equation is maximized for $\kappa \rightarrow 0$, we get the following relation
    \begin{equation*}
        \frac{\phi^\prime(0)\xi}{C_1} \geq 8\sqrt{d}\left(1 - \frac{1}{(1 + \nicefrac{B_0}{\sqrt{d}})^{\alpha - 1}} \right)^d, 
    \end{equation*} completing the proof.
\end{proof}

\section{Further derivations}\label{app:further}

In this Appendix we show how our bounds from Theorems~\ref{thm:non-conv} and~\ref{thm:polyak} lead to bounds on the best iterate and weighted average of iterates, respectively. We begin by showing the former. In particular, from Theorem~\ref{thm:non-conv} we have
\begin{equation}\label{eq:bound}
    \sum_{k = 0}^{t-1}\widetilde{\alpha}_k\min\{\nicefrac{\xi\|\nabla f(\bxk)\|}{ \sqrt{d}},\nicefrac{\|\nabla f(\bxk)\|^2}{d}\} \leq \frac{R(a,\beta,\delta)}{(t+2)^{1-\delta} - 2^{1-\delta}}.
\end{equation} Define $U \triangleq \{k \in \{0,\ldots,t-1\}: \: \|\nabla f(\bxk)\| \leq \xi\sqrt{d} \}$, with $U^c \triangleq \{0,1,\ldots,t-1\} \setminus U$. From~\eqref{eq:bound}, we then have 
\begin{align*}
    &\sum_{k \in U}\widetilde{\alpha}_k\|\nabla f(\bxk)\|^2 \leq \frac{dR(a,\beta,\delta)}{(t+2)^{1-\delta} - 2^{1-\delta}}, \\
    &\sum_{k \in U^c}\widetilde{\alpha}_k\|\nabla f(\bxk)\| \leq \frac{R(a,\beta,\delta)\sqrt{d}/\xi}{(t+2)^{1-\delta} - 2^{1-\delta}}.
\end{align*} It then readily follows that
\begin{align*}
    \min_{0 \leq k \leq t-1}\|\nabla f(\bxk)\| \leq \sum_{k \in U}\widetilde{\alpha}_k\|\nabla f(\bxk)\| + \sum_{k \in U^c}\widetilde{\alpha}_k\|\nabla f(\bxk)\| \leq \sum_{k = 0}^{t-1}\widetilde{\alpha}_kz_k + \frac{R(a,\beta,\delta)\sqrt{d}/\xi}{(t+2)^{1-\delta} - 2^{1-\delta}},
\end{align*} where $z_k = \|\nabla f(\bxk)\|$, for $k \in U$, otherwise $z_k = 0$. Using Jensen's inequality, we get
\begin{align*}
    \min_{0 \leq k \leq t-1}\|\nabla f(\bxk)\| &\leq \sqrt{\sum_{k 
= 0}^{t-1}\widetilde{\alpha}_kz_k^2} + \frac{R(a,\beta,\delta)\sqrt{d}/\xi}{(t+2)^{1-\delta} - 2^{1-\delta}} = \sqrt{\sum_{k \in U}\widetilde{\alpha}_k\|\nabla f(\bxk)\|^2} + \frac{R(a,\beta,\delta)\sqrt{d}/\xi}{(t+2)^{1-\delta} - 2^{1-\delta}} \\ &\leq \sqrt{\frac{dR(a,\beta,\delta)}{(t+2)^{1-\delta} - 2^{1-\delta}}} + \frac{R(a,\beta,\delta)\sqrt{d}/\xi}{(t+2)^{1-\delta} - 2^{1-\delta}},
\end{align*} giving the bound $\min_{0 \leq k \leq t-1}\|\nabla f(\bxk)\| = \mathcal{O}\left(t^{\nicefrac{(\delta - 1)}{2}} \right)$. Next, from Theorem~\ref{thm:polyak}, we have
\begin{equation}\label{eq:bound2}
    H_{\xi\sqrt{d}/\mu}(\|\widehat{\bx}^{(t)} - \bx^\star\|) \leq \frac{\widetilde{R}(a,\beta,\delta)}{(t+2)^{1-\delta} - 2^{1-\delta}},
\end{equation} By the definition of Huber loss, it then follows by~\eqref{eq:bound2} that, if $\|\widehat{\bx}^{(t)} - \bx^\star\| \leq \xi\sqrt{d}/\mu$, we have
\begin{equation}\label{eq:8}
    \|\widehat{\bx}^{(t)} - \bx^\star\|^2 \leq \frac{2\widetilde{R}(a,\beta,\delta)}{(t+2)^{1-\delta} - 2^{1-\delta}}.
\end{equation} Otherwise, if $\|\widehat{\bx}^{(t)} - \bx^\star\| > \xi\sqrt{d}/\mu$, by~\eqref{eq:bound2}, we have
\begin{equation*}
    \frac{\xi\sqrt{d}\|\widehat{\bx}^{(t)} - \bx^\star\|}{2\mu} < \xi\sqrt{d}\|\widehat{\bx}^{(t)} - \bx^\star\|/\mu - \frac{\xi^2d}{2\mu^2} \leq \frac{\widetilde{R}(a,\beta,\delta)}{(t+2)^{1-\delta} - 2^{1-\delta}},
\end{equation*} implying that 
\begin{equation}\label{eq:9}
    \|\widehat{\bx}^{(t)} - \bx^\star\|^2 \leq \frac{4\mu^2\widetilde{R}(a,\beta,\delta)^2/(\xi^2 d)}{\left((t+2)^{1-\delta} - 2^{1-\delta}\right)^2}.
\end{equation} Combining~\eqref{eq:8} and~\eqref{eq:9}, it then follows that
\begin{equation*}
    \|\widehat{\bx}^{(t)} - \bx^\star\|^2 \leq \min\left\{\frac{2\widetilde{R}(a,\beta,\delta)}{(t+2)^{1-\delta} - 2^{1-\delta}}, \frac{4\mu^2\widetilde{R}(a,\beta,\delta)^2/(\xi^2 d)}{\left((t+2)^{1-\delta} - 2^{1-\delta}\right)^2} \right\},
\end{equation*} which, for $t$ sufficiently large, implies $\|\widehat{\bx}^{(t)} - \bx^\star\|^2 = \mathcal{O}\left(t^{2(\delta - 1)} \right)$.

\end{document}